\title{Fool SHAP with Stealthily Biased Sampling.}
\author{
  \hspace{-5pt} Gabriel Laberge$^1$,  Ulrich Aïvodji$^2$, Satoshi Hara$^3$, Mario Marchand$^4$, Foutse Khomh$^1$\\
  $^1$Polytechnique Montréal, Québec  
  $^2$École de technologie supérieure, Québec\\ $^3$Osaka University, Japan $^4$Universitié de Laval à Québec\\
  \texttt{\{gabriel.laberge,foutse.khomh\}@polymtl.ca}\\
  \texttt{ulrich.aivodji@etsmtl.ca}\\
  \texttt{satohara@ar.sanken.osaka-u.ac.jp}\\
  \texttt{mario.marchand@ift.ulaval.ca}
}
\begin{document}

\maketitle

\begin{abstract}
SHAP explanations aim at identifying which features contribute the most to the difference in model prediction at a specific input versus 
a background distribution. Recent studies have shown that they can be manipulated by malicious adversaries to produce arbitrary desired 
explanations. However, existing attacks focus solely on altering the black-box model itself. In this paper, we propose a complementary family 
of attacks that leave the model intact and manipulate SHAP explanations using stealthily biased sampling of the data points used to approximate 
expectations w.r.t the background distribution. In the context of fairness audit, we show that our attack can reduce the importance of a sensitive 
feature when explaining the difference in outcomes between groups while remaining undetected. More precisely, experiments performed on real-world 
datasets showed that our attack could yield up to a 90\% relative decrease in amplitude of the sensitive feature attribution. These results highlight 
the manipulability of SHAP explanations and encourage auditors to treat them with skepticism.
\end{abstract}
\section{Introduction}

As Machine Learning (ML) gets more and more ubiquitous in high-stake decision contexts (\eg, healthcare, finance, and justice), concerns about 
its potential to lead to discriminatory models are becoming prominent. 
The use of auditing toolkits~\citep{adebayo2016fairml,saleiro2018aequitas, bellamy2018ai} is getting popular to circumvent the use of unfair models.
However, although auditing toolkits can help model designers in promoting fairness, they can also be manipulated to mislead both the end-users and external auditors. For instance, a recent study of~\citet{fukuchi2020faking} has shown that malicious model designers can produce a benchmark dataset as fake ``evidence'' of the fairness of the model even though the model itself is unfair.

Another approach to assess the fairness of ML systems is to explain their outcome in a \emph{post hoc} manner~\citep{guidotti2018survey}. For instance, \texttt{SHAP}~\citep{lundberg2017unified} has risen in popularity as a means to extract model-agnostic local feature attributions. 
Feature attributions are meant to convey how much the model relies on certain features to make a decision at some specific input.
The use of feature attributions for fairness auditing is desirable for cases where the interest is on the direct impact of the sensitive attributes on the output of the model. One such situation is in the context of causal fairness~\citep{chikahara2021learning}. In some practical cases, the outputs cannot be independent from the sensitive attribute unless we sacrifice much of prediction accuracy.
For example, any decisions based on physical strength are statistically correlated to gender due to biological nature.
The problem in such a situation is not the statistical bias (such as demographic parity), but whether the decision is based on physical strength or gender, 
\ie the attributions of each feature.

The focus of this study is on manipulating the feature attributions so that the dependence on sensitive features is hidden and the audits are misled 
as if the model is fair even if it is not the case. Recently, several studies reported that such a manipulation is possible, \eg by modifying the 
black-box model to be explained~\citep{slack2020fooling,begley2020explainability,dimanov2020you}, by manipulating the computation algorithms of feature attributions~\citep{aivodji2019fairwashing}, and by poisoning the data distribution~\citep{baniecki2021fooling,baniecki2022manipulating}. With these findings in mind, the current possible advice to the auditors is not to rely solely on the reported feature attributions for fairness auditing. A question then arises about what ``evidence'' we can expect in addition to the feature attributions, and whether they can be valid ``evidence'' of fairness.


In this study, we show that we can craft fake ``evidence'' of fairness for \texttt{SHAP} explanations, which provides the first negative answer to the last question. In particular, we show that we can produce not only manipulated feature attributions but also a benchmark dataset as the fake ``evidence'' of fairness. The benchmark dataset ensures the external auditors reproduce the reported feature attributions using the existing \texttt{SHAP} library.
In our study, we leverage the idea of stealthily biased sampling introduced by \citet{fukuchi2020faking} to cherry-pick which data points to be included in the benchmark. Moreover, the use of stealthily biased sampling allows us to keep the manipulation undetected by making the distribution of the benchmark sufficiently close to the true data distribution.
Figure~\ref{fig:example_attack} illustrates the impact of our attack in an explanation scenario with the Adult Income dataset.

\begin{wrapfigure}[17]{r}{0.4\textwidth}
    \centering
    \includegraphics[width=0.4\textwidth]
    {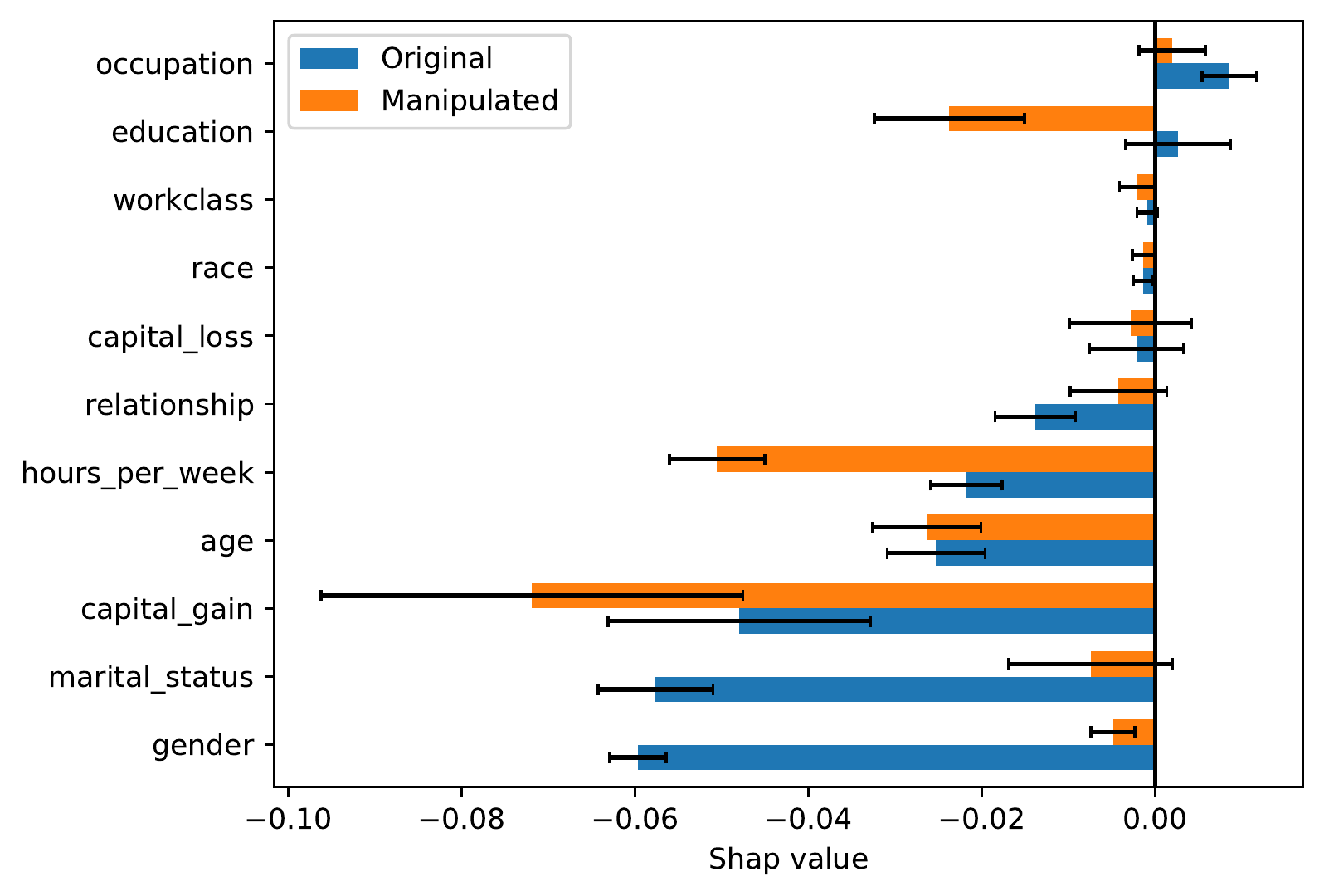}
    \caption{Example of our attack on the Adult Income dataset. After the attack, the feature \texttt{gender} 
    moved from the most negative attribution to the $6^{th}$, hence hiding some of the model bias.}
    \label{fig:example_attack}
\end{wrapfigure}
Our contributions can be summarized as follows:
\begin{itemize}[topsep=3pt,parsep=0pt,partopsep=0pt,itemsep=3pt,leftmargin=12pt]
    \item Theoretically, we formalize a notion of foreground distribution that can be used to extend Local Shapley Values (LSV) to Global Shapley Values (GSV), which can be used to decompose fairness metrics among the features (Section \ref{sec:shap_values:GSV}). Moreover, we formalize the task of manipulating the GSV as a Minimum Cost Flow (MCF) problem (Section \ref{sec:fool}).
    \item Experimentally (Section \ref{sec:experiments}), we illustrate the impact of the proposed manipulation attack on a synthetic dataset and four popular datasets, namely Adult Income, COMPAS, Marketing, and Communities. We observed that the proposed attack can reduce the importance of a sensitive feature while keeping the data manipulation undetected by the audit.
\end{itemize}

Our results indicate that \texttt{SHAP} explanations are not robust and can be manipulated when it comes to explaining the difference in outcomes between groups. Even worse, our results confirm we can craft a benchmark dataset so that the manipulated feature attributions are reproducible by external audits. Henceforth, we alert auditors to treat post-hoc explanation methods with skepticism even if it is accompanied by some additional evidence.

\section{Shapley Values}\label{sec:shap_values}

\subsection{Local Shapley Values}\label{sec:shap_values:LSV}
Shapley values are omnipresent in post-hoc explainability because of their 
fundamental mathematical properties \citep{shapley1953value} and their 
implementation in the popular \texttt{SHAP} Python library \citep{lundberg2017unified}. 
\texttt{SHAP} provides local explanations in the form of feature attributions \ie given an 
input of interest $\myvec{x}$, \texttt{SHAP} returns a score $\phi_i\in\R$ for each 
feature $i=1,2,\ldots,d$. These scores are meant to convey how much 
the model $f$ relies on feature $i$ to make its decision $f(\myvec{x})$.
Shapley values have a long background in coalitional game theory, where 
multiple players collaborate toward a common outcome. In the context of
explaining model decisions, the players are the input features and the common 
outcome is the model output $f(\myvec{x})$. 
In coalitional games, players (features) are either present or absent. 
Since one cannot physically remove an input feature once the model has 
already been fitted, \texttt{SHAP} removes features by replacing them 
with a baseline value $\myvec{z}$. This leads to the \emph{Local Shapley Value} (LSV) 
$\phi_i(f, \myvec{x}, \myvec{z})$ which respect the so-called efficiency axiom \citep{lundberg2017unified}
\begin{equation}
    \sum_{i=1}^d\phi_i(f, \myvec{x}, \myvec{z}) = f(\myvec{x})-f(\myvec{z}).
    \label{eq:unit_pred_gap}
\end{equation}
Simply put, the difference between the model prediction at $\myvec{x}$ and
the baseline $\myvec{z}$ is shared among the different features. Additional details
on the computation of LSV are presented in Appendix \ref{sec:LSV_more}.


\subsection{Global Shapley Values}\label{sec:shap_values:GSV}
LSV are local because they explain the prediction at a specific $\myvec{x}$ and rely on a single baseline input $\myvec{z}$. 
Since model auditing requires a more global analysis of model behavior, we must understand the predictions at multiple
inputs $\myvec{x}\!\sim\! \foreground$ sampled from a distribution $\foreground$ called the \textit{foreground}. Moreover,
because the choice of baseline is somewhat ambiguous, the baselines are sampled $\myvec{z}\sim \background$ from a distribution 
$\background$ colloquially referred to as the \textit{background}. Taking inspiration from \citet{begley2020explainability}, 
we can compute \emph{Global Shapley Values} (GSV) by averaging LSV over both foreground and background distributions.

\begin{definition}
\begin{equation}
    \Phi_i(f,\foreground, \background) :=\Efb[ \phi_i(f, \myvec{x}, \myvec{z})\big],\quad i=1,2,\ldots,d.
    \label{eq:GSV}
\end{equation}
\end{definition}
\begin{proposition}
The GSV have the following property
\begin{equation}
    \sum_{i=1}^d \Phi_i(f,\foreground, \background)=\E_{\myvec{x} 
    \sim \foreground}[f(\myvec{x})]-\E_{\myvec{x} \sim \background}[f(\myvec{x})].
    \label{eq:global_decomp}
\end{equation}
\label{prop:global_decomp}
\end{proposition}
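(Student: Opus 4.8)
The plan is to derive the global decomposition directly from the \emph{local} efficiency axiom (\ref{eq:unit_pred_gap}) by interchanging the finite feature-summation with the expectation defining the GSV. Everything should follow from linearity of expectation, so I expect no substantive difficulty: the argument is essentially bookkeeping, with the only care needed in how the joint expectation $\Efb$ factors over the two distributions.

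First I would start from the left-hand side, substitute Definition (\ref{eq:GSV}), and note that since the explanation lives in $d<\infty$ dimensions the sum over features is finite and may be pulled inside the expectation:
\begin{equation}
\sum_{i=1}^d \Phi_i(f,\foreground,\background) = \sum_{i=1}^d \Efb\big[\phi_i(f,\myvec{x},\myvec{z})\big] = \Efb\Big[\sum_{i=1}^d \phi_i(f,\myvec{x},\myvec{z})\Big].
\end{equation}
Next I would invoke the efficiency axiom (\ref{eq:unit_pred_gap}), which collapses the inner summation into $f(\myvec{x})-f(\myvec{z})$ for every fixed pair $(\myvec{x},\myvec{z})$. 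Substituting and applying linearity once more splits the result into two terms, $\Efb[f(\myvec{x})]-\Efb[f(\myvec{z})]$.

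The last step---and the only place that uses anything about $\Efb$ beyond linearity---is to recognise that $\Efb$ is the expectation under the independent product of $\foreground$ and $\background$ (i.e.\ $\myvec{x}\sim\foreground$ and $\myvec{z}\sim\background$ drawn independently). Since $f(\myvec{x})$ does not depend on $\myvec{z}$, marginalising out $\myvec{z}$ leaves $\E_{\myvec{x}\sim\foreground}[f(\myvec{x})]$; symmetrically, $f(\myvec{z})$ does not depend on $\myvec{x}$, so the second term reduces to $\E_{\myvec{z}\sim\background}[f(\myvec{z})]$, and relabelling the dummy variable $\myvec{z}\mapsto\myvec{x}$ gives $\E_{\myvec{x}\sim\background}[f(\myvec{x})]$. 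Combining the two yields exactly (\ref{eq:global_decomp}).

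The main (mild) obstacle is making the hypotheses explicit rather than any genuine technical hurdle. I would assume $f$ is integrable under both $\foreground$ and $\background$ so that every expectation is well defined and each interchange is justified, and I would state clearly that $\Efb$ denotes expectation over the independent product measure, since the final marginalisation step relies precisely on that factorisation.
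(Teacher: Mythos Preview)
Your proposal is correct and follows essentially the same route as the paper: swap the finite sum with the expectation, apply the local efficiency axiom to collapse the inner sum to $f(\myvec{x})-f(\myvec{z})$, split by linearity, marginalise out the irrelevant variable in each term, and rename the dummy. The only difference is that you make the integrability and product-measure hypotheses explicit, whereas the paper leaves them implicit.
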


\subsection{Monte-Carlo Estimates}\label{sec:shap_values:MC}
In practice, computing expectations w.r.t the whole background and foreground distributions may be prohibitive and hence Monte-Carlo 
estimates are used. For instance, when a dataset is used to represent a background distribution, explainers in the \texttt{SHAP} library 
such as the \texttt{ExactExplainer} and \texttt{TreeExplainer} will subsample this dataset
\footnote{\tiny{\url{https://github.com/slundberg/shap/blob/0662f4e9e6be38e658120079904899cccda59ff8/shap/maskers/\_tabular.py\#L54-L55}}} 
by selecting 100 instances uniformly at random when the size of the dataset exceeds 100.
More formally, let
\begin{equation}
\mathcal{C}(S, \myvec{\omega}) := \sum_{\myvec{x}^{(j)}\in S} \omega_j \dirac{j}
\end{equation}
represent a categorical distribution over a finite set of input examples $S$, where $\delta(\cdot)$ is the Dirac probability measure, 
$w_j\geq 0 \,\,\forall j$, and $\sum_j\omega_j = 1$. Estimating expectations with Monte-Carlo amounts to sampling $M$ instances
\begin{equation}
    S_0\sim \foreground^M\qquad
    S_1\sim \background^M,
    \label{eq:honest_sampling}
\end{equation}
and computing the plug-in estimate
\begin{equation}
\begin{aligned}
    \widehat{\bm{\Phi}}(f, S_0, S_1) 
    :=& \bm{\Phi}(f, \,\mathcal{C}(S_0, \bm{1}/M) , 
    \,\mathcal{C}(S_1, \bm{1}/M)) \\
    =& \frac{1}{M^2} \sum_{\instance{i}\in S_0}
    \sum_{\binstance{j}\in S_1}\bm{\phi}(f, \instance{i}, \binstance{j}).
    \label{eq:mc_estimate}
\end{aligned}
\end{equation}
When a set of samples is a singleton (\eg $S_1=\{\binstance{j}\}$), we shall use the convention 
$\widehat{\bm{\Phi}}(f, S_0, \{\binstance{j}\})\equiv \widehat{\bm{\Phi}}(f, S_0, \binstance{j})$ to improve readability.
In Appendix \ref{sec:convergence}, $\widehat{\bm{\Phi}}(f, S_0, S_1)$ is shown to be a consistent and asymptotically normal estimate of 
$\bm{\Phi}(f,\foreground, \background)$ meaning that one can compute approximate confidence intervals around $\widehat{\bm{\Phi}}$ to 
capture $\bm{\Phi}$ with high probability. In practice, the estimates $\widehat{\bm{\Phi}}$ are employed as the model explanation which 
we see as a vulnerability. As discussed in Section \ref{sec:fool}, the Monte-Carlo estimation is the key ingredient that allows us to 
manipulate the GSV in favor of a dishonest entity.

\section{Audit Scenario}\label{sec:scenario}
This section introduces an audit scenario to which the proposed attack of \texttt{SHAP} can apply. This scenario involves two parties: a company 
and an audit. The company has a dataset $D=\{(\instance{i}, y^{(i)})\}_{i=1}^N$ with $\instance{i}\in\R^d$ and $y^{(i)}\in \{0,1\}$ that 
contains $N$ input-target tuples and also has a model $f:\mathcal{X}\rightarrow [0,1]$ that is meant to be deployed
in society. The binary feature with index $s$ (\ie $x_s\in\{0,1\}$) represents a sensitive feature with respect to which the model 
should not explicitly discriminate. Both the data $D$ and the model $f$ are highly private so the company is very careful when providing information 
about them to the audit. Hence, $f$ is a black box from the point of view of the audit. At first, the audit asks the company for the necessary data 
to compute fairness metrics \eg the Demographic Parity \citep{dwork2012fairness}, the Predictive Equality \citep{corbett2017algorithmic}, 
or the Equal Opportunity \citep{hardt2016equality}. Note that our attack would apply as long as the fairness metric is a difference in model expectations 
over subgroups. For simplicity, the audit decides to compute the Demographic Parity
\begin{equation}
\E[f(\myvec{x})|x_s=0] - \E[f(\myvec{x})|x_s=1],
\label{eq:parity}
\end{equation}
and therefore demands access to the model outputs for all inputs with different values of the sensitive feature :
$f(D_0)$ and $f(D_1)$, where $D_0~=\{\instance{i} :  x^{(i)}_s=0\}$ and $D_1=\{\instance{i}: x^{(i)}_s=1\}$ are subsets of the input data of sizes 
$N_0$ and $N_1$ respectively. Doing so does not force the company to share values of features other than $x_s$ nor does it requires direct 
access to the inner workings of the proprietary model. Hence, this demand respects privacy requirements and the company
will accept to share the model outputs across all instances, see Figure \ref{fig:data_share}.
At this point, the audit confirms that the model is indeed biased in favor of $x_s=1$ and puts
in question the ability of the company to deploy such a model.
Now, the company argues that, although the model exhibits a disparity in outcomes, it does not mean that the model explicitly uses the feature $x_s$ to make its decision. If such is the case, then the disparity could be explained by other features statistically associated with $x_s$. Some of these other features may be acceptable grounds for decisions.
To verify such a claim, the audit decides to employ post-hoc techniques to explain the disparity. Since the model is a black-box, the audits shall compute the GSV. The foreground $\foreground$ and background $\background$ are chosen to be the data distributions conditioned on $x_s=0$ and $x_s=1$ respectively 
\begin{equation}
    \foreground := \mathcal{C}(D_0, \bm{1}/N_0) \qquad
    \background := \mathcal{C}(D_1, \bm{1}/N_1).
\end{equation}
According to Equation \ref{eq:global_decomp}, the resulting GSV will sum up to the demographic parity (cf. Equation \ref{eq:parity}). If the 
sensitive feature has a large negative GSV $\Phi_s$, then this would mean that the model is \textbf{explicitly} relying on $x_s$ to make its 
decisions and the company would be forbidden from deploying the model. If the GSV has a small amplitude, however, the
company could still argue in favor of deploying the model in spite of having disparate outcomes. Indeed, the difference in outcomes by the 
model could be attributed to more acceptable features. See Figure \ref{fig:example_bias} for a toy example illustrating this reasoning. 


\begin{figure}
     \centering
     \begin{subfigure}[b]{0.49\textwidth}
     \centering
        \includegraphics[width=0.75\textwidth]{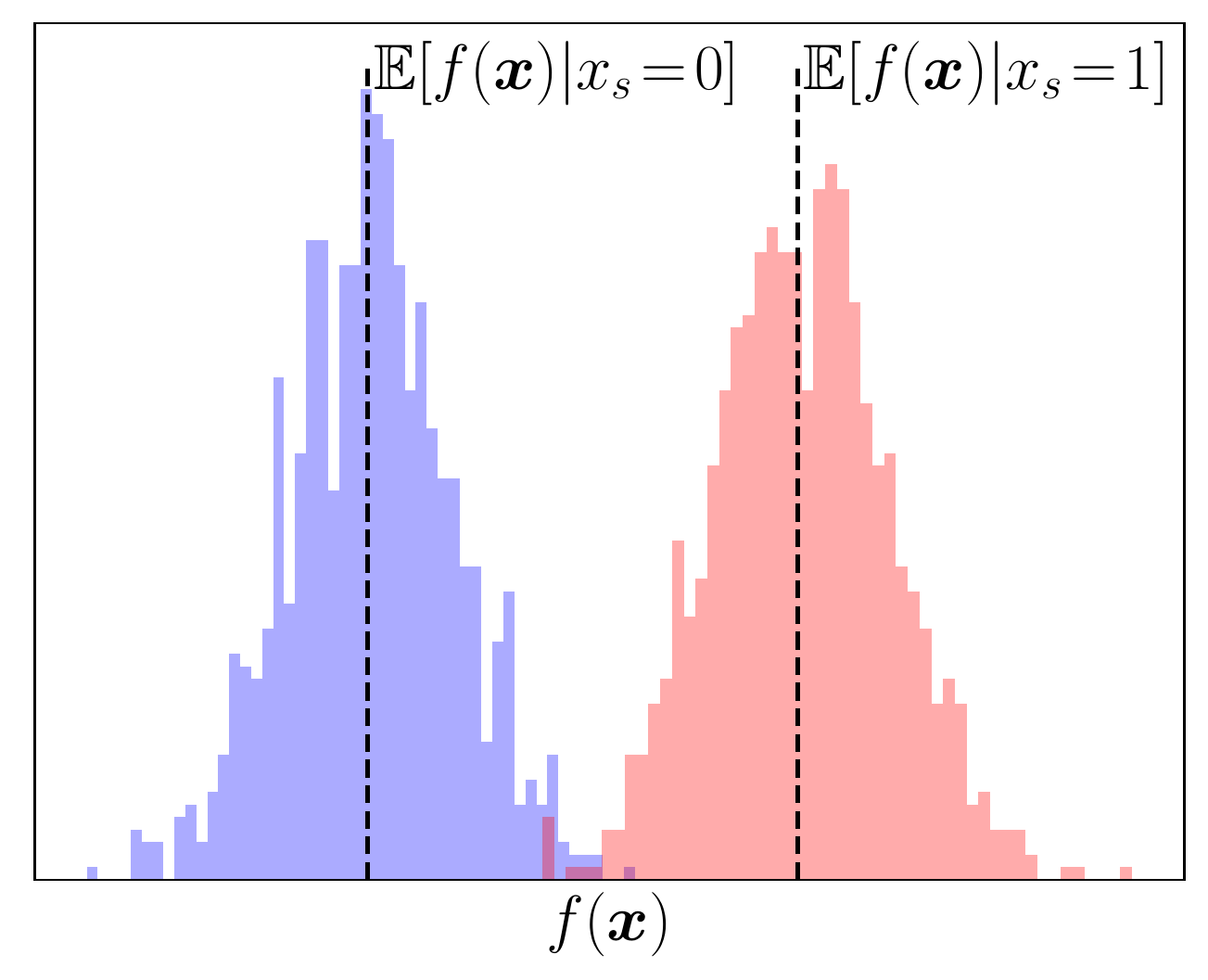}
        \caption{The data initially provided by the company to the audit is $f(D_0)$ and $f(D_1)$ \ie the model predictions 
        for all instances in the private dataset for different values of $x_s$. This dataset can later be used by the audit 
        to assess whether or not the subsets $S'_0, S'_1$ provided by the company where cherry-picked.}
        \label{fig:data_share}
    \end{subfigure}
    \hfill
    \begin{subfigure}[b]{0.49\textwidth}
    \centering
    \resizebox{!}{4.5cm}{
    \begin{tikzpicture}
\tikzset{>=latex}
\draw[<->] (-2.5,2.5) -- (-2.5,-2.5) -- (2.5,-2.5);
\node[scale=1.5] at (2.5,-3) {$x_s$};
\node[scale=1.5] at (-3,2) {$x_{-s}$};

\node[scale=1.5] at (0,2.8) {$f_1$};
\node[scale=1.5] at (2.8,0) {$f_2$};
\draw[dashed] (0,-2.5) -- (0,2.5);
\draw[dashed] (-2.5,0) -- (2.5,0);

\node[scale=1.5] at (-2,-0.5) {$\foreground$};
\draw[fill=blue]  (-1,-0.5) ellipse (0.1 and 0.1);
\draw[fill=blue]  (-2,-1.5) ellipse (0.1 and 0.1);
\draw[fill=blue]  (-1,-2) ellipse (0.1 and 0.1);
\draw[fill=blue]  (-1.25,-1.25) ellipse (0.1 and 0.1);

\node[scale=1.5] at (0.5,2) {$\background$};
\draw[fill=red]  (1,1) ellipse (0.1 and 0.1);
\draw[fill=red]  (2,1) ellipse (0.1 and 0.1);
\draw[fill=red]  (0.5,1.5) ellipse (0.1 and 0.1);
\draw[fill=red]  (1.5,2) ellipse (0.1 and 0.1);

\end{tikzpicture}
    }
    \caption{Two models $f_1$ and $f_2$ (decision boundaries in dashed lines) with perfect accuracy exhibit a disparity in outcomes 
    w.r.t groups with $x_s< 0$ and $x_s>0$. Here, $\Phi_s(f_1, \mathcal{F}, \mathcal{B})=-1$ while $\Phi_s(f_2, \mathcal{F}, \mathcal{B})=0$.
    Hence, $f_2$ is \textbf{indirectly} unfair toward $x_s$ because of correlations in the data.}
    \label{fig:example_bias}
    \end{subfigure}
    \caption{Illustrations of the audit scenario.
    }
    \vskip -0.1in
\end{figure}

To compute the GSV, the audit demands the two datasets of inputs $D_0$ and $D_1$, as well as the ability to query the black box $f$ at arbitrary points. 
Because of privacy concerns on sharing values of $\myvec{x}$ across the whole dataset, and because GSV must be estimated with Monte-Carlo, both parties 
agree that the company shall only provide subsets $S_0\subset D_0$ and $S_1\subset D_1$ of size $M$ to the audit so they can compute a Monte-Carlo estimate $\widehat{\bm{\Phi}}(f ,S_0, S_1)$. The company first estimate GSV on their own by choosing $S_0,S_1$ uniformly at random from $\foreground$ and $\background$
(cf. Equation \ref{eq:honest_sampling}) and observe that $\widehat{\Phi}_s$ indeed has a large negative value. They realize they must carefully select which 
data points will be sent, otherwise, the audit may observe the bias toward $x_s=1$ and the model will not be deployed. Moreover, the company understands 
that the audit currently has access to the data $f(D_0)$ and $f(D_1)$ representing the model predictions on the whole dataset (see Figure \ref{fig:data_share}).
Therefore, if the company does not share subsets $S_0,S_1$ that were chosen uniformly at random from $D_0,D_1$, it is possible for the audit to detect 
this fraud by doing a statistical test comparing $f(S_0)$ to $f(D_0)$ and $f(S_1)$ to $f(D_1)$. The company needs a method to select 
\textbf{misleading subsets} $S'_0,S'_1$ whose GSV is manipulated in their favor while remaining undetected by the audit. 
Such a method is the subject of the next section.

\section{Fool SHAP with Stealthily Biased Sampling}\label{sec:fool}

\subsection{Manipulation}

To fool the audit, the company can decide to indeed sub-sample $S'_0$ uniformly at random $S'_0\sim\foreground^M$. Then, given this choice of 
foreground data, they can repeatedly sub-sample $S_1'\sim \background^M$, and choose the set $S_1'$ leading to the smallest 
$|\widehat{\Phi}_s(f, S_0', S_1')|$. We shall call this method ``brute-force''. Its issue is that, by sub-sampling $S_1'$ from $\background$, 
it will take an enormous number of repetitions to reduce the attribution since the GSV $\widehat{\Phi}_s(f, S_0', S_1')$ is concentrated on the population GSV $\Phi_s(f, \foreground, \background)$.

A more clever method is to re-weight the background distribution before sampling from it \ie 
define $\background'_{\bm{\omega}}:=\mathcal{C}(D_1, \myvec{\omega})$ with $\myvec{\omega} \neq \bm{1}/N_1$ and then sub-sample $S'_1\sim \background'^M_{\bm{\omega}}$. To make the model look fairer, the company needs the $\widehat{\Phi}_s$ computed with these cherry-picked 
points to have a small magnitude.
\begin{proposition}
Let $S'_0$ be \textbf{fixed}, and let $\overset{p}{\to}$ represent convergence in probability as the size $M$ of the set 
$S'_1\sim\background'^M_{\bm{\omega}}$ increases, we have
\begin{equation}
    \widehat{\Phi}_s(f, S'_0, S'_1)  \overset{p}{\to}
    \sum_{\binstance{j}\in D_1} \, \omega_j\, \widehat{\Phi}_s(f, S'_0, \binstance{j}).
    \label{eq:linear_weight_function}
\end{equation}
\label{prop:linear_limit}
\end{proposition}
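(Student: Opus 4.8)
The plan is to reduce the claim to a single application of the weak law of large numbers (WLLN) for i.i.d.\ samples drawn from the reweighted background. The crucial structural fact is that the estimator in Equation~\ref{eq:mc_estimate} is an average that factorizes across its two index sets, so that with the foreground set $S'_0$ held \textbf{fixed} it becomes \emph{linear} in the empirical background measure. Concretely, I would first group the double average by its outer (background) index and invoke the singleton convention stated just after Equation~\ref{eq:mc_estimate} to write
\[
  \widehat{\Phi}_s(f, S'_0, S'_1) \;=\; \frac{1}{M}\sum_{\binstance{j}\in S'_1} \widehat{\Phi}_s(f, S'_0, \binstance{j}).
\]
Because $S'_0$ is frozen, the map $g(\binstance{j}) := \widehat{\Phi}_s(f, S'_0, \binstance{j})$ is a \emph{deterministic} function of the sampled background point, and the quantity of interest is simply the sample mean $\frac{1}{M}\sum_{\binstance{j}\in S'_1} g(\binstance{j})$.

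Next I would bring in the sampling model $S'_1\sim\background'^M_{\bm{\omega}}$: the $M$ entries of $S'_1$ are independent draws from the categorical distribution $\background'_{\bm{\omega}}=\mathcal{C}(D_1,\bm{\omega})$ supported on the finite set $D_1$. Hence $g(\binstance{j_1}),\dots,g(\binstance{j_M})$ are i.i.d.\ real random variables whose common expectation is, directly from the definition of the categorical law,
\[
  \E_{\binstance{j}\sim\background'_{\bm{\omega}}}\big[\,g(\binstance{j})\,\big] \;=\; \sum_{\binstance{j}\in D_1}\omega_j\, g(\binstance{j}) \;=\; \sum_{\binstance{j}\in D_1}\omega_j\, \widehat{\Phi}_s(f, S'_0, \binstance{j}),
\]
which is exactly the right-hand side of Equation~\ref{eq:linear_weight_function}. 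Applying the WLLN to this sample mean then gives the asserted convergence in probability as $M\to\infty$.

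The only hypothesis to discharge for the WLLN is that $g$ is integrable, and this is the step I would state carefully even though it is immediate: since $D_1$ is a \emph{finite} set and $f$ maps into $[0,1]$, the local Shapley values $\phi_s$ are bounded (they satisfy the efficiency axiom of Equation~\ref{eq:unit_pred_gap}), so $g$ attains only finitely many bounded values and is trivially integrable (in fact its variance is finite, so even the stronger strong law or Chebyshev's inequality would apply). Thus there is no genuine analytic obstacle here; the content of the proposition is entirely the structural observation that freezing the foreground linearizes the estimator in the background measure, after which the weights $\bm{\omega}$ enter the limit purely through the expectation of a law-of-large-numbers average.
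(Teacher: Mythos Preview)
Your proposal is correct and follows essentially the same route as the paper: rewrite the estimator as $\frac{1}{M}\sum_{\binstance{j}\in S'_1}\widehat{\Phi}_s(f,S'_0,\binstance{j})$, treat this as a sample mean of a fixed function of i.i.d.\ draws from $\background'_{\bm{\omega}}$, apply the weak law of large numbers, and identify the limit as the $\bm{\omega}$-weighted sum over $D_1$. Your explicit check of integrability via boundedness is a slight elaboration the paper leaves implicit, but otherwise the arguments coincide.
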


We note that the coefficients $\widehat{\Phi}_s(f, S'_0, \binstance{j})$ in Equation \ref{eq:linear_weight_function}
are tractable and can be computed and stored by the company. We discuss in more detail how to compute them in Appendix \ref{sec:phis}.
An additional requirement is that the non-uniform distribution $\background'_{\bm{\omega}}$ remains similar to the original $\background$. 
Otherwise, the fraud could be detected by the audit. In this work, the notion of similarity between distributions will be captured by the 
Wasserstein distance in output space. 
\begin{definition}[Wassertein Distance]
Any probability measure $\pi$ over $D_1\times D_1$ is called a coupling measure between $\background$ and $\background'_{\bm{\omega}}$, denoted $\pi \in \Delta(\background, \background'_{\bm{\omega}})$, if $1/N_1=\sum_j\pi_{ij}$ and $\omega_j=\sum_i \pi_{ij}$. The Wassertein distance between $\background$ and $\background'_{\bm{\omega}}$ mapped to the output-space is defined as
\begin{equation}
    \W = \min_{\pi \in \coupling}
    \sum_{i,j} |f(\binstance{i})-f(\binstance{j})|\pi_{ij},
\end{equation}
a.k.a the cost of the optimal transport plan that distributes the mass from one distribution to the other.
\end{definition}
We propose Algorithm \ref{alg:weights} to compute the weights $\bm{\omega}$ by minimizing the magnitude of the GSV while maintaining a small Wasserstein distance. The trade-off between attribution manipulation and proximity to the data is tuned via a hyper-parameter $\lambda>0$.
We show in the Appendix \ref{sec:optim} that the optimization problem at line 5 of Algorithm \ref{alg:weights} can be reformulated as a Minimum 
Cost Flow (MCF) and hence can be solved in polynomial time (more precisely $\widetilde{\mathcal{O}}(N_1^{2.5})$ as in \cite{fukuchi2020faking}).

\begin{algorithm}
\caption{Compute non-uniform weights}
\begin{algorithmic}[1]
\Procedure{compute\_weights}{$D_1,\{\widehat{\Phi}_s(f, S'_0, \binstance{j})\}_j, \lambda$}
    \State $\beta \,\,\,\,:= \text{sign}[\,\sum_{\binstance{j}\in D_1}\widehat{\Phi}_s(f, S'_0, \binstance{j})\,]$
    \State $\background \,\,\,\,:= \mathcal{C}(D_1, \bm{1}/N_1)$
    \Comment{Unmanipulated background}
    \State $\background'_{\bm{\omega}} := \mathcal{C}(D_1, \myvec{\omega})$
    \Comment{Manipulated background as a function of
    $\bm{\omega}$}
    \State $
    \bm{\omega} = \argmin_{\myvec{\omega}}\,\,\, 
    \beta\sum_{\binstance{j}\in D_1} \, \omega_j\, \widehat{\Phi}_s(f, S'_0, \binstance{j}) 
    + \lambda\W$\Comment{Optimization Problem}
\State \Return $\bm{\omega}$; 
\EndProcedure
\end{algorithmic}
\label{alg:weights}
\end{algorithm}

\begin{algorithm}
\caption{Detection with significance $\alpha$}
\begin{algorithmic}[1]
\Procedure{detect\_fraud}{$f(D_0), f(D_1), f(S'_0), f(S'_1), \alpha$, $M$}
\For{$i=0,1$}
    \State $f(S_i)\sim \mathcal{C}(f(D_i), \bm{1}/N_i)^M$
    \Comment{Subsample without cheating.}
    \State $\text{p-value-KS} = \text{KS}(\,f(S_i), f(S'_i)\,)$
    \Comment{KS test comparing $f(S_i)$ and
    $f(S'_i)$}
    \State $\text{p-value-Wald} = \text{Wald}(\,f(S'_i), f(D_i)\,)$
    \Comment{Wald test}
    \If{$\text{p-value-KS}<\alpha/4$ \,\textbf{or}\,$\text{p-value-Wald}<\alpha/4$ } 
    \Comment{Reject the null hypothesis}
        \State \Return 1
    \EndIf
\EndFor
\State \Return 0;
\EndProcedure
\end{algorithmic}
\label{alg:detection}
\end{algorithm}

\subsection{Detection}
We now discuss ways the audit can detect manipulation of the sampling procedure. Recall that the audit has previously been given access 
to $f(D_0),f(D_1)$ representing the model outputs across all instances in the private dataset. The audit will then be given
sub-samples $S'_0,S'_1$ of $D_0,D_1$ on which they can compute the output of the model and compare with $f(D_0),f(D_1)$. 
To assess whether or not the sub-samples provided by the company were sampled uniformly at random, the audit has to conduct statistical tests.
The null hypothesis of these tests will be that $S'_0, S'_1$ were sampled uniformly at random from $D_0, D_1$.
The detection Algorithm \ref{alg:detection} with significance $\alpha$ uses both the Kolmogorov-Smirnov and Wald tests with Bonferonni corrections
(\ie the $\alpha/4$ terms in the Algorithm). The Kolmogorov-Smirnov and Wald tests are discussed in more detail in Appendix \ref{sec:tests}.

\subsection{Whole procedure}
The procedure returning the subsets $S'_0, S'_1$ is presented in Algorithm \ref{alg:methodology}. It conducts a log-space search
between $\lambda_\text{min}$ and $\lambda_\text{max}$ for the $\lambda$ hyper-parameter (line 6) in order to explore the possible attacks.
For each value of $\lambda$, the attacker runs Algorithm \ref{alg:weights} to obtain $\background'_{\bm{\omega}}$ (line 7), then repeatedly samples $S'_1\sim\background'^M_{\bm{\omega}}$ (line 10) and attempts to detect the fraud (line 11). The attacker will choose $\background_{\bm{\omega}}'$ 
that minimizes the magnitude of $\widehat{\Phi}_s$ while having a detection rate below some threshold $\tau$ 
(line 12). An example of search over $\lambda$ on a real-world dataset is presented in Figure \ref{fig:lin_search}. 
\begin{algorithm}
\caption{Fool SHAP}
\begin{algorithmic}[1]
\Procedure{Fool\_SHAP}{$f, D_0,D_1, M, \lambda_\text{min}, \lambda_\text{max},\tau, \alpha$}
\State $S'_0\sim \mathcal{C}(D_0,\bm{1}/N_0)^M$ \Comment{$S'_0$ is sampled without cheating}
\State Compute $\widehat{\Phi}_s(f, S_0', \binstance{j})\quad \forall \binstance{j} \in D_1$ \Comment{cf. Section \ref{sec:phis}}
\State $\background^\star=\mathcal{C}(D_1,\bm{1}/N_1)$ 
\State $\Phi_s^\star=1/N_1\,\sum_{\binstance{j}\in D_1} \widehat{\Phi}_s(f, S_0', \binstance{j})$
\Comment{Initialize the solution}
\For{$\lambda =\lambda_\text{max}, \ldots, \lambda_\text{min}$}
    \State $\bm{\omega}=\,$\Call{compute\_weights}{$D_1, \big\{\widehat{\Phi}_s(f, S'_0, \binstance{j})\big\}_j, \lambda$}
    \State \texttt{Detection} $= 0$
    \For{$\text{rep}=1,\ldots,100$}\Comment{Detect the manipulation}
        \State $S'_1\sim \background'^M_{\bm{\omega}}$
        \State \texttt{Detection} +=  \Call{Detect\_fraud}{$f(D_0), f(D_1), f(S'_0), f(S'_1), \alpha$, $M$}
    \EndFor
    \If{$|\sum_{\binstance{j}\in D_1} \, \omega_j\, \widehat{\Phi}_s(f, S'_0, \binstance{j})|<|\Phi^\star_s|$ \textbf{and} 
        $\texttt{Detection}<100 \tau$}
        \State $\background^\star =\background'_{\bm{\omega}}$
        \State $\Phi^\star_s = \sum_{\binstance{j}\in D_1} \, \omega_j\, \widehat{\Phi}_s(f, S'_0, \binstance{j})$
        \Comment{Update the solution}
    \EndIf
\EndFor
\State $S'_1 \sim \background^{\star M}$
\Comment{Cherry-pick by sampling from the non-uniform background}
\State \Return $S'_0, S'_1$
\EndProcedure
\end{algorithmic}
\label{alg:methodology}
\end{algorithm}
\vspace{-0.25cm}

One limitation of Fool SHAP is that it manipulates a single sensitive feature. In Appendix \ref{sec:add_res:multi_s}, we present a possible extension
of Algorithm \ref{alg:weights} to handle multiple sensitive features and present preliminary results of its effectiveness.
A second limitation is that it only applies to ``interventional'' Shapley values which break feature correlations. This choice was made 
because most methods in the \texttt{SHAP} library\footnote{except the \texttt{TreeExplainer} when no background data is provided} are ``interventional''. 
Future work should port Fool SHAP to ``observational'' Shapley values that use conditional expectations to remove features \citep{frye2020shapley}.

\begin{figure}[h]
     \centering
     \begin{subfigure}[c]{0.45\textwidth}
         \centering
         \includegraphics[width=\textwidth]
         {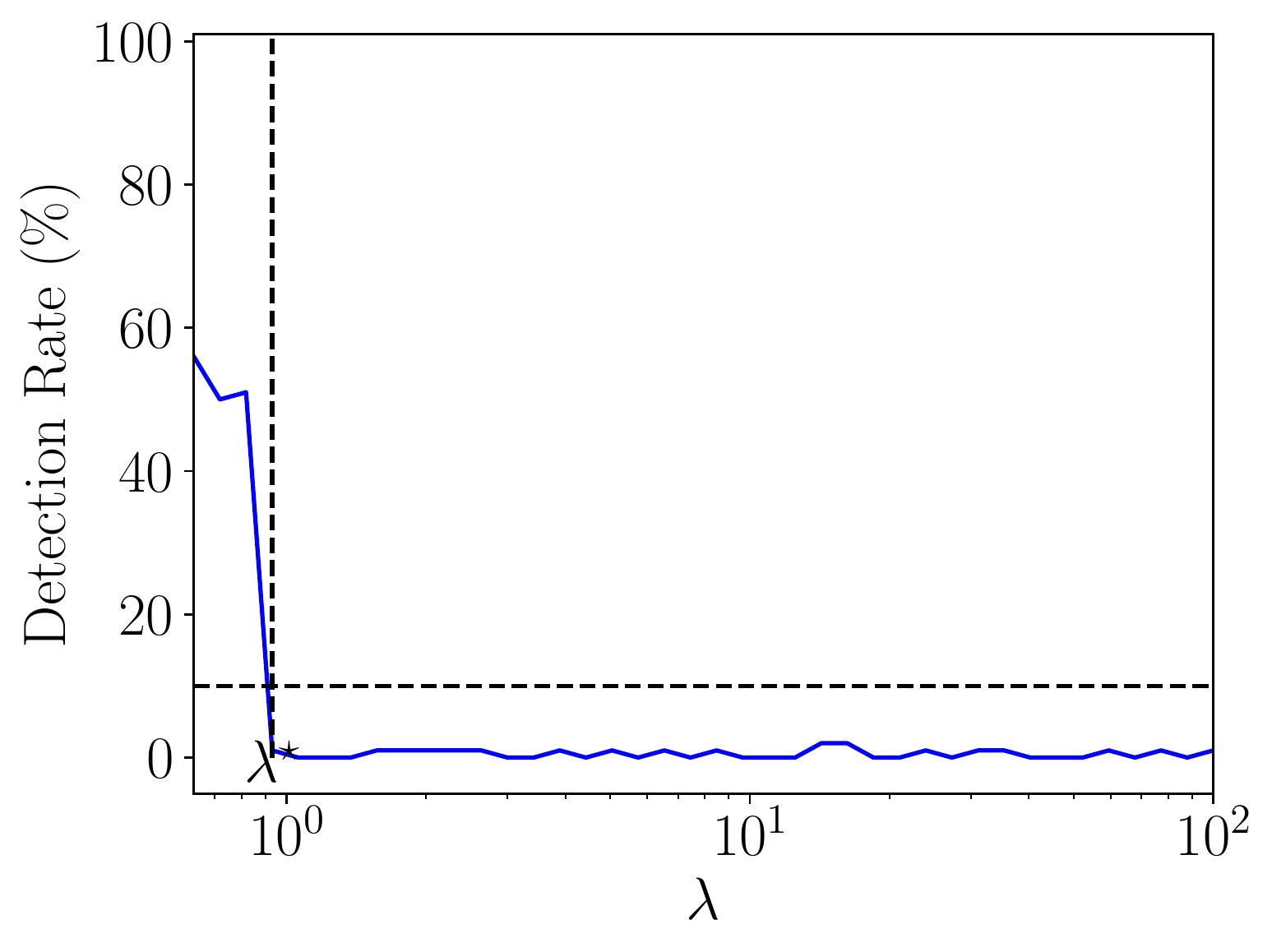}
     \end{subfigure}
    \hfill
    \raisebox{-0.75mm}{
    \begin{subfigure}[c]{0.475\textwidth}
         \centering
         \includegraphics[width=\textwidth]
         {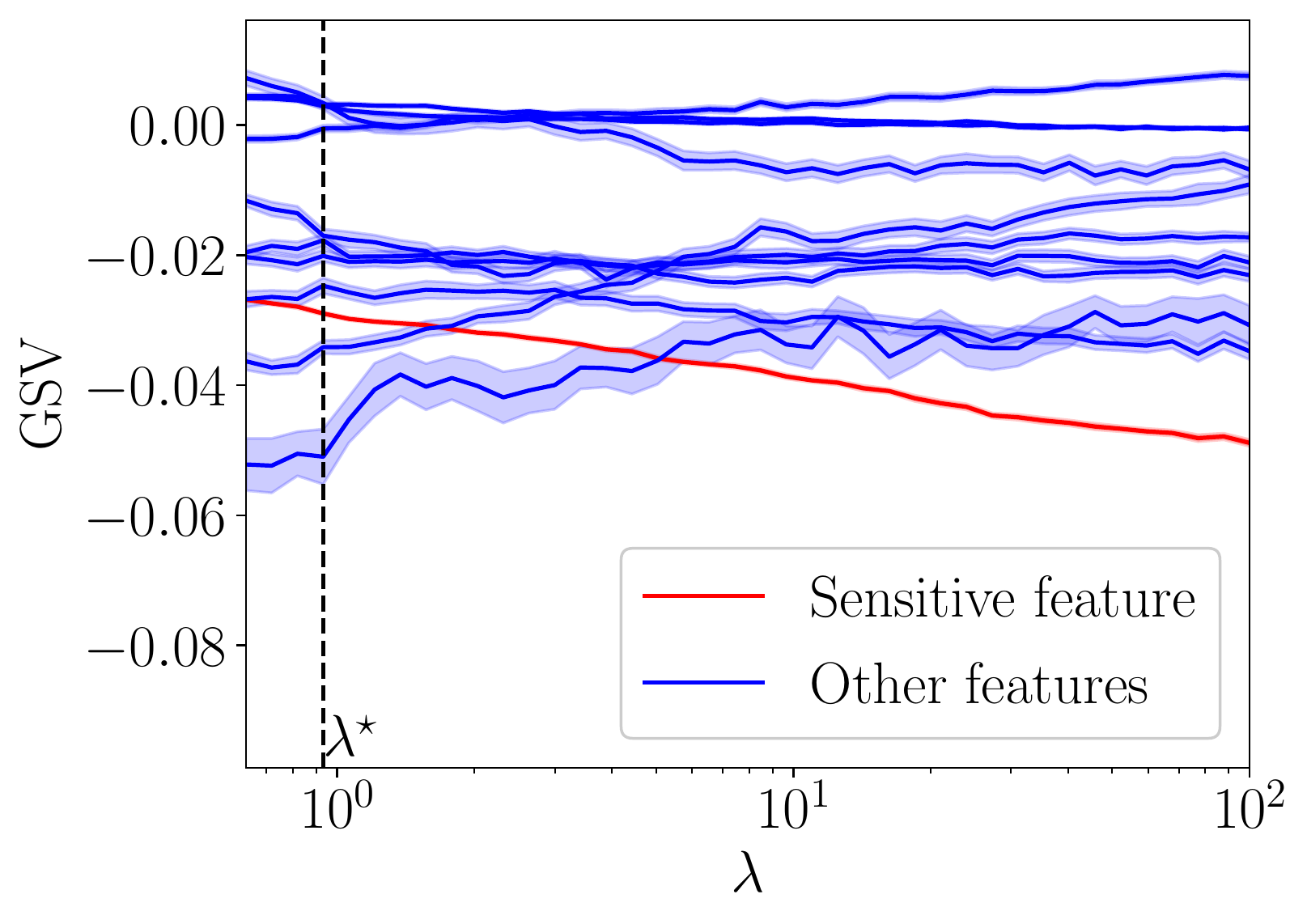}
     \end{subfigure}
    }
    \caption{Example of log-space search over values of $\lambda$ 
    using an XGBoost classifier fitted on Adults.
    (a) The detection rate as a function of the parameter $\lambda$
    of the attack. The attacker uses a detection rate threshold 
    $\tau=10\%$. (b) For each value of $\lambda$, the vertical 
    slice of the 11 curves is the GSV obtained with the 
    resulting $\background'_{\bm{\omega}}$. The goal here is to reduce 
    the amplitude of the sensitive feature (red curve).}
    \label{fig:lin_search}
    \vskip -0.2in
\end{figure}

\subsection{Contributions}

The first technique to fool \texttt{SHAP} with perturbations of the background distribution was a genetic algorithm \citet{baniecki2022manipulating}.
Although promising, the cross-over and mutation operations it employs to perturb data do not take into account feature correlations and can therefore 
generate unrealistic data. Moreover, the objective to minimize does not enforce similarity between the original and manipulated backgrounds. We show in 
Appendix \ref{sec:add_res:genetic} that these limitations lead to systematic fraud detections. Hence, our contributions are two-fold. First, 
by perturbing the background via non-uniform weights over pre-existing instances (\ie $\background'_{\bm{\omega}}:=\mathcal{C}(D_1, \myvec{\omega})\,$) 
rather than a genetic algorithm, we avoid the issue of non-realistic data. Second, by considering the Wasserstein distance, 
we can control the similarity between the original and fake backgrounds.

Since the Stealthity Biased Sampling technique introduced in \citet{fukuchi2020faking} also leverages a non-uniform distribution over data points and the Wasserstein distance, it makes sense to adapt it to fool \texttt{SHAP}. Still, the approach of Fukuchi et al. is different from ours. Indeed, in their work, they minimize the Wasserstein distance while enforcing a hard constraint on the number of instances that land on the different bins for the target and sensitive feature, That way, they can set the Demographic Parity to any given value while staying close to the original data. In our setting of manipulating the model explanation, we leave the Demographic Parity intact and instead manipulate its feature attribution. In terms of the optimization objective, we now minimize a Shapley value with a soft constraint on the Wasserstein distance.

\section{Experiments}\label{sec:experiments}
\subsection{Toy experiment}
The task is predicting which individual will be hired for a job that requires carrying heavy objects. The causal graph for this toy data is presented 
in Figure \ref{fig:toy_example} (left). We observe that sex ($S$) influences height ($H$), and that both these features influence the Muscular Mass ($M$). 
In the end, the hiring decisions ($Y$) are only based on the two attributes relevant to the job: $H$ and $M$. Also, two noise features $N1, N2$ were added. 
More details and justifications for this causal graph are discussed in Appendix \ref{sec:details:toy}.
Since strength and height (two important qualifications for applicants) are correlated with sex, any model $f$ that fits the data will exhibit some 
disparity in hiring rates between sexes. Although, if the model decisions do not rely strongly on feature $S$, the company can argue in 
favor of deployment. GSV are used by the audit to measure the amount by which the model relies on the sex feature, see Figure \ref{fig:toy_example} 
(Middle). By employing Fool SHAP with $M=100$, the company can reduce the GSV of feature $S$ considerably compared to the brute-force and genetic algorithms. 
More importantly, the audit is not able to detect that the provided samples $S'_0, S'_1$ were cherry-picked, see Figure \ref{fig:toy_example} (Right). 
More results are presented in Appendix \ref{sec:add_res:toy_example}.

\begin{figure}
     \centering
     \raisebox{0.25cm}{
     \begin{subfigure}[c]{0.1\textwidth}
         \centering
         \resizebox{1.85cm}{!}{
         \resizebox{2.2cm}{!}{
\begin{tikzpicture}
	\usetikzlibrary{arrows.meta}
    \tikzset{>=latex}
    \begin{scope}[every node/.style={circle,thick,draw, 
                  fill=white!96!black}]
    \node (S) at (9,2) {S};
    \node (H) at (8.25,1) {H};
    \node (SM) at (9.75,1) {M};
	\node (Y) at (9,0) {Y};
	
    \end{scope}

    \begin{scope}[>={Stealth[black]},
    every node/.style={},
    every edge/.style={draw=black,thick}]

    \path [->] (S) edge (H);
    \path [->] (S) edge (SM);
    \path [->] (H) edge (SM);
    \path [->] (H) edge (Y);
    \path [->] (SM) edge (Y);
    \end{scope}

\end{tikzpicture}
}
         }
     \end{subfigure}
     }
     \hfill
     \begin{subfigure}[c]{0.45\textwidth}
         \centering
         \includegraphics[width=\textwidth]
         {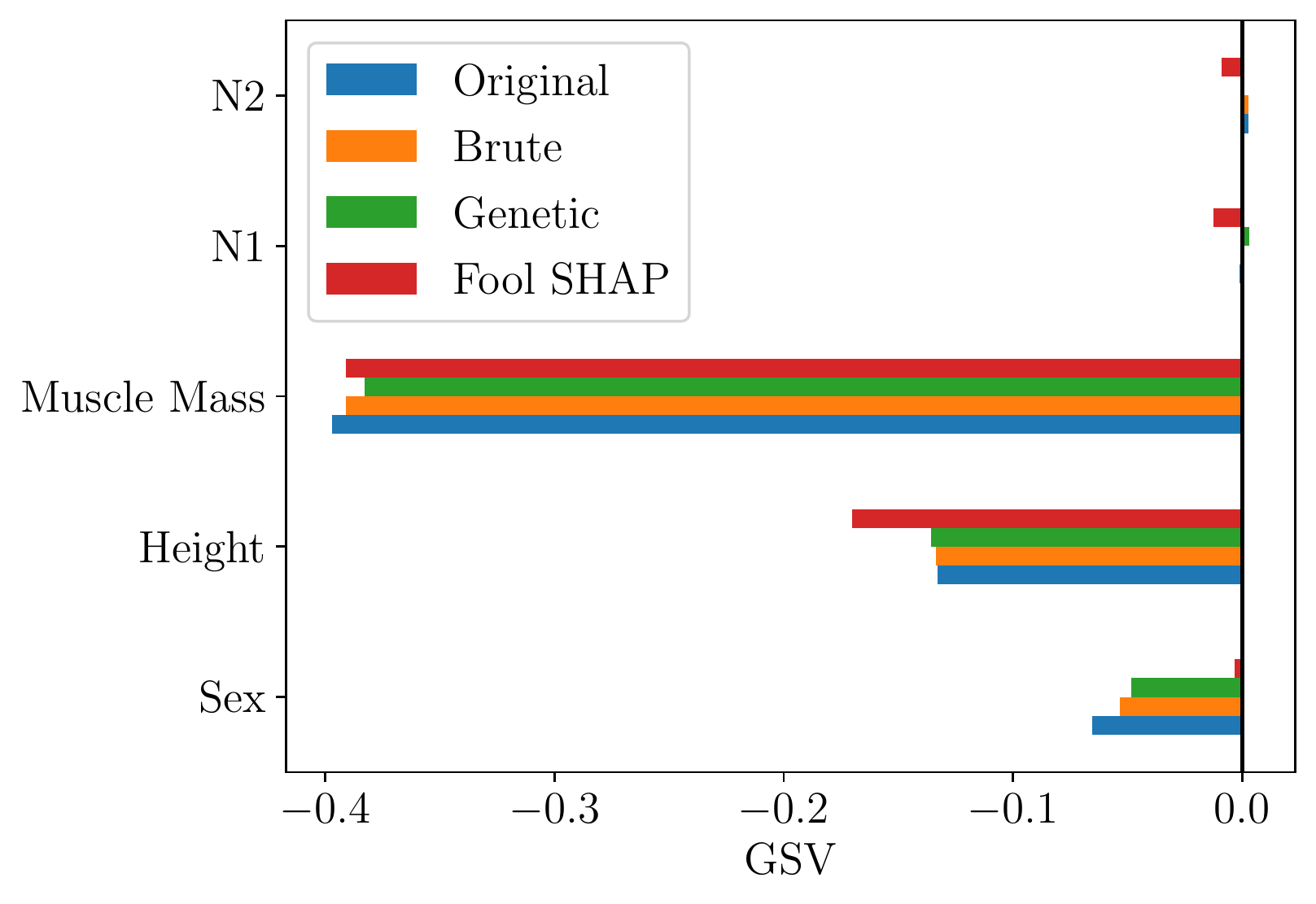}
     \end{subfigure}
    \hfill
    \begin{subfigure}[c]{0.405\textwidth}
         \centering
         \includegraphics[width=\textwidth]
         {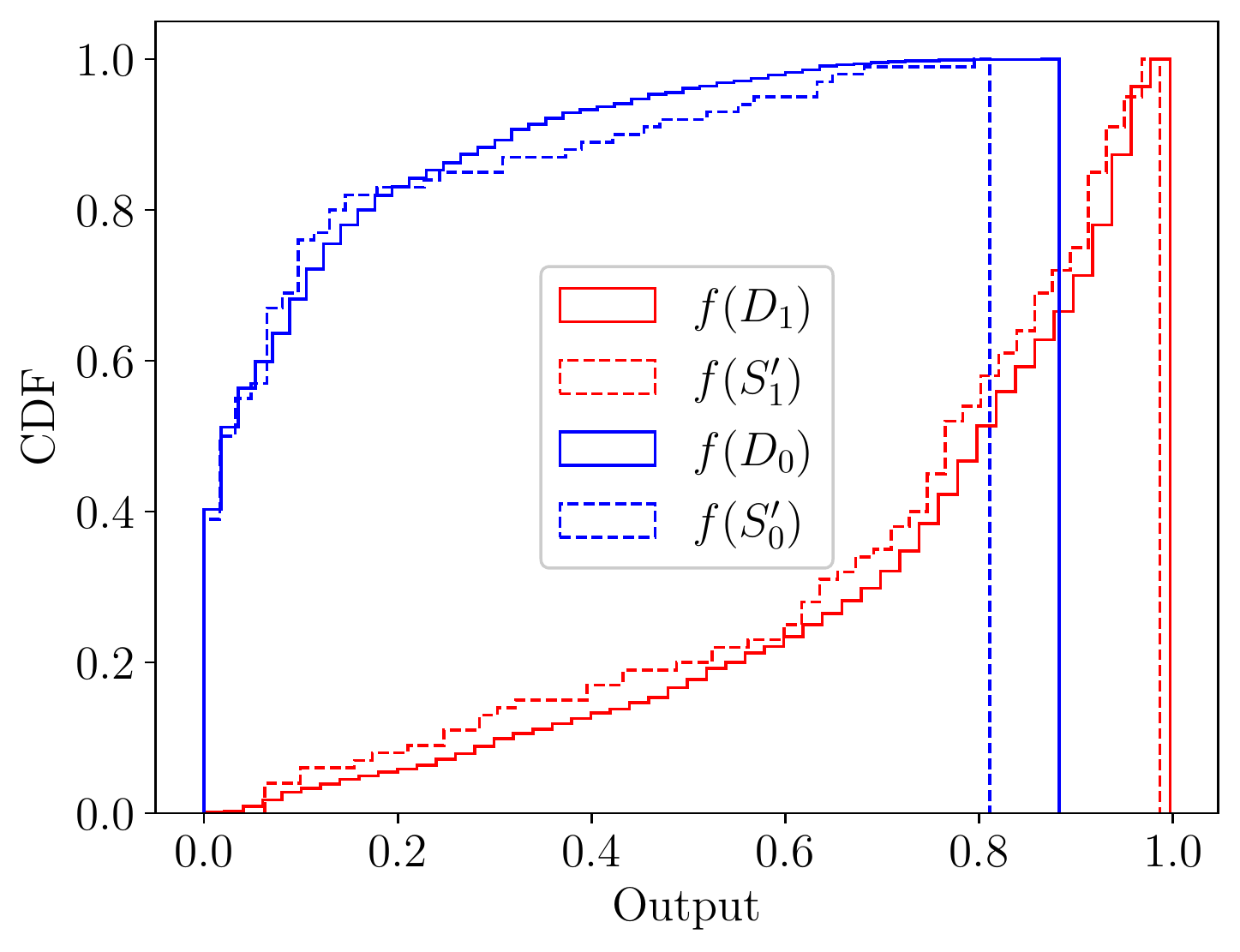}
     \end{subfigure}
    \caption{
    Toy example.
    Left: Causal graph. Middle: GSV for the different attacks with $M=100$. 
    Right: Comparison of the CDF of the Fool SHAP subsets $f(S'_0),f(S'_1)$ and the CDF over the whole data
    $f(D_0),f(D_1)$. Here the audit cannot detect the fraud using their detection algorithm.}
    \label{fig:toy_example}
    \vskip -0.2in
\end{figure}

\subsection{Datasets}
We consider four standard datasets from the FAccT literature, namely COMPAS, Adult-Income, Marketing, and Communities.

\begin{itemize}[leftmargin=*]
    \item \textbf{COMPAS} regroups 6,150 records from criminal offenders in Florida collected from 2013-2014. This binary classification task consists in predicting who will re-offend within
    two years. The sensitive feature $s$ is \texttt{race} with values $x_s=0$ for African-American and $x_s=1$ for Caucasian.
    \item \textbf{Adult Income} contains demographic attributes of
    48,842 individuals from the 1994 U.S. census. It is a binary classification problem with the goal of predicting whether or not a particular person makes more than 50K USD per year. The sensitive feature $s$ in this dataset is \texttt{gender}, which took values $x_s=0$ for female, and $x_s=1$ for male.
    \item \textbf{Marketing} involves information on 41,175 customers of a Portuguese bank and the binary classification task is to predict who will subscribe to a term deposit. The sensitive attribute is \texttt{age} and took values $x_s=0$ for age \texttt{30-60}, and $x_s=1$ for age \texttt{not30-60}
    \item \textbf{Communities \& Crime}
    contains per-capita violent crimes for 1994 different communities in the US. The binary classification task is to predict which
    communities have crimes below the median rate. The sensitive attribute is \texttt{PercentWhite} and took values $x_s=0$ for \texttt{PercentWhite<90\%}, and $x_s=1$ for \texttt{PercentWhite>=90\%}.
\end{itemize}

Three models were considered for the two datasets: Multi-Layered Perceptrons (MLP), Random Forests (RF), and eXtreme Gradient Boosted trees (XGB).
One model of each type was fitted on each dataset for 5 different train/test splits seeds, resulting in 60 models total. Values of the test set accuracy and demographic parity for each model type and dataset are presented in Appendix \ref{sec:details:real}.

\subsection{Detector Calibration}
\begin{wraptable}{r}{0.45\textwidth}
    \vspace{-0.4cm}
    \caption{False Positive Rates (\%) of the detector \ie the frequency at which 
    $S_0,S_1$ are considered cherry-picked when they are not. No rate should be above 
    $5\%$.}
    \label{tab:calibration}
    \centering
    \begin{tabular}{llll}
        \toprule
               & mlp  & rf  & xgb \\
        \midrule
        COMPAS &  4.0 & 4.6 & 4.0   \\
        Adult  &  4.3 & 4.3 & 4.2    \\
        Marketing  &   & 4.9 & 5.0    \\
        Communities  &   & 3.8 & 4.2   \\
        \bottomrule
    \end{tabular}
\end{wraptable}
Detector calibration refers to the assessment that, assuming the null hypothesis to be true, the probability of rejecting it (\ie false positive) 
should be bounded by the significance level $\alpha$. Remember that the null hypothesis of the audit detector is that the sets $S'_0, S'_1$ provided by 
the company are sampled uniformly from $D_0, D_1$. Hence, to test the detector, the audit can sample their own subsets $f(S_0),f(S_1)$ uniformly from 
at random from $f(D_0),f(D_1)$, run the detection algorithm, and count the number of detection over 1000 repeats. Table \ref{tab:calibration} shows the 
false positive rates over the five train-test splits using a significance level $\alpha\!=\!5\%$. We observe that the
false positive rates are indeed bounded by $\alpha$ for all model types and datasets implying that the detector employed by the audit is calibrated.
\begin{wrapfigure}[18]{r}{0.45\textwidth}
    \centering
    \includegraphics[width=\linewidth]{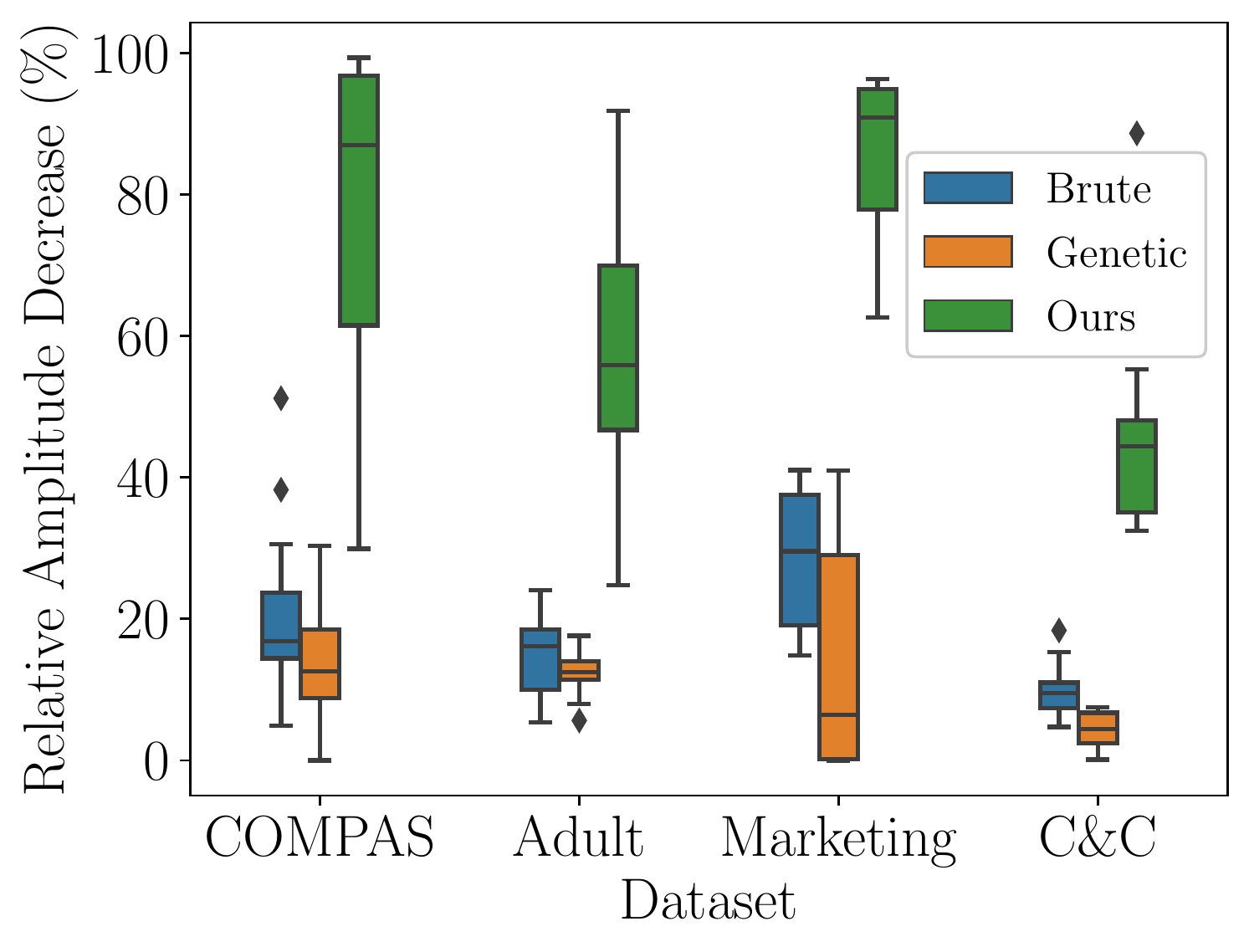}
    \caption{Relative decrease in amplitude of the sensitive feature attribution induced by the various attacks on \texttt{SHAP}.}
    \label{fig:final_results}
\end{wrapfigure}

\subsection{Attack Results and Discussion}
\label{subsec:results}

The first step of the attack (line 3 of Algorithm \ref{alg:methodology}) requires that the company run \texttt{SHAP} on their own and compute the
necessary coefficients to run Algorithm \ref{alg:weights}. For the COMPAS and Adults datasets, the \texttt{ExactExplainer} of \texttt{SHAP} 
was used. Since Marketing and Communities contain more than 15 features, and since the \texttt{ExactExplainer} scales exponentially with the 
number of features, we were restricted to using the \texttt{TreeExplainer} \citep{lundberg2020local} on these datasets. 
The \texttt{TreeExplainer} avoids the exponential cost of Shapley values but is only applicable to tree-based models such as RFs and XGBs. 
Therefore, we could not conduct the attack on MLPs fitted on Marketing and Communities.

The following step is to solve the MCF for various values of $\lambda$ (line 7 of Algorithm \ref{alg:methodology}).
As stated previously, solving the MCF can be done in polynomial time in terms of $N_1$, which was tractable for a small dataset like COMPAS and Communities, 
but not for larger datasets like Adult and Marketing. To solve this issue, as was done in \cite{fukuchi2020faking}, we compute the manipulated weights 
multiple times using 5 bootstrap sub-samples of $D_1$ of size 2000 to obtain a set of weights 
$\myvec{\omega}^{[1]},\myvec{\omega}^{[2]},\ldots,\myvec{\omega}^{[5]}$ which we average to obtain the final weights $\myvec{\omega}$.

Results of 46 attacks with $M\!=\!200$ are shown in Figure \ref{fig:final_results}. Specific examples of the conducted attacks are presented in 
Appendix \ref{sec:add_res:example}. As a point of reference, we also show results for the brute-force and genetic algorithms. To make comparisons to our 
attack more meaningful, the brute-force method was only allowed to run for the same amount of time it took to search for the non-uniform weights $\bm{\omega}$ 
(about 30-180 seconds). Also, the genetic algorithm ran for 400 iterations and was stopped early if there were 10 consecutive detections. We note that, 
across all datasets, Fool SHAP leads to greater reductions of the sensitive feature attribution compared to brute-force search and the genetic 
perturbations of the background.


Now focusing on Fool SHAP, for the datasets COMPAS and Marketing, we observe median reductions in amplitudes of about $90\%$. This means that our 
attack can considerably reduce the apparent importance of the sensitive attribute. For the Adult and Communities datasets, the median reduction in amplitude 
is about $50\%$ meaning that we typically reduce by half the importance of the sensitive feature. Still, looking at the maximum reduction in amplitude for 
Adult-Income and Communities, we note that one attack managed to reduce the amplitude by $90\%$. Therefore, luck can play a part in the degree of success 
of Fool SHAP, which is to be expected from data-driven attacks.

Finally, the audit was consistently unable to detect the fraud using statistical tests. This observation raises concerns about the risk that \texttt{SHAP} explanations can be attacked to return not only manipulated attributions but also non-detectable fake evidence of fairness.



\section{Conclusion}\label{sec:conclusion}

To conclude, we proposed a novel attack on Shapley values that does not require modifying the model but rather manipulates the sampling procedure 
that estimates expectations w.r.t the background distribution. We show on a toy example and four fairness datasets that our attack can reduce the 
importance of a sensitive feature when explaining the difference in outcomes between groups using \texttt{SHAP}. Crucially, the sampling manipulation 
is hard to detect by an audit that is given limited access to the data and model. These results raise concerns about the viability of using Shapley 
values to assess model fairness. We leave as future work the use of Shapley values to decompose other fairness metrics such as predictive equality 
and equal opportunity. Moreover, we wish to move to use cases beyond fairness, as we believe that the vulnerability of Shapley values that was 
demonstrated can apply to many other properties such as safety and security.

\section{Ethics Statement}
The main objective of this work is to raise awareness about the risk of manipulation of SHAP explanations and their undetectability. As such, it aims at exposing the potential negative societal impacts of relying on such explanations. It remains however possible that malicious model producers could use this attack to mislead end users or cheat during an audit. However, we believe this paper makes a significant step toward increasing the vigilance of the community and fostering the development of trustworthy explanations methods. Furthermore, by showing how fairness can be manipulated in explanation contexts, this work contributes to the research on the certification of the fairness of automated decision-making systems.

\section{Reproducibility Statement}

The source code of all our experiments
is available online\footnote{\small{\url{https://github.com/gablabc/Fool\_SHAP}}}.
Moreover, experimental details are provided in
appendix \ref{sec:details:real} for the interested reader.

\bibliography{iclr2023_conference}

\begin{thebibliography}{23}
\providecommand{\natexlab}[1]{#1}
\providecommand{\url}[1]{\texttt{#1}}
\expandafter\ifx\csname urlstyle\endcsname\relax
  \providecommand{\doi}[1]{doi: #1}\else
  \providecommand{\doi}{doi: \begingroup \urlstyle{rm}\Url}\fi

\bibitem[Adebayo et~al.(2016)]{adebayo2016fairml}
Julius~A Adebayo et~al.
\newblock Fairml: Toolbox for diagnosing bias in predictive modeling.
\newblock Master's thesis, Massachusetts Institute of Technology, 2016.

\bibitem[A{\"\i}vodji et~al.(2019)A{\"\i}vodji, Arai, Fortineau, Gambs, Hara,
  and Tapp]{aivodji2019fairwashing}
Ulrich A{\"\i}vodji, Hiromi Arai, Olivier Fortineau, S{\'e}bastien Gambs,
  Satoshi Hara, and Alain Tapp.
\newblock Fairwashing: the risk of rationalization.
\newblock In \emph{International Conference on Machine Learning}, pp.\
  161--170. PMLR, 2019.

\bibitem[Baniecki \& Biecek(2022)Baniecki and Biecek]{baniecki2022manipulating}
Hubert Baniecki and Przemyslaw Biecek.
\newblock Manipulating shap via adversarial data perturbations (student
  abstract).
\newblock 2022.

\bibitem[Baniecki et~al.(2021)Baniecki, Kretowicz, and
  Biecek]{baniecki2021fooling}
Hubert Baniecki, Wojciech Kretowicz, and Przemyslaw Biecek.
\newblock Fooling partial dependence via data poisoning.
\newblock \emph{arXiv preprint arXiv:2105.12837}, 2021.

\bibitem[Begley et~al.(2020)Begley, Schwedes, Frye, and
  Feige]{begley2020explainability}
Tom Begley, Tobias Schwedes, Christopher Frye, and Ilya Feige.
\newblock Explainability for fair machine learning.
\newblock \emph{arXiv preprint arXiv:2010.07389}, 2020.

\bibitem[Bellamy et~al.(2018)Bellamy, Dey, Hind, Hoffman, Houde, Kannan, Lohia,
  Martino, Mehta, Mojsilovic, et~al.]{bellamy2018ai}
Rachel~KE Bellamy, Kuntal Dey, Michael Hind, Samuel~C Hoffman, Stephanie Houde,
  Kalapriya Kannan, Pranay Lohia, Jacquelyn Martino, Sameep Mehta, Aleksandra
  Mojsilovic, et~al.
\newblock Ai fairness 360: An extensible toolkit for detecting, understanding,
  and mitigating unwanted algorithmic bias.
\newblock \emph{arXiv preprint arXiv:1810.01943}, 2018.

\bibitem[Chikahara et~al.(2021)Chikahara, Sakaue, Fujino, and
  Kashima]{chikahara2021learning}
Yoichi Chikahara, Shinsaku Sakaue, Akinori Fujino, and Hisashi Kashima.
\newblock Learning individually fair classifier with path-specific
  causal-effect constraint.
\newblock In \emph{International Conference on Artificial Intelligence and
  Statistics}, pp.\  145--153. PMLR, 2021.

\bibitem[Corbett-Davies et~al.(2017)Corbett-Davies, Pierson, Feller, Goel, and
  Huq]{corbett2017algorithmic}
Sam Corbett-Davies, Emma Pierson, Avi Feller, Sharad Goel, and Aziz Huq.
\newblock Algorithmic decision making and the cost of fairness.
\newblock In \emph{Proceedings of the 23rd acm sigkdd international conference
  on knowledge discovery and data mining}, pp.\  797--806, 2017.

\bibitem[Dimanov et~al.(2020)Dimanov, Bhatt, Jamnik, and
  Weller]{dimanov2020you}
Botty Dimanov, Umang Bhatt, Mateja Jamnik, and Adrian Weller.
\newblock You shouldn't trust me: Learning models which conceal unfairness from
  multiple explanation methods.
\newblock In \emph{SafeAI@ AAAI}, 2020.

\bibitem[Dwork et~al.(2012)Dwork, Hardt, Pitassi, Reingold, and
  Zemel]{dwork2012fairness}
Cynthia Dwork, Moritz Hardt, Toniann Pitassi, Omer Reingold, and Richard Zemel.
\newblock Fairness through awareness.
\newblock In \emph{Proceedings of the 3rd innovations in theoretical computer
  science conference}, pp.\  214--226, 2012.

\bibitem[Frye et~al.(2020)Frye, de~Mijolla, Begley, Cowton, Stanley, and
  Feige]{frye2020shapley}
Christopher Frye, Damien de~Mijolla, Tom Begley, Laurence Cowton, Megan
  Stanley, and Ilya Feige.
\newblock Shapley explainability on the data manifold.
\newblock \emph{arXiv preprint arXiv:2006.01272}, 2020.

\bibitem[Fukuchi et~al.(2020)Fukuchi, Hara, and Maehara]{fukuchi2020faking}
Kazuto Fukuchi, Satoshi Hara, and Takanori Maehara.
\newblock Faking fairness via stealthily biased sampling.
\newblock In \emph{Proceedings of the AAAI Conference on Artificial
  Intelligence}, volume~34, pp.\  412--419, 2020.

\bibitem[Guidotti et~al.(2018)Guidotti, Monreale, Ruggieri, Turini, Giannotti,
  and Pedreschi]{guidotti2018survey}
Riccardo Guidotti, Anna Monreale, Salvatore Ruggieri, Franco Turini, Fosca
  Giannotti, and Dino Pedreschi.
\newblock A survey of methods for explaining black box models.
\newblock \emph{ACM computing surveys (CSUR)}, 51\penalty0 (5):\penalty0 1--42,
  2018.

\bibitem[Hardt et~al.(2016)Hardt, Price, and Srebro]{hardt2016equality}
Moritz Hardt, Eric Price, and Nati Srebro.
\newblock Equality of opportunity in supervised learning.
\newblock \emph{Advances in neural information processing systems}, 29, 2016.

\bibitem[Janssen et~al.(2000)Janssen, Heymsfield, Wang, and
  Ross]{janssen2000skeletal}
Ian Janssen, Steven~B Heymsfield, ZiMian Wang, and Robert Ross.
\newblock Skeletal muscle mass and distribution in 468 men and women aged
  18--88 yr.
\newblock \emph{Journal of applied physiology}, 2000.

\bibitem[Lee(2019)]{lee2019u}
A~J Lee.
\newblock \emph{U-statistics: Theory and Practice}.
\newblock Routledge, 2019.

\bibitem[Lundberg \& Lee(2017)Lundberg and Lee]{lundberg2017unified}
Scott~M Lundberg and Su-In Lee.
\newblock A unified approach to interpreting model predictions.
\newblock \emph{Advances in neural information processing systems}, 30, 2017.

\bibitem[Lundberg et~al.(2020)Lundberg, Erion, Chen, DeGrave, Prutkin, Nair,
  Katz, Himmelfarb, Bansal, and Lee]{lundberg2020local}
Scott~M Lundberg, Gabriel Erion, Hugh Chen, Alex DeGrave, Jordan~M Prutkin,
  Bala Nair, Ronit Katz, Jonathan Himmelfarb, Nisha Bansal, and Su-In Lee.
\newblock From local explanations to global understanding with explainable ai
  for trees.
\newblock \emph{Nature machine intelligence}, 2\penalty0 (1):\penalty0 56--67,
  2020.

\bibitem[Massey~Jr(1951)]{massey1951kolmogorov}
Frank~J Massey~Jr.
\newblock The kolmogorov-smirnov test for goodness of fit.
\newblock \emph{Journal of the American statistical Association}, 46\penalty0
  (253):\penalty0 68--78, 1951.

\bibitem[Saleiro et~al.(2018)Saleiro, Kuester, Hinkson, London, Stevens,
  Anisfeld, Rodolfa, and Ghani]{saleiro2018aequitas}
Pedro Saleiro, Benedict Kuester, Loren Hinkson, Jesse London, Abby Stevens, Ari
  Anisfeld, Kit~T Rodolfa, and Rayid Ghani.
\newblock Aequitas: A bias and fairness audit toolkit.
\newblock \emph{arXiv preprint arXiv:1811.05577}, 2018.

\bibitem[Shapley(1953)]{shapley1953value}
Lloyd~S Shapley.
\newblock A value for n-person games.
\newblock \emph{Contributions to the Theory of Games}, pp.\  307--317, 1953.

\bibitem[Slack et~al.(2020)Slack, Hilgard, Jia, Singh, and
  Lakkaraju]{slack2020fooling}
Dylan Slack, Sophie Hilgard, Emily Jia, Sameer Singh, and Himabindu Lakkaraju.
\newblock Fooling lime and shap: Adversarial attacks on post hoc explanation
  methods.
\newblock In \emph{Proceedings of the AAAI/ACM Conference on AI, Ethics, and
  Society}, pp.\  180--186, 2020.

\bibitem[Wasserman(2004)]{Wasserman2005}
Larry Wasserman.
\newblock \emph{All of Statistics: A concise course in statistical inference}.
\newblock Springer, 2004.

\end{thebibliography}
\bibliographystyle{iclr2023_conference}

\newpage
\appendix

\section{Proofs}\label{sec:proofs}

\subsection{Proofs for Global Shapley Values (GSV)}\label{sec:proofs_gsv}

\begin{proposition}[\textbf{Proposition \ref{prop:global_decomp}}]
The GSV have the following property
\begin{equation}
    \sum_{i=1}^d \Phi_i(f,\foreground, \background)=
    \E_{\myvec{x} \sim \foreground}[f(\myvec{x})]-\E_{\myvec{x} \sim \background}[f(\myvec{x})].
\end{equation}
\end{proposition}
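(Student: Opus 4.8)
The plan is to reduce the global statement to the local efficiency axiom (Equation \ref{eq:unit_pred_gap}) by interchanging the finite sum over features with the expectation over the foreground and background distributions. First I would start from the left-hand side $\sum_{i=1}^d \Phi_i(f,\foreground,\background)$ and substitute the definition of the GSV (Equation \ref{eq:GSV}), writing each $\Phi_i$ as $\Efb[\phi_i(f,\myvec{x},\myvec{z})]$. Since the summation runs over the finitely many feature indices $i=1,\dots,d$ and expectation is linear, I can pull the sum inside the expectation to obtain $\Efb\big[\sum_{i=1}^d \phi_i(f,\myvec{x},\myvec{z})\big]$.

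Next I would invoke the efficiency axiom on the integrand: for every fixed pair $(\myvec{x},\myvec{z})$ we have $\sum_{i=1}^d \phi_i(f,\myvec{x},\myvec{z}) = f(\myvec{x})-f(\myvec{z})$, so the expression collapses to $\Efb[f(\myvec{x})-f(\myvec{z})]$. Splitting this again by linearity yields $\Efb[f(\myvec{x})] - \Efb[f(\myvec{z})]$.

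The final step is to collapse each joint expectation to a single marginal. Because $\myvec{x}$ and $\myvec{z}$ are drawn independently from $\foreground$ and $\background$ respectively, and $f(\myvec{x})$ does not depend on the background draw, integrating out $\myvec{z}$ leaves $\E_{\myvec{x}\sim\foreground}[f(\myvec{x})]$; by the symmetric argument $\Efb[f(\myvec{z})]=\E_{\myvec{z}\sim\background}[f(\myvec{z})]$. Relabelling the dummy variable $\myvec{z}$ as $\myvec{x}$ in the second term gives precisely the right-hand side of Equation \ref{eq:global_decomp}.

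I do not expect a genuine obstacle here: the result is essentially a one-line consequence of the local efficiency axiom together with linearity of expectation. The only points that merit a moment of care are the interchange of the finite sum with the expectation (immediate, as the index set is finite) and the factorization of the joint expectation into the two marginals, which relies on the independence of the foreground and background sampling and on each summand depending on only one of the two variables. Both are routine and require no integrability assumptions beyond those implicitly guaranteed by $f:\mathcal{X}\to[0,1]$ being bounded.
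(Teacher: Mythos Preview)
Your proposal is correct and follows essentially the same argument as the paper's own proof: swap the finite sum with the expectation, apply the local efficiency axiom $\sum_i \phi_i(f,\myvec{x},\myvec{z})=f(\myvec{x})-f(\myvec{z})$, split by linearity, and rename the dummy variable. The paper's version is slightly terser (it does not explicitly comment on independence or boundedness), but the logical steps are identical.
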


\begin{proof}
As a reminder, we have defined the vector
\begin{equation}
    \bm{\Phi}(f,\foreground, \background) = \Efb[ \bm{\phi}(f, \myvec{x}, \myvec{z})\big], 
\end{equation}
whose components sum up to
\begin{align}
\sum_{i=1}^d\Phi_i(f,\foreground,\background)&=
\sum_{i=1}^d\Efb[\, \phi_i(f, \myvec{x}, \myvec{z})\,]\\
&=
\Efb\bigg[\sum_{i=1}^d\phi_i(f, \myvec{x}, \myvec{z})\bigg]\\
&=\Efb[\,f(\myvec{x})-f(\myvec{z})\,]\\
&=\E_{\myvec{x}\sim \foreground}[f(\myvec{x})]-\E_{\myvec{z}\sim\background}[f(\myvec{z})]\\
&=\E_{\myvec{x}\sim \foreground}[f(\myvec{x})]-\E_{\myvec{x}\sim\background}[f(\myvec{x})],
\end{align}
where at the last step we have simply renamed a dummy variable.
\end{proof}

\begin{proposition}[\textbf{Proposition \ref{prop:linear_limit}}]
Let $S'_0$ be \textbf{fixed}, and let $\overset{p}{\to}$ represent convergence in probability as 
the size $M$ of the set $S'_1\sim\background'^M$ increases, then we have
\begin{equation}
    \widehat{\Phi}_s(f, S'_0, S'_1)  \overset{p}{\to}
    \sum_{j=1}^{N_1} \, \omega_j\, \widehat{\Phi}_s(f, S'_0, \binstance{j}).
\end{equation}
\end{proposition}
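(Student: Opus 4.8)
The plan is to recognize the left-hand side as an empirical average of independent and identically distributed bounded random variables and then invoke the weak law of large numbers. First I would unfold the Monte-Carlo estimate of Equation~\ref{eq:mc_estimate}. Writing the sampled background as $S'_1=\{\binstance{j_1},\ldots,\binstance{j_M}\}$, where the $M$ points are drawn i.i.d.\ from $\background'_{\bm{\omega}}$ while $S'_0$ is held fixed, the double sum factorizes across the background points, giving
\begin{equation}
\widehat{\Phi}_s(f, S'_0, S'_1) = \frac{1}{M}\sum_{k=1}^{M}\widehat{\Phi}_s(f, S'_0, \binstance{j_k}),
\end{equation}
where each summand is interpreted via the singleton convention $\widehat{\Phi}_s(f, S'_0, \binstance{j_k})$. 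The key point is that, because $S'_0$ is fixed, the function $\myvec{z}\mapsto \widehat{\Phi}_s(f, S'_0, \myvec{z})$ on the support $D_1$ is deterministic, so the summands are i.i.d.\ copies of this function evaluated at the random draws $\binstance{j_k}\sim\background'_{\bm{\omega}}$.

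Next I would compute the common expectation of a single summand. Since $\background'_{\bm{\omega}}=\mathcal{C}(D_1,\myvec{\omega})$ places mass $\omega_j$ on $\binstance{j}$, taking the expectation over $\binstance{j_1}\sim\background'_{\bm{\omega}}$ yields
\begin{equation}
\E\big[\widehat{\Phi}_s(f, S'_0, \binstance{j_1})\big] = \sum_{j=1}^{N_1}\omega_j\,\widehat{\Phi}_s(f, S'_0, \binstance{j}),
\end{equation}
which is exactly the claimed limit. Finally, because $D_1$ is a finite set and $f$ takes values in $[0,1]$, the local Shapley values $\phi_s$ are bounded, so each summand is a bounded and hence integrable random variable. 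The weak law of large numbers then gives convergence in probability of the empirical average to its mean, which completes the argument.

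I expect no serious obstacle here. The only care needed is the bookkeeping that turns the double Monte-Carlo sum into a single empirical average over the i.i.d.\ background draws, together with the observation that fixing $S'_0$ is precisely what makes the summands identically distributed and independent. As an equivalent and perhaps even more transparent route, one may write $\widehat{\Phi}_s(f, S'_0, S'_1) = \sum_{j=1}^{N_1}\widehat{\omega}_j\,\widehat{\Phi}_s(f, S'_0, \binstance{j})$, where $\widehat{\omega}_j$ is the empirical frequency of $\binstance{j}$ among the draws; the weak law applied to these finitely many indicator variables gives $\widehat{\omega}_j\overset{p}{\to}\omega_j$, and the finite linear combination converges accordingly. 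In either form, boundedness and hence applicability of the law is immediate from the finiteness of $D_1$ and the range of $f$.
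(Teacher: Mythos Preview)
Your proof is correct and follows essentially the same route as the paper: rewrite the double Monte-Carlo sum as an empirical average $\frac{1}{M}\sum_{k}\widehat{\Phi}_s(f,S_0',\binstance{j_k})$ over i.i.d.\ background draws (using that $S_0'$ is fixed), apply the weak law of large numbers, and expand the resulting expectation against the categorical distribution $\background'_{\bm{\omega}}$. Your explicit mention of boundedness to justify integrability and your alternative empirical-frequency argument are welcome additions but do not change the core strategy.
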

\begin{proof}
\begin{equation}
\begin{aligned}
    \widehat{\bm{\Phi}}(f, S_0', S_1') &= \frac{1}{M^2} \sum_{\instance{i}\in S_0'}\sum_{\binstance{j}\in S_1'}\bm{\phi}(f, \instance{i}, \binstance{j})\\
    &=\frac{1}{M}\sum_{\binstance{j}\in S_1'}\bigg(\frac{1}{M}\sum_{\instance{i}\in S_0'}\bm{\phi}(f, \instance{i}, \binstance{j})\bigg)\\
    &= \frac{1}{M}\sum_{\binstance{j}\in S_1'} \widehat{\bm{\Phi}}(f, S_0', \binstance{j}).
\end{aligned}
\end{equation}

Since $S_0'$ is assumed to be fixed, then the only random variable in $\widehat{\Phi}_s(f, S_0', \binstance{j})$ is
$\binstance{j}$ which represents an instance sampled from the
$\background'$. Therefore, we can define $\psi(\myvec{z}) := \widehat{\Phi}_s(f, S_0', \myvec{z})$ and we get

\begin{equation}
\begin{aligned}
    \widehat{\Phi}_s(f, S'_0, S'_1) &= \frac{1}{M}
    \sum_{\binstance{j}\in S_1'} \widehat{\Phi}_s(f, S_0', \binstance{j})\\
    &= \frac{1}{M}\sum_{\binstance{j}\in S_1'}\psi(\binstance{j})\qquad \text{with $S'_1\sim \background'^M$}.
\end{aligned}
\end{equation}

By the weak law of large number, the following holds as $M$ goes to infinity
\citep[Theorem 5.6]{Wasserman2005}
\begin{equation}
    \frac{1}{M}\sum_{\binstance{j}\in S_1'}\psi(\binstance{j})\overset{p}{\to}
    \E_{\myvec{z}\sim \background'}[\psi(\myvec{z})].
\end{equation}
Now, as a reminder, the manipulated background distribution is $\background':=\mathcal{C}(D_1, \myvec{\omega})$ with $\myvec{\omega} \neq \bm{1}/N_1$. Therefore

\begin{equation}
\begin{aligned}
    \widehat{\Phi}_s(f, S'_0, S'_1) &\overset{p}{\to}
    \E_{\myvec{z}\sim \background'}[\psi(\myvec{z})]\\
    &=
    \E_{\myvec{z}\sim \mathcal{C}(D_1, \myvec{\omega})}[\psi(\myvec{z})]\\
    &= \sum_{j=1}^{N_1}\omega_j \psi(\binstance{j})\\
    &= \sum_{j=1}^{N_1}\omega_j\widehat{\Phi}_s(f, S'_0, \binstance{j})
\end{aligned}
\end{equation}

concluding the proof.
\end{proof}

\newpage
\subsection{Proofs for Optimization Problem}\label{sec:optim}
\subsubsection{Technical Lemmas}

We provide some technical lemmas that will be essential when proving Theorem \ref{theorem:optim}.
These are presented for completeness and are not intended as contributions by the authors.
Let us first write the formal definition of the minimum of a function.

\begin{definition}[Minimum]
    Given some function $f:D\rightarrow \R$, the minimum of $f$ over $D$ (denoted $f^\star$) is defined as follows:
    $$f^\star = \min_{x\in D}f(x) \iff \exists x^\star \in D \,\,\text{s.t.}\,\, f^\star = f(x^\star) \leq f(x) \,\,\,\,\forall x\in D.$$
 \end{definition}
 
Basically, the notion of minimum coincides with the infimum $\inf f(D)$ (highest lower bound) when this lower bound is attained for some 
$x^\star \in D$. By the Extreme Values Theorem, the minimum always exists when $D$ is compact and $f$ is continuous.
For the rest of this appendix, we shall only study optimization problems where points on the domain set 
$D=\{(x,y)\,:\,x\in\mathcal{X}, y\in\mathcal{Y}_x \subset \mathcal{Y}\}$ can be \textit{selected} by the following procedure

\begin{enumerate}
    \item Choose some $x\in\mathcal{X}$
    \item Given the selected $x$, choose some $y\in\mathcal{Y}_x\subset\mathcal{Y}$, where the set
    $\mathcal{Y}_x$ is non-empty and depends on the value of $x$.
\end{enumerate}
When optimizing functions over these domains, one can optimize in two steps as highlighted in the following lemma.

\begin{lemma}
    Given a compact domain $D$ of the form described above and a continuous objective function
    $f:D \rightarrow \R$, the minimum $f^\star$ is attained for some $(x^\star,y^\star)$ and 
    the following holds
    $$\min_{(x,y)\in D} f(x, y)
    = \min_{x\in\mathcal{X}}\min_{y\in\mathcal{Y}_x} f(x, y).$$
    \label{lemma:seq_optim}
\end{lemma}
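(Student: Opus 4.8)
The plan is to prove the equality by a standard two-inequality argument, after first securing that every minimum appearing in the statement is genuinely \emph{attained} rather than merely an infimum, so that writing ``$\min$'' everywhere is justified. First I would invoke the Extreme Value Theorem exactly as the surrounding text anticipates: since $D$ is compact and $f$ is continuous, the global minimum $f^\star = \min_{(x,y)\in D} f(x,y)$ exists and is attained at some $(x^\star,y^\star)\in D$; in particular $x^\star\in\mathcal{X}$ and $y^\star\in\mathcal{Y}_{x^\star}$. To make sense of the inner minimization on the right-hand side, I would next check that each slice $\mathcal{Y}_x=\{y:(x,y)\in D\}$ is itself compact: it is bounded because $D$ is bounded, and it is closed because $D$ is closed (if $y_n\to y$ with $(x,y_n)\in D$, then $(x,y_n)\to(x,y)\in D$, so $y\in\mathcal{Y}_x$). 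Writing $g(x):=\min_{y\in\mathcal{Y}_x}f(x,y)$, the restriction $f(x,\cdot)$ is then continuous on the compact set $\mathcal{Y}_x$, so $g(x)$ is well defined for every $x\in\mathcal{X}$.

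With existence settled, the core is two inequalities. For $\min_x g(x)\geq f^\star$: for any $x\in\mathcal{X}$, let $\bar y\in\mathcal{Y}_x$ attain $g(x)$; then $(x,\bar y)\in D$, so $g(x)=f(x,\bar y)\geq f^\star$, and minimizing over $x$ preserves the bound. For the reverse inequality I would simply evaluate at the global minimizer: since $y^\star\in\mathcal{Y}_{x^\star}$, we have $g(x^\star)\leq f(x^\star,y^\star)=f^\star$, hence $\min_x g(x)\leq g(x^\star)\leq f^\star$. Combining the two gives $\min_{x\in\mathcal{X}}\min_{y\in\mathcal{Y}_x}f(x,y)=f^\star=\min_{(x,y)\in D}f(x,y)$, which is the claim.

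Finally, I would note that these same two inequalities upgrade the outer infimum to an attained minimum: the first gives $g(x)\geq f^\star$ for all $x$, while the second gives $g(x^\star)\leq f^\star$, so $g(x^\star)=f^\star\leq g(x)$ for every $x$, i.e. $x^\star$ realizes $\min_{x\in\mathcal{X}}g(x)$. I expect the only real subtlety, and hence the main obstacle, to be the compactness of the slices $\mathcal{Y}_x$, since that is precisely what licenses writing ``$\min$'' rather than ``$\inf$'' for the inner problem; once this transfer of compactness from $D$ to each slice is in hand, the rest of the argument is routine bookkeeping.
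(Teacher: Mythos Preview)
Your proof is correct and a bit more careful than the paper's. The paper takes the reverse order: it first defines $\widetilde f(x):=\inf_{y\in\mathcal{Y}_x}f(x,y)$, shows directly from the definition of infimum that $\inf_{(x,y)\in D}f(x,y)=\inf_{x}\widetilde f(x)$, and only at the very end invokes the Extreme Value Theorem once on $D$ to argue that since the left-hand infimum is attained, all infima can be replaced by minima. Your route starts with the Extreme Value Theorem to produce $(x^\star,y^\star)$, then explicitly proves each slice $\mathcal{Y}_x$ is compact so that the inner $\min$ is well defined for \emph{every} $x$, and closes with the two-inequality sandwich. The substantive addition in your version is the slice-compactness step: the paper never checks that $\min_{y\in\mathcal{Y}_x}f(x,y)$ exists for all $x$ (it only implicitly gets attainment at $x=x^\star$), so strictly speaking your argument is the one that fully justifies writing ``$\min$'' rather than ``$\inf$'' on the inner problem. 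Otherwise the two arguments are the same idea viewed from opposite ends.
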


\begin{proof}
Let $\widetilde{f}(x) := \inf_{y\in \mathcal{Y}_x}f(x, y)$, which is a well defined function
on $\mathcal{X}$. We can then take its infimum $f^\star = \inf_{x\in \mathcal{X}} \widetilde{f}(x)$. 
But is $f^\star$ an infimum of $f(D)$? By the definition of infimum
\begin{align*}
    f^\star &\leq \widetilde{f}(x) \qquad\quad\,\,\,\forall \,x\in \mathcal{X}\\
    &=\inf_{y\in \mathcal{Y}_x}f(x, y)\\
    &\leq f(x, y)\qquad\,\,\,\forall \,y\in \mathcal{Y}_x,
\end{align*}
so that $f^\star$ is a lower bound of $f(D)$. In fact, it is the highest lower bound possible so 
\begin{equation}
    \inf_{(x,y)\in D} f(x, y)
    = \inf_{x\in\mathcal{X}}\inf_{y\in\mathcal{Y}_x} f(x, y).
    \label{eq:inf_infinf}
\end{equation}
By the Extreme Value Theorem, since $D$ is compact and $f$ is continuous, there exists $(x^\star, y^\star)\in D$ s.t.
$f^\star = \inf_{(x,y)\in D} f(x, y) = \max_{(x,y)\in D} f(x, y) = f(x^\star, y^\star)$. Since the infimum is attained
on the left-hand-side of Equation \ref{eq:inf_infinf}, then it must also be attained on the right-hand-side and
therefore we can replace all $\inf$ with $\min$ in Equation \ref{eq:inf_infinf}, leading to the desired result.
\end{proof}

\begin{lemma}
Given a compact domain $D$ of the form described above and two continuous functions 
$h:\mathcal{X}\rightarrow \R$ and $g:\mathcal{Y}\rightarrow \R$, then
$$\min_{(x,y)\in D} \bigg(h(x) + g(y)\bigg)
= \min_{x\in\mathcal{X}}\bigg(h(x) + \min_{y\in\mathcal{Y}_x} g(y)\bigg)$$
\label{lemma:max_sum}
\end{lemma}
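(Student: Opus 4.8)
The plan is to recognize Lemma \ref{lemma:max_sum} as an immediate specialization of Lemma \ref{lemma:seq_optim}, the only additional ingredient being that a term depending solely on $x$ can be factored out of the inner minimization over $y$.

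First I would set $f(x,y) := h(x) + g(y)$, viewed as a function on $D$. Since $h$ is continuous on $\mathcal{X}$ and $g$ is continuous on $\mathcal{Y}$, and each composes with the corresponding continuous coordinate projection from $D$, their sum $f$ is continuous on $D$. As $D$ is compact by hypothesis, $f$ meets exactly the requirements of Lemma \ref{lemma:seq_optim}, which gives
\[
\min_{(x,y)\in D}\big(h(x)+g(y)\big) = \min_{x\in\mathcal{X}}\min_{y\in\mathcal{Y}_x}\big(h(x)+g(y)\big).
\]

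Next I would fix an arbitrary $x\in\mathcal{X}$ and simplify the inner minimization. For this fixed $x$, the quantity $h(x)$ is constant with respect to $y$, so adding it merely shifts the objective without changing its minimizer; concretely $\min_{y\in\mathcal{Y}_x}\big(h(x)+g(y)\big) = h(x) + \min_{y\in\mathcal{Y}_x} g(y)$, where the inner minimum is attained because the slice $\mathcal{Y}_x$ is compact and $g$ continuous, exactly as used inside Lemma \ref{lemma:seq_optim}. Substituting this identity back and taking the outer $\min$ over $x$ yields the claimed equality.

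The argument is essentially bookkeeping, so I do not anticipate a genuine obstacle; the only points requiring a line of care are verifying that the hypotheses of Lemma \ref{lemma:seq_optim} transfer (continuity of $f=h+g$ on the compact $D$) and that the constant shift commutes with the inner $\min$, i.e. $\min_y (c+g(y)) = c + \min_y g(y)$ for any constant $c$.
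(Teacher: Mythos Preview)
Your proposal is correct and follows exactly the paper's approach: the paper's own proof is the single sentence ``Applying Lemma \ref{lemma:seq_optim} with the function $f(x,y):= h(x) + g(y)$ proves the Lemma.'' Your write-up simply makes explicit the two steps the paper leaves implicit (continuity of $f=h+g$ on the compact $D$, and pulling the $x$-constant $h(x)$ out of the inner $\min$), so it is a more detailed version of the same argument.
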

\begin{proof}
Applying Lemma \ref{lemma:seq_optim} with the function $f(x,y):= h(x) + g(y)$ proves the Lemma.
\end{proof}

\newpage
\subsubsection{Minimum Cost Flows}
Let $\graph=(\vertices,\edges)$ be a graph with vertices $v\in\vertices$ with directed edges $e\in \edges\subset \vertices\times\vertices$, $c:\edges\rightarrow \R^+$ be a capacity and $a:\edges\rightarrow \R$ be a cost. Moreover, let $s,t\in\edges$ be two special vertices called the source and the sink respectivelly, and $d\in\R^+$ be a total flow. The Minimum-Cost Flow (MCF) problem of $\graph$ consists of finding the flow function $f:\edges\rightarrow \R^+$ that minimizes the total cost
\begin{equation}
\begin{aligned}
    \min_f \,\,\,\,&\sum_{e\in\edges}a(e)f(e)\\
    \textrm{s.t.} \,\,\,\,&0 \leq f(e) \leq c(e)\,\,\forall e\in \edges\\
    &\sum_{e\in u^+}f(e) - \sum_{e\in u^-}f(e) = 
    \begin{cases}
        0 & u\in \vertices\,\setminus\, \{s, t\}\\
        d & u=s\\
        -d & u=t
    \end{cases}
\end{aligned}
\end{equation}
where $u^+:=\{(u,v)\in \edges\}$ and $u^-:=\{(v, u)\in \edges\}$ are the outgoing and incoming edges from $u$. The terminology of \textit{flow} arises from the constraint that,
for vertices that are not the source nor the sink, the outgoing flow must equal the incoming one, which is reminiscent of conservation laws in fluidic. We shall refer to $f((u,v))$ as the flow from $u$ to $v$.

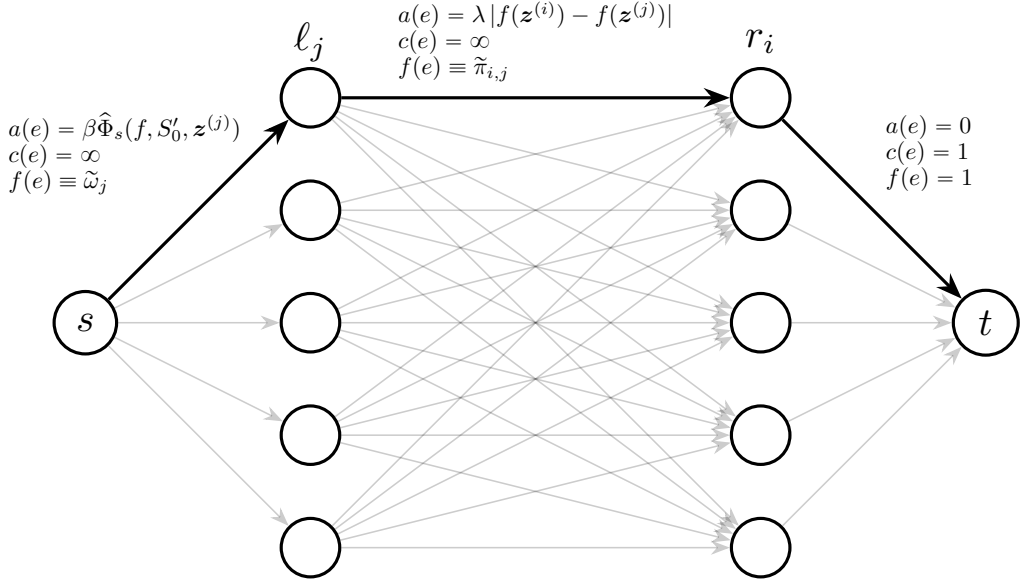
\begin{figure}
    \centering
    \resizebox{14cm}{!}{
    \begin{tikzpicture}
        \usetikzlibrary{arrows.meta}
        
        \begin{scope}[every node/.style={circle,thick,draw, 
                                         fill=white,minimum size=5.2mm}]
            \node (s) at (-4,0) {$s$};
            \node (t) at (4,0) {$t$};
            \foreach \x in {-2,...,2}
        	{
            \node (l\x) at (-2,\x) {};
            \node (r\x) at (2,\x) {};
        	}
        
        \end{scope}
        
        \node at (-2,2.5) {$\ell_j$};
        \node at (2,2.5) {$r_i$};
        
        \begin{scope}[>={Stealth[black]},
                      every node/.style={},
                      every edge/.style={draw=black}]
        	\foreach \x in {-2,...,2}
        	{
        		\path [->, opacity=0.2] (s) edge (l\x);
        		\path [->, opacity=0.2] (r\x) edge (t);
        	}
        	
        	\foreach \x in {-2,...,2}
        		{
        	    \foreach \y in {-2,...,2}  
        	        {
        	        \path [->, opacity=0.2] (l\x) edge (r\y);
        	    }
        	}
        	\path [->,thick] (s) edge (l2);
        	\path [->,thick] (l2) edge (r2);
        	\path [->,thick] (r2) edge (t);
        
        \end{scope}
        
        \node[scale=0.6] at (-3.65,1.5) {
        \begin{tabular}{l}
        $a(e)= \beta\widehat{\Phi}_s(f, S_0', \binstance{j})$\\
        $c(e)= \infty$\\
        $f(e)\equiv \widetilde{\omega}_j$
        \end{tabular}
        };
        
        \node[scale=0.6] at (0,2.5) {
        \begin{tabular}{l}
        $a(e)=\lambda\, |f(\binstance{i})-f(\binstance{j})|$\\
        $c(e)=\infty$\\
        $f(e)\equiv \widetilde{\pi}_{i,j}$
        \end{tabular}
        };
        
        \node[scale=0.6] at (3.5,1.5) {
        \begin{tabular}{l}
        $a(e)=0$\\
        $c(e)=1$\\
        $f(e)=1$
        \end{tabular}
        };
    \end{tikzpicture}
    }
    \caption{Graph $\graph$ on which we solve the MCF. Note that the total amount of flow is $d=N_1$ and there are $N_1$ left and right nodes $\ell_j,r_i$.}
    \label{fig:graph}
\end{figure}

Now that we have introduced minimum cost flows, let us specify the graph that will be employed to manipulate GSV, see Figure \ref{fig:graph}.
We label the flow going from the sink $s$ to one of the left vertices
as $\widetilde{\omega}_i\equiv \omega_i\times N_1$, and the flow going from $\ell_j$ to $r_i$ as $\widetilde{\pi}_{i,j}\equiv \pi_{i,j}\times N_1$. The required flow is fixed at $d=N_1$.
\newpage
\begin{theorem}
Solving the MCF of Figure \ref{fig:graph} leads to a solution of the linear program in Algorithm \ref{alg:weights}.
\label{theorem:optim}
\end{theorem}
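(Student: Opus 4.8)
The plan is to show that the MCF of Figure~\ref{fig:graph}, once its integer flow variables are rescaled, has precisely the same objective and feasible set as the \emph{joint} minimization over $(\myvec{\omega}, \pi)$, and then to invoke Lemma~\ref{lemma:max_sum} to collapse this joint minimization into the \emph{two-stage} form (inner Wasserstein transport, outer attribution term) that appears in Algorithm~\ref{alg:weights}. The identification is carried through the substitutions $\widetilde{\omega}_j = N_1\,\omega_j$ and $\widetilde{\pi}_{i,j} = N_1\,\pi_{i,j}$ announced just before the statement, so that flow feasibility in the graph becomes probability-simplex and coupling feasibility in the LP.

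First I would read off the MCF objective $\sum_{e\in\edges} a(e)\,f(e)$ edge type by edge type. The source-to-left edges contribute $\beta \sum_j \widehat{\Phi}_s(f, S'_0, \binstance{j})\,\widetilde{\omega}_j$, the left-to-right edges contribute $\lambda \sum_{i,j} |f(\binstance{i}) - f(\binstance{j})|\,\widetilde{\pi}_{i,j}$, and the right-to-sink edges contribute $0$. After substituting the rescaled variables, the total cost becomes $N_1$ times $\beta \sum_j \omega_j\, \widehat{\Phi}_s(f, S'_0, \binstance{j}) + \lambda \sum_{i,j} |f(\binstance{i}) - f(\binstance{j})|\,\pi_{i,j}$, so minimizing the flow cost is the same (up to the positive constant $N_1$, which does not change the $\argmin$) as minimizing the joint objective over $\myvec{\omega}$ and $\pi$ simultaneously.

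Next I would translate each flow constraint into an LP constraint. Non-negativity of the flow gives $\omega_j \geq 0$ and $\pi_{i,j} \geq 0$. Conservation at a left node $\ell_j$ reads $\widetilde{\omega}_j = \sum_i \widetilde{\pi}_{i,j}$, i.e.\ $\omega_j = \sum_i \pi_{i,j}$, which is exactly the second marginal condition defining $\coupling$. The capacity-$1$ edges from each $r_i$ to the sink, together with total flow $d = N_1$ and conservation at the sink, force every such edge to be saturated; conservation at a right node $r_i$ then yields $\sum_j \widetilde{\pi}_{i,j} = 1$, i.e.\ $\sum_j \pi_{i,j} = 1/N_1$, the first marginal condition. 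Finally, conservation at the source gives $\sum_j \widetilde{\omega}_j = N_1$, i.e.\ $\sum_j \omega_j = 1$, the simplex constraint on $\myvec{\omega}$. Hence feasible flows are in bijection with pairs $(\myvec{\omega}, \pi)$ where $\myvec{\omega}$ lies on the probability simplex and $\pi$ is a coupling in $\coupling$ between $\background$ and $\background'_{\bm{\omega}}$.

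It then remains to connect the joint minimization to the algorithm's two-stage form. I would apply Lemma~\ref{lemma:max_sum} with $x = \myvec{\omega}$, $y = \pi$, $h(\myvec{\omega}) = \beta \sum_j \omega_j\, \widehat{\Phi}_s(f, S'_0, \binstance{j})$, and $g(\pi) = \lambda \sum_{i,j} |f(\binstance{i}) - f(\binstance{j})|\,\pi_{i,j}$, noting that the domain has the required sequential structure: one chooses $\myvec{\omega}$ on the simplex, then $\pi$ in the $\myvec{\omega}$-dependent transport polytope $\coupling$. The inner minimization $\min_\pi g(\pi)$ is exactly $\lambda\W$ by the definition of the output-space Wasserstein distance, so the joint minimum equals $\min_{\myvec{\omega}} [\beta \sum_j \omega_j\, \widehat{\Phi}_s(f, S'_0, \binstance{j}) + \lambda\W]$, the objective of Algorithm~\ref{alg:weights}. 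The main obstacle is the careful argument that the unit-capacity sink edges must be \emph{saturated}, so that the marginal $\sum_j \pi_{i,j} = 1/N_1$ is genuinely \emph{forced} rather than merely permitted; a secondary point is verifying the compactness and continuity hypotheses of Lemma~\ref{lemma:max_sum} so that the minima are attained and the two-stage decomposition is legitimate rather than an infimum identity.
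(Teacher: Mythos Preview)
Your proposal is correct and follows essentially the same approach as the paper: identify feasible flows with pairs $(\myvec{\omega},\pi)$ on the simplex and the coupling polytope via the rescaling $\widetilde{\omega}_j=N_1\omega_j$, $\widetilde{\pi}_{i,j}=N_1\pi_{i,j}$, argue that the unit-capacity sink edges are forced to saturation so that the $\background$-marginal constraint is genuinely imposed, recognize the cost as $N_1$ times the joint objective, and then apply Lemma~\ref{lemma:max_sum} to collapse the joint minimum into the two-stage form whose inner problem is exactly $\lambda\W$. The only cosmetic difference is ordering (you treat the objective before the constraints, the paper does the reverse), and you are slightly more explicit about the saturation step and the compactness/continuity check, both of which the paper also handles.
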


\begin{proof}

We begin by showing that the flow conservation constraints in the MCF imply that $\pi$ is a coupling measure (\ie $\pi \in \coupling$), 
and $\bm{\omega}$ is constrained to the probability simplex $\Delta(N_1)$. Applying the conservation law on the left-side of the graph 
leads to the conclusion that the flows entering vertices $\ell_j$ must sum up to $N_1$ $$\sum_{j=1}^{N_1}\widetilde{\omega}_j=N_1.$$
This implies that $\bm{\omega}$ is must be part of the probability simplex.
By conservation, the amount of flow that leaves a specific vertex $\ell_j$ must also be $\widetilde{\omega}_j$, hence
$$\sum_i \widetilde{\pi}_{ij}=\widetilde{\omega}_j.$$
For any edge outgoing from $r_i$ to the sink $t$, the flow must be exactly $1$. This is because we have $N_1$ edges with capacity $c(e)=1$ going into the sink and the sink must receive an incoming flow of $N_1$. As a consequence of the conservation law on a specific vertex $r_i$, the amount of flow that goes into each $r_i$ is also 1
$$\sum_j \widetilde{\pi}_{ij}=1.$$
Putting everything together, from the conservation laws on $\graph$, we have that $\bm{\omega} \in \Delta(N_1)$, and $\pi\in\coupling$.
Now, to make the parallel between the MCF and
Algorithm \ref{alg:weights}, we must use Lemma \ref{lemma:max_sum}.
Note that $\bm{\omega}$ is restricted to the probability simplex, while $\pi$ is restricted to be a coupling measure.
Importantly, the set of all possible coupling measures $\coupling$ is different for each $\bm{\omega}$ because 
$\background'_{\bm{\omega}}$ depends on $\bm{\omega}$. Hence, the domain has the same structure as the ones tackled in Lemma 
\ref{lemma:max_sum} (where $x\in \mathcal{X}$ becomes $\bm{\omega} \in \Delta(N_1))$ and $y\in\mathcal{Y}_x$ becomes $\pi\in\coupling$). Also, the set
of possible $\bm{\omega}$ and $\pi$ is a bounded simplex in $\R^{N_1(N_1+1)}$ so it is compact, and the objective function of the MCF is linear, thus continuous. 
Hence, we can apply the Lemma \ref{lemma:max_sum} to the MCF.

\begin{align*}
    \min_{f}\sum_{e\in\mathcal{E}}f(e)a(e)&=
    \min_{\widetilde{\omega},\widetilde{\pi}}
    \sum_{j=1}^{N_1} \beta\widetilde{\omega}_j \widehat{\Phi}_s(f, S_0', \binstance{j}) + \lambda \sum_{i,j} \widetilde{\pi}_{ij} |f(\binstance{i})-f(\binstance{j})|\\
    &=
    \min_{\widetilde{\omega},\widetilde{\pi}}\frac{N_1}{N_1}\bigg(
    \beta\sum_{j=1}^{N_1} \widetilde{\omega}_j \widehat{\Phi}_s(f, S_0', \binstance{j}) + \lambda \sum_{i,j} \widetilde{\pi}_{ij}|f(\binstance{i})-f(\binstance{j})|\bigg)\\
    &=
    N_1\min_{\widetilde{\omega},\widetilde{\pi}}
    \bigg(\beta\sum_{j=1}^{N_1} \frac{\widetilde{\omega}_j}{N_1} \widehat{\Phi}_s(f, S_0', \binstance{j}) + \lambda \sum_{i,j} \frac{\widetilde{\pi}_{ij}}{N_1}|f(\binstance{i})-f(\binstance{j})|\bigg)\\
    &=
    N_1\min_{\bm{\omega}\in\Delta(N_1),\pi\in\coupling}
    \bigg(\beta \sum_{j=1}^{N_1} \omega_j\widehat{\Phi}_s(f, S_0', \binstance{j}) + \lambda \sum_{i,j}\pi_{i,j} |f(\binstance{i})-f(\binstance{j})|\bigg)\\
    &=
    N_1 \min_{\bm{\omega}\in\Delta(N_1),\pi\in\coupling}\bigg(
    h(\omega) + g(\pi)\bigg)\\
    &=
    N_1 \min_{\bm{\omega}\in\Delta(N_1)}\bigg(
     h(\omega) + \min_{\pi\in\Delta(\background,\background')}g(\pi)\bigg)\tag{cf. Lemma \ref{lemma:max_sum}} \\\
    &=
    N_1\min_{\bm{\omega}\in\Delta(N_1)}\bigg(
    \beta\sum_{j=1}^{N_1} \omega_j \widehat{\Phi}_s(f, S_0', \binstance{j}) + 
    \lambda \min_{\pi\in\coupling}\sum_{i,j}\pi_{i,j} |f(\binstance{i})-f(\binstance{j})|\bigg)\\
    &=
    N_1\min_{\bm{\omega}\in\Delta(N_1)}\bigg(
    \beta\sum_{j=1}^{N_1} \omega_j \widehat{\Phi}_s(f, S_0', \binstance{j}) + \lambda\, \W\bigg)
\end{align*}
which (up to a multiplicative constant $N_1$) is a solution of the linear program of Algorithm \ref{alg:weights}.
\end{proof}

\section{Shapley Values}

\subsection{Local Shapley Values (LSV)}\label{sec:LSV_more}
We introduce Local Shapley Values (LSV) more formally. First, as explained earlier, Shapley values are based on coalitional game theory
where the different features work together toward a common outcome $f(\myvec{x})$. In a game, the features can either be present or absent,
which is simulated by replacing some features with a baseline value $\myvec{z}$.

\begin{definition}[The Replace Function]
Let $\myvec{x}$ be an input of interest $\myvec{x}$, $S\subseteq \{1, 2,\ldots,d\}$ be a subset of input features that are considered 
active, and $\myvec{z}$ be a baseline input, then the replace-function $\myvec{r}_S:\R^d\times \R^d\rightarrow \R^d$ is defined as
\begin{equation}
    r_S(\myvec{z}, \myvec{x})_i = 
        \begin{cases}
        x_i & \text{if  } i \in S\\
        z_i & \text{otherwise}.
    \end{cases}
\label{eq:replace_fun}
\end{equation}
We note that this function is meant to ``activate'' the features in $S$.
\end{definition}
Now, if we let $\pi$ be a random permutation of $d$ features, and $\pi_i$ denote all features that appear before 
$i$ in $\pi$, the LSV are computed via
\begin{equation}
    \phi_i(f, \myvec{x}, \myvec{z}) := 
    \E_{\pi\sim \Omega}\big[\, f(\,\myvec{r}_{\pi_i\cup \{i\}}(\myvec{z}, \myvec{x})\,) - 
    f(\,\myvec{r}_{\pi_i}(\myvec{z}, \myvec{x})\,)\big],\quad i=1,2,\ldots,d,
    \label{eq:unit_shap}
\end{equation}where $\Omega$ is the uniform distribution over $d!$ permutations. Observe that the computation of LSV is scales poorly
with the number of features $d$ hence model-agnostic computations are only possible with datasets with few features such as COMPAS and
Adult-Income. For datasets with larger amounts of features the \texttt{TreeExplainer} algorithm \citep{lundberg2020local} can be used
to compute the LSV (cf. Equation \ref{eq:unit_shap}) in polynomial time given that one is explaining a tree-based model.

\subsection{Convergence}\label{sec:convergence}
As a reminder, we are interested in estimating the GSV $\bm{\Phi}\equiv\bm{\Phi}(f,\foreground, \background)$ which
requires estimating expectations w.r.t the foreground and background distributions. Said estimations can be conducted 
with Monte-Carlo where we sample $M$ instances
\begin{equation}
    S_0\sim \foreground^M\qquad
    S_1\sim \background^M,
\end{equation}
and compute the plug-in estimates
\begin{equation}
\begin{aligned}
    \widehat{\bm{\Phi}}(f, S_0, S_1) 
    &:= \bm{\Phi}(f, \,\mathcal{C}(S_0, \bm{1}/M) , 
    \,\mathcal{C}(S_1, \bm{1}/M)) \\
    &= \frac{1}{M^2} \sum_{\instance{i}\in S_0}
    \sum_{\binstance{j}\in S_1}\bm{\phi}(f, \instance{i}, \binstance{j}).
\end{aligned}
\end{equation}
We now show that, $\widehat{\bm{\Phi}}(f, S_0, S_1)$ is a 
consistent and asymptotically normal estimate of $\bm{\Phi}(f,\foreground, \background)$

\begin{proposition}
    Let $f:\mathcal{X}\rightarrow [0, 1]$ be a black box, $\foreground$ and 
    $\background$ be distributions on $\mathcal{X}$, and 
    $\widehat{\bm{\Phi}}\equiv\widehat{\bm{\Phi}}(f, S_0, S_1)$ be 
    the plug-in estimate of $\bm{\Phi}\equiv\bm{\Phi}(f,\foreground, \background)$, 
    the following holds for any $\delta \in\,\,]0,1[$ and $k=1,2\ldots,d$
    $$
    \lim_{M\rightarrow \infty}
    \mathbb{P}\bigg(\,|\widehat{\Phi}_k - \Phi_k| \geq 
    \frac{F_{\unormal}^{-1}(1-\delta/2)}{2\sqrt{M}}\sqrt{\sigma_{10}^2+\sigma_{01}^2}
    \,\bigg) =\delta,
    $$
    where $F_{\unormal}^{-1}$ is the inverse Cumulative Distribution Function (CDF)
    of the standard normal distribution, 
    $\sigma_{10}^2=\V_{\myvec{x}\sim \foreground}
    [\,\E_{\myvec{z}\sim \background}[\phi_i(f, \myvec{x}, \myvec{z}))]\,]$
    and $\sigma_{01}^2=\V_{\myvec{z}\sim\background}
    [\,\E_{\myvec{x}\sim \foreground}[\phi_i(f, \myvec{x}, \myvec{z}))]\,]$.
    \label{prop:convergence}
\end{proposition}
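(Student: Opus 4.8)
The plan is to recognise $\widehat{\Phi}_k(f,S_0,S_1)=\frac{1}{M^2}\sum_{\instance{i}\in S_0}\sum_{\binstance{j}\in S_1}\phi_k(f,\instance{i},\binstance{j})$ as a \emph{two-sample $U$-statistic} of degree $(1,1)$ with kernel $\phi_k$, built from the two independent i.i.d.\ samples $S_0\sim\foreground^M$ and $S_1\sim\background^M$, and then to invoke the central limit theorem for such statistics. A preliminary observation that makes everything clean is that $f:\mathcal{X}\rightarrow[0,1]$ forces each LSV to be an average of differences $f(\cdot)-f(\cdot)\in[-1,1]$, hence $|\phi_k|\leq 1$; all moments are therefore finite and, in particular, $\sigma_{10}^2,\sigma_{01}^2<\infty$, so no integrability hypotheses are needed. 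Taking expectations also shows $\E[\widehat{\Phi}_k]=\Phi_k$, so the estimator is unbiased and consistency will follow once its variance is shown to vanish.

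First I would carry out the Hoeffding decomposition (the two-sample Hájek projection). Writing $u(\myvec{x}):=\E_{\myvec{z}\sim\background}[\phi_k(f,\myvec{x},\myvec{z})]$ and $v(\myvec{z}):=\E_{\myvec{x}\sim\foreground}[\phi_k(f,\myvec{x},\myvec{z})]$, a direct conditioning computation ($\E[\widehat{\Phi}_k-\Phi_k\mid\instance{i}]=\frac{1}{M}(u(\instance{i})-\Phi_k)$, and symmetrically in $\binstance{j}$) yields
\begin{equation}
\widehat{\Phi}_k-\Phi_k=\underbrace{\frac{1}{M}\sum_{\instance{i}\in S_0}\big(u(\instance{i})-\Phi_k\big)}_{A_M}\;+\;\underbrace{\frac{1}{M}\sum_{\binstance{j}\in S_1}\big(v(\binstance{j})-\Phi_k\big)}_{B_M}\;+\;R_M,
\end{equation}
where $R_M$ is the fully degenerate remainder whose conditional expectation given any single $\instance{i}$ or $\binstance{j}$ is zero. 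Here $A_M$ and $B_M$ are centred means of bounded i.i.d.\ variables, with $\V[A_M]=\sigma_{10}^2/M$ and $\V[B_M]=\sigma_{01}^2/M$ (matching the $\sigma_{10}^2,\sigma_{01}^2$ defined in the statement).

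The key step is to control these three pieces. Since $A_M$ and $B_M$ are functions of the \emph{disjoint} samples $S_0$ and $S_1$, they are independent; applying the ordinary CLT to each and combining via independence gives $\sqrt{M}(A_M+B_M)\overset{d}{\to}\mathcal{N}(0,\sigma_{10}^2+\sigma_{01}^2)$. For the remainder, I would compute $\V[R_M]$ directly: by the degeneracy every cross-covariance vanishes unless both the foreground and the background indices coincide, leaving only the diagonal terms and hence $\V[R_M]=O(1/M^2)$; Chebyshev's inequality then gives $\sqrt{M}\,R_M\overset{p}{\to}0$. Slutsky's theorem assembles these into $\sqrt{M}(\widehat{\Phi}_k-\Phi_k)\overset{d}{\to}\mathcal{N}(0,\sigma_{10}^2+\sigma_{01}^2)$, which already yields consistency as a by-product.

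Finally I would translate this limit into the stated two-sided probability. Standardising by the asymptotic standard deviation and using that the normal CDF $F_{\unormal}$ is continuous (so the distributional convergence passes to the tail probabilities and holds with equality in the limit), $\mathbb{P}(|\widehat{\Phi}_k-\Phi_k|\geq t)$ converges to a symmetric Gaussian tail; choosing $t$ to be $F_{\unormal}^{-1}(1-\delta/2)$ times the standard error of $\widehat{\Phi}_k$ makes that limiting tail equal to $\delta$, which is exactly the claimed threshold. The main obstacle I anticipate is the bookkeeping for the remainder term — verifying that $R_M$ is genuinely negligible at the $\sqrt{M}$ scale, i.e.\ that $\widehat{\Phi}_k$ behaves to first order like its linear projection $A_M+B_M$ — since once that is in hand, the joint normality, the use of independence of the two samples, and the conversion to the probability bound are routine consequences of the CLT and Slutsky's lemma.
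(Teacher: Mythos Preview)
Your proposal is correct and follows essentially the same approach as the paper: both recognize $\widehat{\Phi}_k$ as a two-sample $U$-statistic with bounded kernel (since $f$ maps into $[0,1]$, so $|\phi_k|\le 1$ and all moments are finite) and invoke its asymptotic normality. The only difference is that the paper cites the two-sample $U$-statistic CLT from Lee (2019, Section~3.7.1) and specializes it to equal sample sizes $N=M$, $p=1/2$, whereas you spell out that theorem's proof via the H\'ajek--Hoeffding projection and Slutsky's lemma --- a more self-contained but equivalent route.
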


\begin{proof}
The proof consists simply in noting that LSV $\phi_k(f, \instance{i},\binstance{j})$
are a function of two independent samples $\instance{i}\sim \foreground$ and $\binstance{j} \sim \background$. The model $f$
is assumed fixed and hence for any feature $k$ we can define $h(\instance{i},\binstance{j}):=\phi_k(f, \instance{i},\binstance{j})$. Now, the estimates of GSV can be rewritten
\begin{equation}
    \widehat{\Phi}_k(f, S_0, S_1) =  \frac{1}{|S_0|\,|S_1|}\sum_{\instance{i}\in S_0}\sum_{\binstance{j}\in S_1} h(\instance{i},\binstance{j}),
\end{equation}
which we recognize as a well-known class of statistics called two-samples U-statistics. Such statistics are unbiased and asymptotically normal estimates of 
\begin{equation}
    \Phi_k(f, \foreground, \background)=\Efb[h(\myvec{x}, \myvec{z})].
\end{equation}
The asymptotic normality of two-samples U-statistics is characterized by the following Theorem  \citep[Section~3.7.1]{lee2019u}.
\begin{theorem}
Let $\widehat{\Phi}_k\equiv\widehat{\Phi}_k(f, S_0, S_1)$ be a two-samples U-statistic with $|S_0|=N,|S_1|=M$, moreover let $h(\myvec{x},\myvec{z})$
have finite first and second moments, then the following holds for any $\delta \in\,\,]0,1[$
$$\lim_{\substack{
    N+M\rightarrow \infty\\
    \text{s.t.}\,N/(N+M)\rightarrow p\in (0, 1)}}\mathbb{P}\bigg(\,|\widehat{\Phi}_k - \Phi_k| \geq \frac{F_{\unormal}^{-1}(1-\delta/2)}{\sqrt{M+N}}\sqrt{\frac{\sigma_{10}^2}{p}+\frac{\sigma_{01}^2}{1-p} }\,\bigg) =\delta,$$
where $\sigma_{10}^2=\V_{\myvec{x}\sim \foreground}[\,\E_{\myvec{z}\sim \background}[h( \myvec{x}, \myvec{z})]\,]$ and $\sigma_{01}^2=\V_{\myvec{z}\sim\background}[\,\E_{\myvec{x}\sim \foreground}[h(\myvec{x}, \myvec{z})]\,]$.
\label{thm:2_Ustats_clt}
\end{theorem}
\textbf{Proposition \ref{prop:convergence}} follows from this Theorem by choosing $N=M,p=0.5$ and noticing that
having a model with bounded outputs ($f:\mathcal{X}\rightarrow [0,1]$) implies that $|h(\myvec{x}, \myvec{z})|\leq 1\,\,\forall \myvec{x},\myvec{z}\in \mathcal{X}$ which means that 
$h(\myvec{x}, \myvec{z})$ has bounded first and second moments.
\end{proof}

\subsection{Compute the LSV}\label{sec:phis}

Running Algorithm \ref{alg:weights}
requires computing the coefficients $\widehat{\Phi}_s(f, S_0', \binstance{j})$ for $j=1,2,\ldots,N_1$. To compute them, first note that they can be written in terms
of LSV for all instances in $S_0'$
\begin{equation}
        \widehat{\Phi}_s(f, S_0', \binstance{j})=
        \frac{1}{M}\sum_{\instance{i}\in S_0'}
        \phi_s(f, \instance{i}, \binstance{j}).
\end{equation}
The LSV $\phi_s(f, \instance{i}, \binstance{j})$ are computed deeply in the \texttt{SHAP} code and are not directly accessible using the current API. Hence, we had to access them using Monkey-Patching \textit{i.e.} we modified the \texttt{ExactExplainer} class so that it stores the LSV as one of its
attributes. The attribute can then be accessed
as seen in Figure \ref{fig:hack}. The code is provided as a fork the \texttt{SHAP} repository.
For the \texttt{TreeExplainer}, because its source code is in C++ and wrapped in Python, we found it simpler to simply rewrite our own
version of the algorithm in C++ so that it directly returns the LSV, instead of Monkey-Patching the \texttt{TreeExplainer}.
\begin{figure}[h]
    \centering
    \includegraphics[width=0.7\textwidth]
    {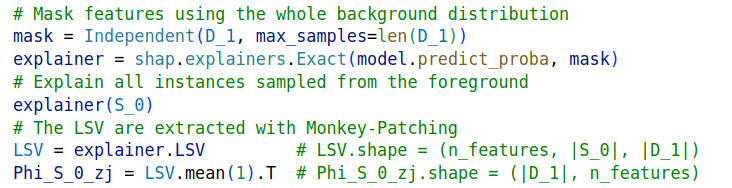}
    \caption{How we extract the LSV from the \texttt{ExactExplainer}
    via Monkey-Patching.}
    \label{fig:hack}
\end{figure}

\newpage
\section{Statistical Tests}\label{sec:tests}
\subsection{KS test}
A first test that can be conducted is a two-samples Kolmogorov-Smirnov (KS) test~\citep{massey1951kolmogorov}. If we let
\begin{equation}
    \widehat{F}_S(x) = \frac{1}{|S|}\sum_{z\in S}\mathbbm{1}(z\leq x)
\end{equation}
be the empirical CDF of observations in the set $S$. Given two sets $S$ and $S'$, the KS statistic is
\begin{equation}
    \text{KS}(S, S') = \sup_{x\in \R}|\widehat{F}_S(x) - \widehat{F}_{S'}(x)|.
\end{equation}
Under the null-hypothesis $H_0:S\sim \mathcal{D}^{|S|},S'\sim \mathcal{D}^{|S'|}$ for some univariate distribution 
$\mathcal{D}$, this statistic is expected to not be too large with high probability. Hence, when the company provides the subsets 
$S_0',S_1'$, the audit can sample their own two subsets $f(S_0),f(S_1)$ uniformly at random from $f(D_0),f(D_1)$ and compute the statistics
$\text{KS}(f(S_0),f(S'_0))$ and $\text{KS}(f(S_1),f(S'_1))$ to detect a fraud.

\subsection{Wald test}
An alternative is the Wald test, which is based on the central limit theorem. If $S_1\sim \background^M$, then the 
empirical average of the model output over $S_1$ is asymptotically normally distributed as $M$ increases
\begin{equation}
   \text{Wald}(f(S_1), f(\background)) := \frac{\frac{1}{M}\sum_{z\in f(S_1)} z - \mu}{\sigma/\sqrt{M}} \rightsquigarrow \unormal,
\end{equation}
where $\mu:=\E_{z \sim f(\background)}[z]$ and $\sigma^2:=\V_{z \sim f(\background)}[z]$ are the expected value and variance of 
the model output across the whole background. The same reasoning holds for $S_0$ and the foreground $\foreground$.
Applying the Wald test with significance $\alpha$ would detect fraud when
\begin{equation}
|\,\text{Wald}(f(S'_1), f(\background))\,| > F_{\unormal}^{-1}(1-\alpha/2),
\end{equation}
where $F_{\unormal}^{-1}$ is the inverse of the CDF of a standard normal variable.

\newpage
\section{Methodological Details}\label{sec:details}
\subsection{Toy Example}\label{sec:details:toy}
The toy dataset was constructed to closely match the results of the following empirical study comparing skeletal 
mass distributions between men and women \citep{janssen2000skeletal}. Firstly, the sex feature was sampled from a Bernoulli
\begin{equation}
    S \sim \text{Bernoulli}(0.5).
\end{equation}
According to Table 1 of \cite{janssen2000skeletal}, the average height of women participants was 163 cm while it was 177cm for men. 
Both height distributions had the same standard deviation of 7cm. Hence we sampled height via
\begin{equation}
\begin{aligned}
    &H|S\!=\!\texttt{man}\sim \mathcal{N}(177, 49)\\
    &H|S\!=\!\texttt{woman}\sim \mathcal{N}(163, 49)
\end{aligned}
\end{equation}

It was noted in \cite{janssen2000skeletal} that there was approximately a linear relationship between height and skeletal muscle 
mass for both sexes. Therefore, we computed the muscle mass $M$ as
\begin{equation}
\begin{aligned}
    &M|\{H\!=\!h,S\!=\!\text{man}\}= 0.186 h + 5 \epsilon \\
    &M|\{H\!=\!h,S\!=\!\text{woman}\}= 0.128 h + 4 \epsilon\\
    & \text{with}\,\,\, \epsilon \sim \mathcal{N}(0,1)
\end{aligned}
\end{equation}

The values of coefficients 0.186, 0.128 and noise levels 5 and 4 were chosen so the distributions of $M|S$ would approximately match that of Table 1 in \cite{janssen2000skeletal}. Finally the target was chosen
following
\begin{equation}
\begin{aligned}
    &Y|\{H\!=\!h,M\!=\!m\} \sim \text{Bernoulli}(\,P(H, M)\,) \\
    &\text{with}\,\,\, P(H, M) = \big[1+\exp\{100\!\times\!\mathbbm{1}(H<160) - 0.3(M-28)\}\,\big]^{-1}.
\end{aligned}
\end{equation}
Simply put, the chances of being hired in the past ($Y$) were impossible for individuals with a smaller height than 160cm. Moreover, 
individuals with a higher mass skeletal mass were given more chances to be admitted. Yet, individuals with less muscle mass could still 
be given the job if they displayed sufficient determination. In the end, we generated 6000 samples leading to the following disparity in $Y$.
\begin{equation}
    \Prob(Y=1|S\!=\!\text{man})=0.733 \qquad
    \Prob(Y=1|S\!=\!\text{woman})=0.110.
\end{equation}

\newpage
\subsection{Real Data}\label{sec:details:real}
The datasets were first divided into train/test subsets
with ratio $\frac{4}{5}:\frac{1}{5}$. The models were
trained on the training set and evaluated on the test set. All categorical features for COMPAS, Adult, and Marketing were one-hot-encoded which resulted in a total of 11, 40, and 61 columns for each dataset respectively.
A simple 50 steps random search was conducted to fine-tune the hyper-parameters with cross-validation on the training set.
The resulting test set performance and demographic parities for all models and datasets, aggregated over 5 random data splits, are reported in Tables
\ref{tab:perfs} and \ref{tab:parity} respectively.

\begin{table}
  \caption{Models Test Accuracy \% (mean $\pm$ stddev).}
  \label{tab:perfs}
  \centering
  \begin{tabular}{lccc}
    \toprule
         & mlp & rf     & xgb \\
    \midrule
    COMPAS &  $68.2\pm0.9$ & $67.7\pm0.8$ & $68.6\pm0.8$    \\
    Adult  &  $85.6\pm0.3$ & $86.3\pm0.2$ & $87.1\pm0.1$ \\
    Marketing &   & $91.1\pm0.1$ & $91.4\pm0.3$   \\
    Communities & & $83\pm2$ & $82\pm2$ \\
    \bottomrule
  \end{tabular}
\end{table}

\begin{table}
  \caption{Models Demographic Parity (mean $\pm$ stddev).}
  \label{tab:parity}
  \centering
  \begin{tabular}{lccc}
    \toprule
         & mlp & rf     & xgb \\
    \midrule
    COMPAS &  $\minus0.12\pm0.01$ & $\minus0.11\pm0.01$ & $\minus0.11\pm0.02$    \\
    Adult  &  $\minus0.20\pm0.01$ & $\minus0.19\pm0.01$ & $\minus0.192\pm0.004$ \\
    Marketing &  & $\minus0.104\pm0.005$ & $\minus0.11\pm0.01$    \\
    Communities &  & $\minus0.50\pm0.01$ & $\minus0.54\pm0.02$ \\
    \bottomrule
  \end{tabular}
\end{table}

\newpage
\section{Additional Results}\label{sec:add_res}

\subsection{Toy Example}\label{sec:add_res:toy_example}

Figure \ref{fig:toy_example_more} presents additional results for the toy example. More specifically, Figure \ref{fig:toy_example_more}~(a) 
illustrates the evolution of the detection and amplitude of the sensitive feature during the genetic algorithm. We note that beyond 90 iterations, 
the detector is systematically able to assert that the dataset is manipulated. The smallest value of amplitude that can be reached via the genetic 
algorithm without being detected is around $0.05$. Figures \ref{fig:toy_example_more}~(b)~(c) and (d) show the CDFs of $f(S_1')$ where $S_1'$ is chosen via the genetic algorithm, brute-force, and Fool SHAP respectively. We observe that Fool SHAP is the method where the resulting CDF for $f(S_1')$ is closest to the 
CDF for $f(D_1)$. This is why the audit is not able to detect fraud using statistical tests. The fact that Fool SHAP generates fake CDFs that are close to 
the data CDFs is a consequence of minimizing the Wasserstein distance. These results highlight the superiority of Fool SHAP compared to the brute-force 
approach and the genetic algorithm.

\begin{figure}[t]
     \centering
     \begin{subfigure}[c]{0.49\textwidth}
         \centering
         \includegraphics[width=\textwidth]
         {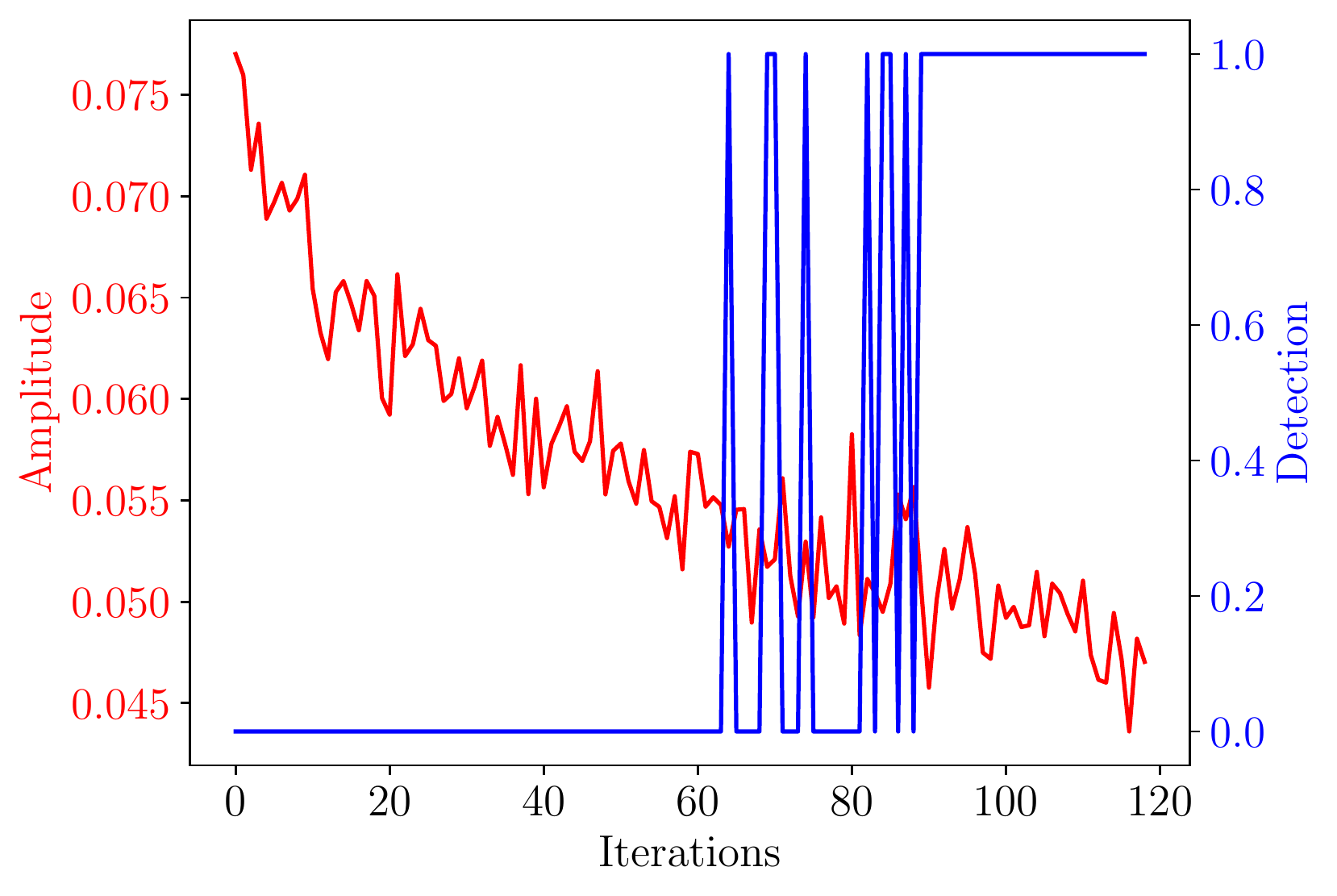}
         \caption{Iterations of genetic algorithm.}
     \end{subfigure}
    \hfill
    \begin{subfigure}[c]{0.49\textwidth}
         \centering
         \includegraphics[width=\textwidth]
         {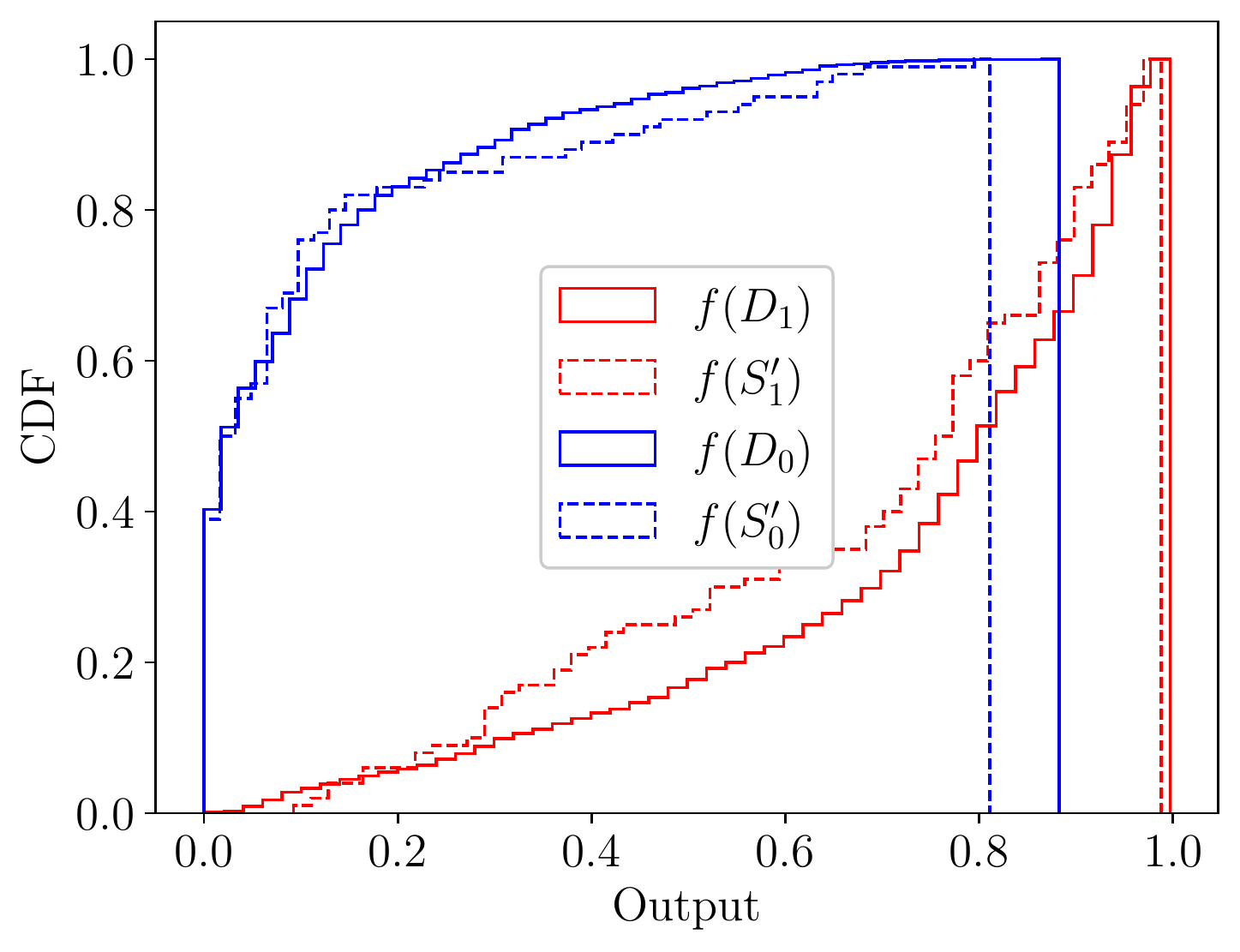}
         \caption{CDFs for genetic algorithm.}
     \end{subfigure}
     \begin{subfigure}[c]{0.49\textwidth}
         \centering
         \includegraphics[width=\textwidth]
          {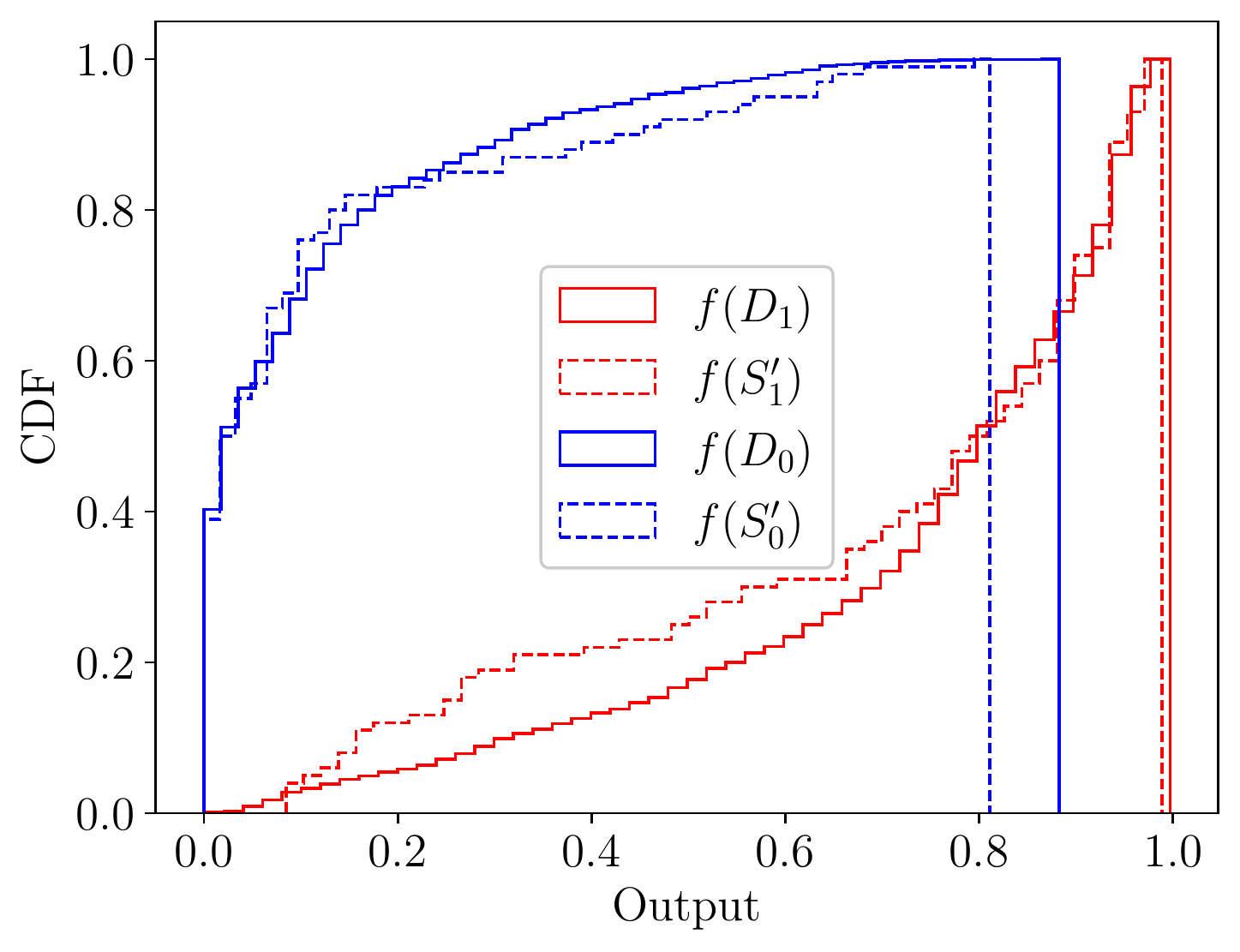}
         \caption{CDFs for brute-force.}
     \end{subfigure}
    \hfill
    \begin{subfigure}[c]{0.49\textwidth}
         \centering
         \includegraphics[width=\textwidth]
         {Images/toy/fool_CDFs.pdf}
        \caption{CDFs for Fool SHAP.}
     \end{subfigure}
    \caption{}
    \label{fig:toy_example_more}
    \vskip -0.2in
\end{figure}

\newpage
\subsection{Examples of Attacks}\label{sec:add_res:example}

In this section, we present 8 specific examples of the attacks
that were conducted on COMPAS, Adult, Marketing, and Communities.

\begin{figure}[h]
     \centering
     \begin{subfigure}[c]{0.49\textwidth}
         \centering
         \includegraphics[width=\textwidth]
         {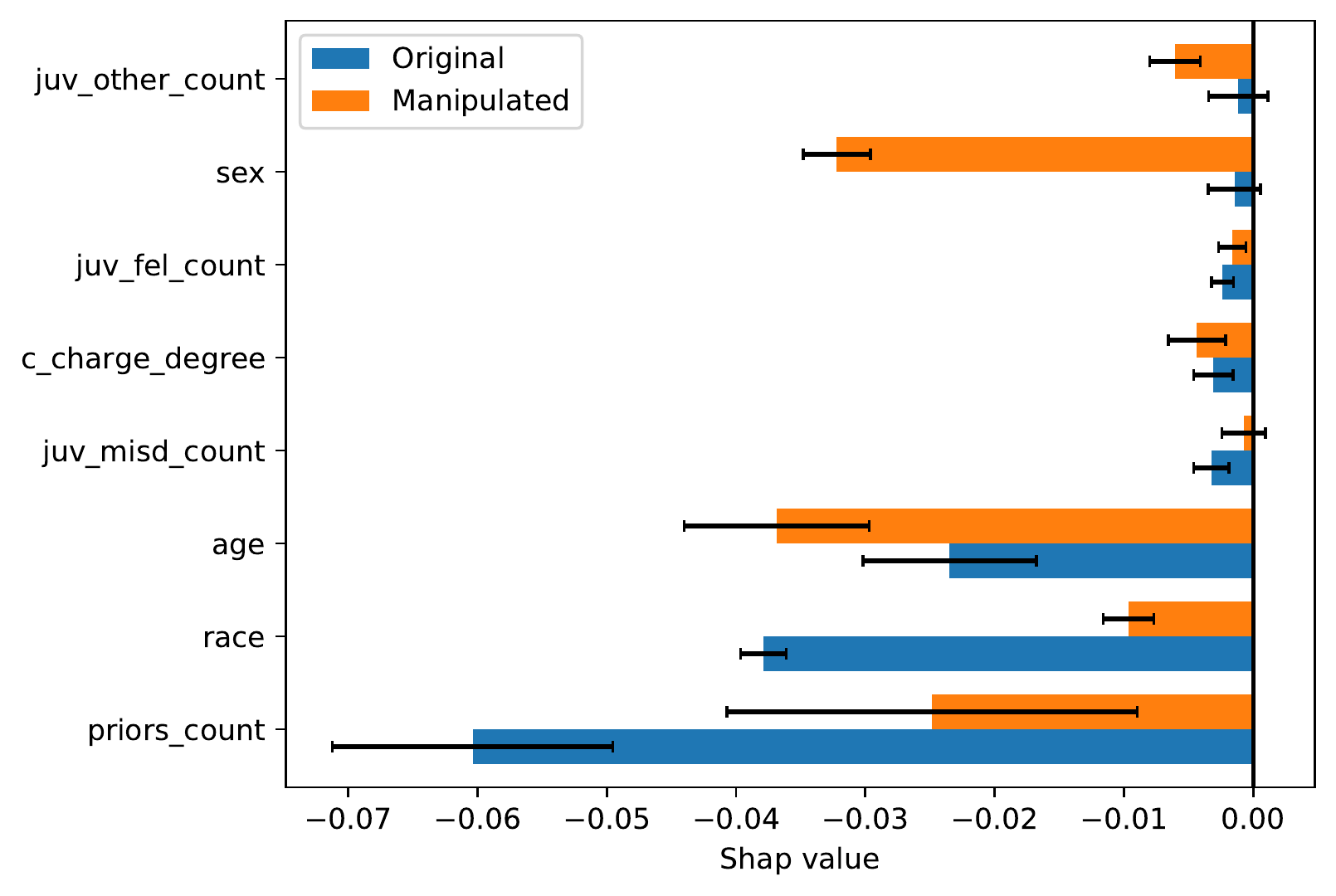}
     \end{subfigure}
    \hfill
    \begin{subfigure}[c]{0.44\textwidth}
         \centering
         \includegraphics[width=\textwidth]
         {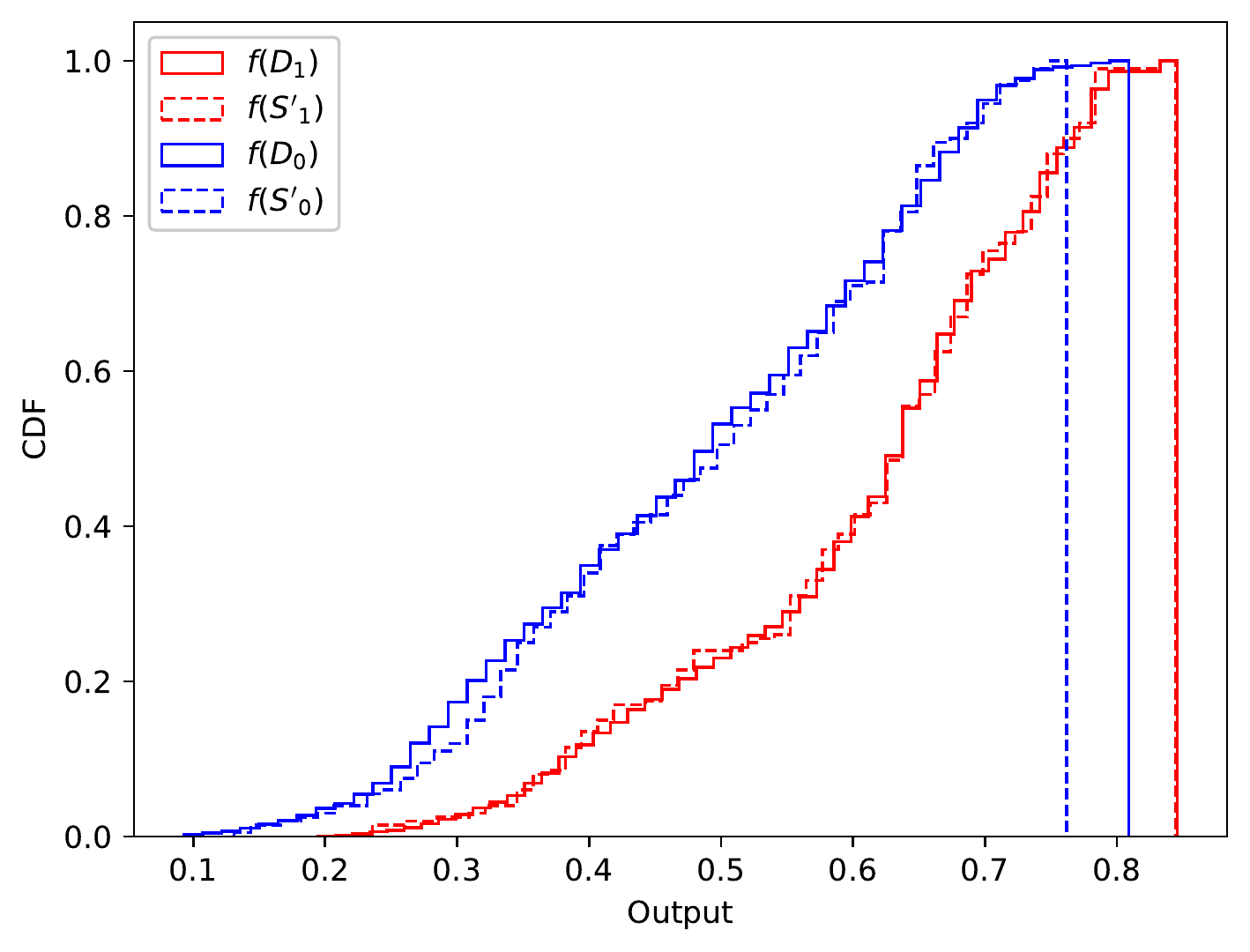}
     \end{subfigure}
    \caption{Attack of RF fitted on COMPAS.
    Left: GSV before and after the attack with $M=200$. As a reminder, the sensitive attribute is \texttt{race}. 
    Right: Comparison of the CDF of the misleading subsets $f(S'_0),f(S'_1)$ and the CDF over the whole data. $f(D_0),f(D_1)$.}
    \label{fig:add_res_rf_compas}
\end{figure}

\begin{figure}[h]
     \centering
     \begin{subfigure}[c]{0.49\textwidth}
         \centering
         \includegraphics[width=\textwidth]
         {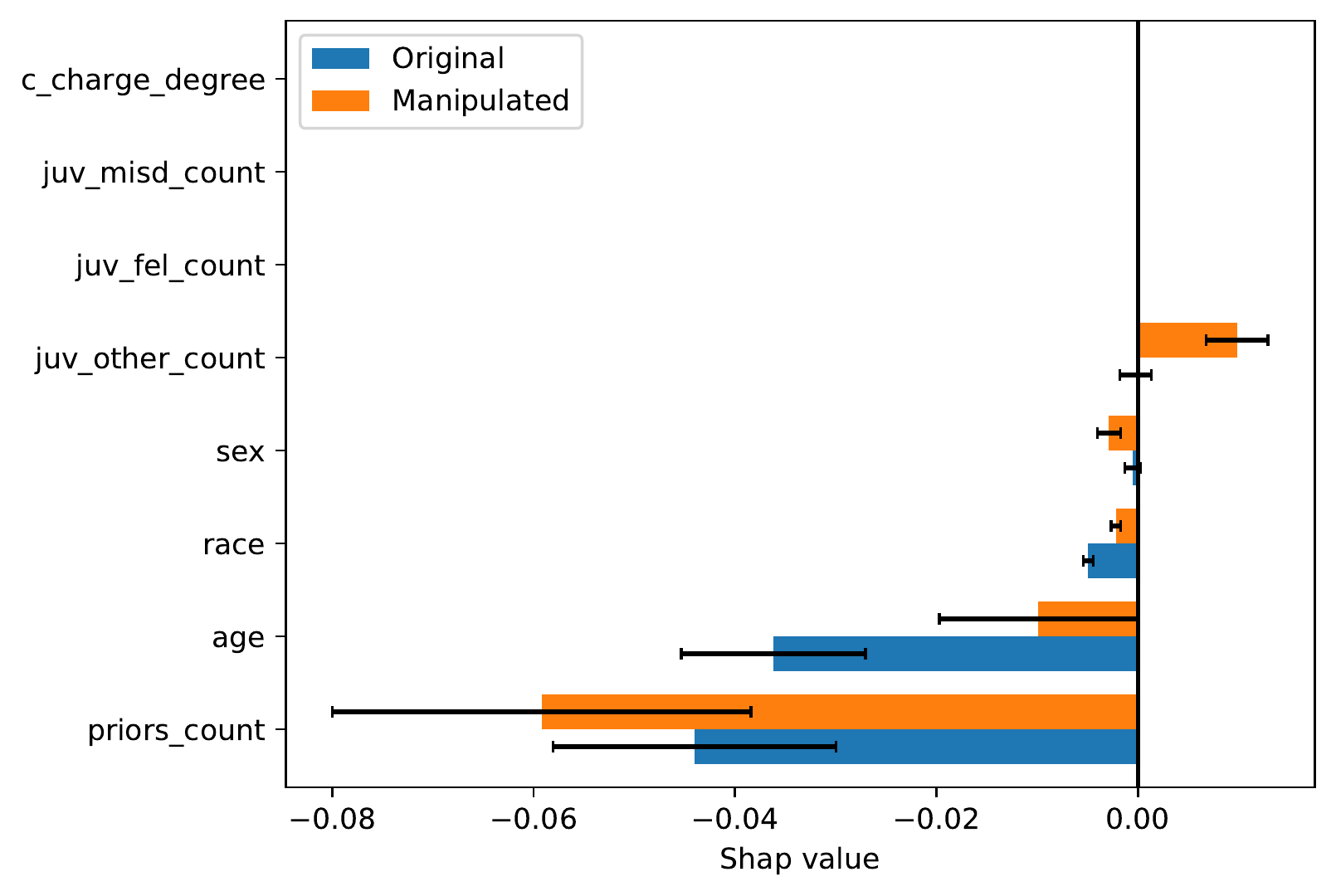}
     \end{subfigure}
    \hfill
    \begin{subfigure}[c]{0.44\textwidth}
         \centering
         \includegraphics[width=\textwidth]
         {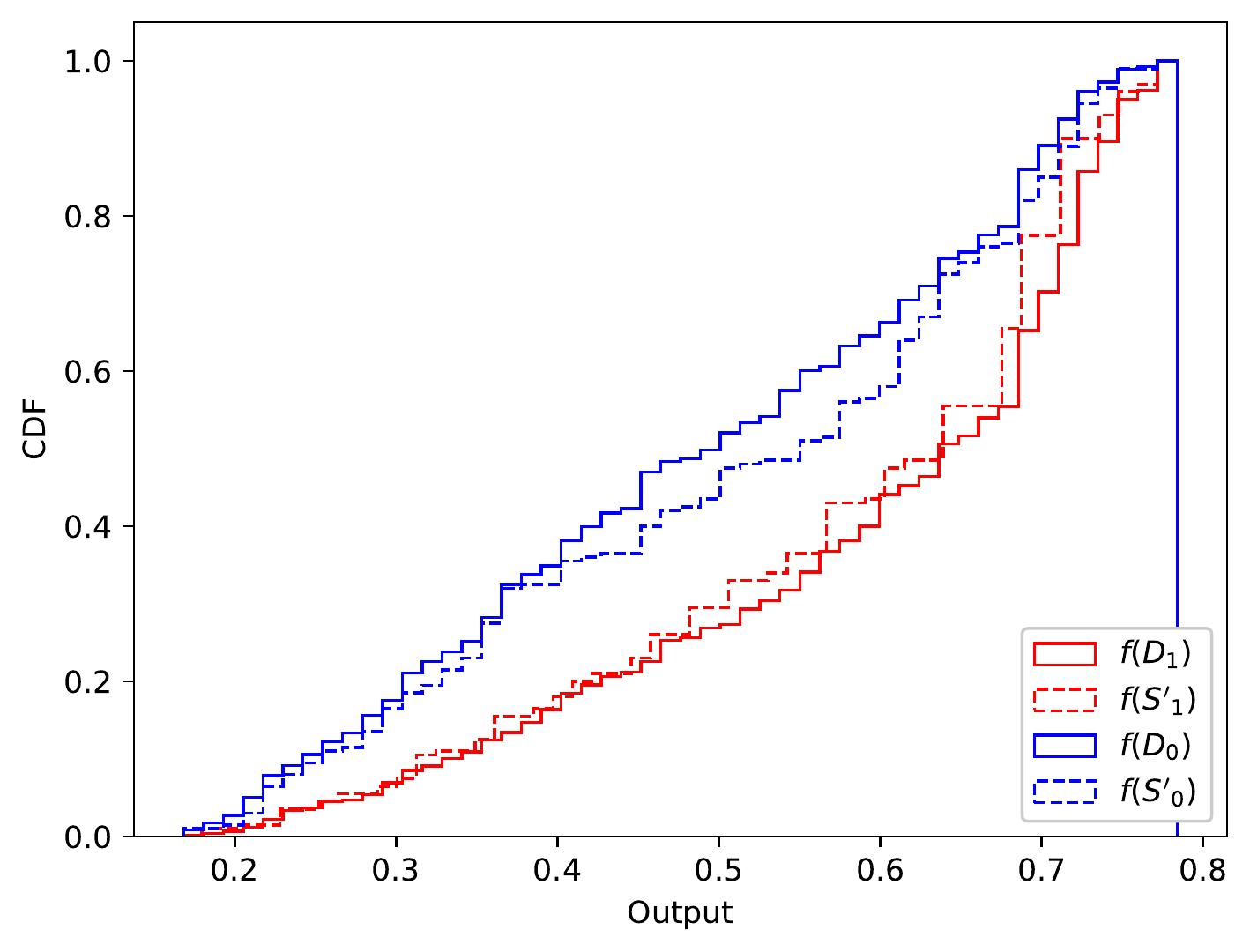}
     \end{subfigure}
    \caption{Attack of XGB fitted on COMPAS.
    Left: GSV before and after the attack with $M=200$. 
    As a reminder, the sensitive attribute is \texttt{race}. Right: Comparison of the CDF of the 
    misleading subsets $f(S'_0),f(S'_1)$ and the CDF over the whole data. $f(D_0),f(D_1)$.}
    \label{fig:add_res_xgb_compas}
\end{figure}

\begin{figure}[h]
     \centering
     \begin{subfigure}[c]{0.49\textwidth}
         \centering
         \includegraphics[width=\textwidth]
         {Images/adult_income/xgb/attack_rseed_2_B_size_2000.pdf}
     \end{subfigure}
    \hfill
    \begin{subfigure}[c]{0.44\textwidth}
         \centering
         \includegraphics[width=\textwidth]
         {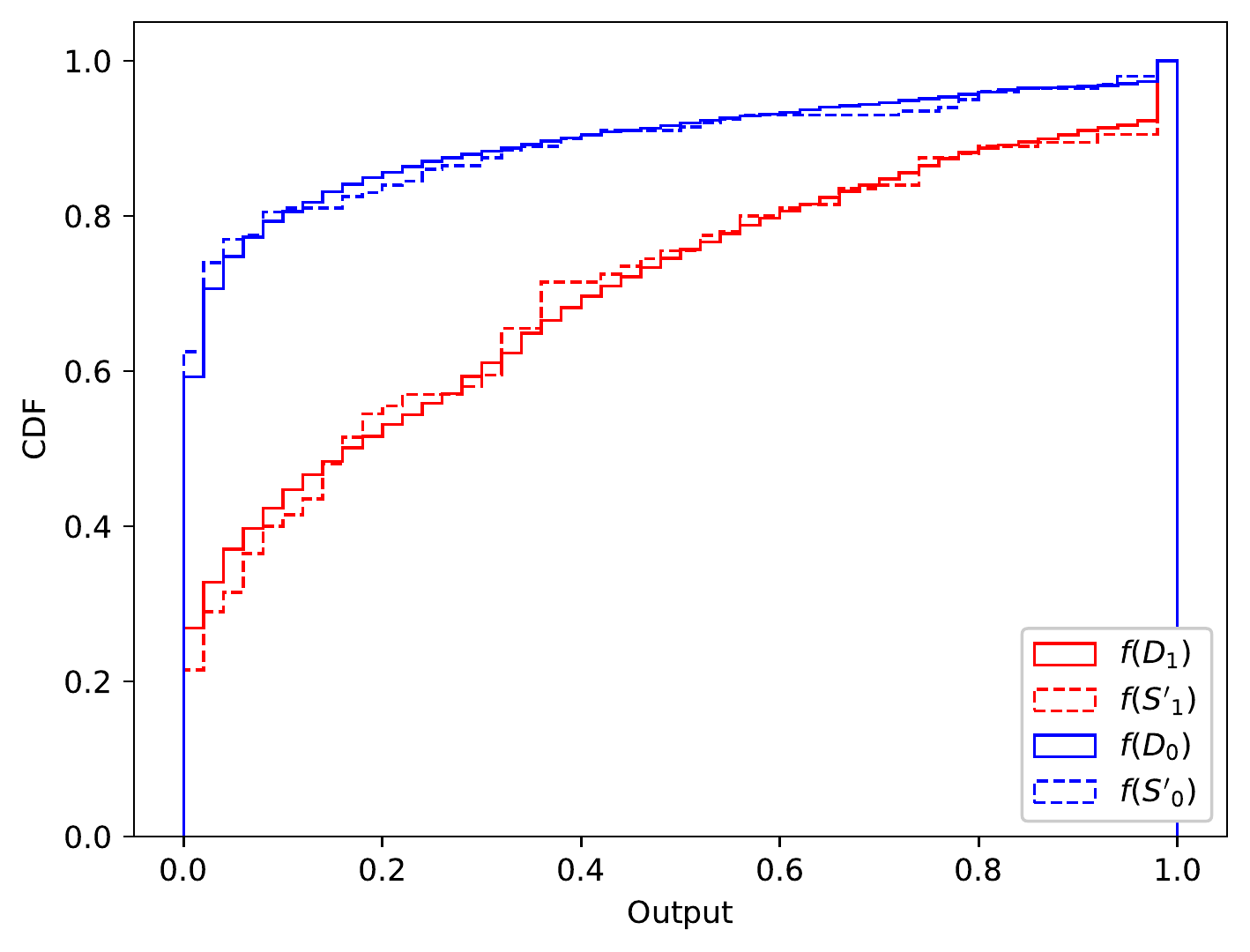}
     \end{subfigure}
    \caption{Attack of XGB fitted on Adults.
    Left: GSV before and after the attack with $M=200$.  As a reminder, the sensitive attribute is 
    \texttt{gender}. Right: Comparison of the CDF of the misleading subsets $f(S'_0),f(S'_1)$ and the CDF over the 
    whole data. $f(D_0),f(D_1)$.}
    \label{fig:add_res_xgb_adult}
\end{figure}

\begin{figure}[H]
     \centering
     \begin{subfigure}[c]{0.49\textwidth}
         \centering
         \includegraphics[width=\textwidth]
         {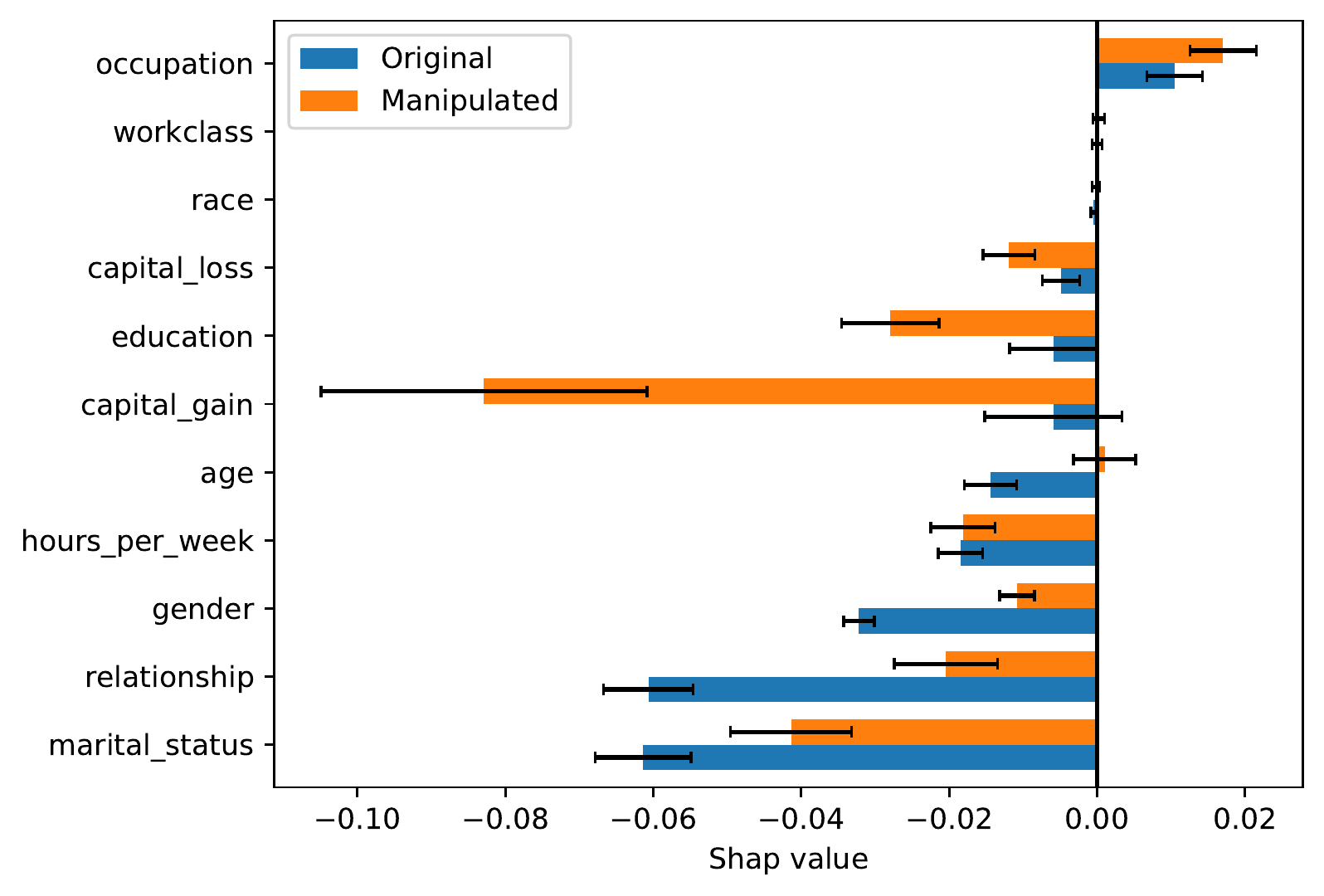}
     \end{subfigure}
    \hfill
    \begin{subfigure}[c]{0.44\textwidth}
         \centering
         \includegraphics[width=\textwidth]
         {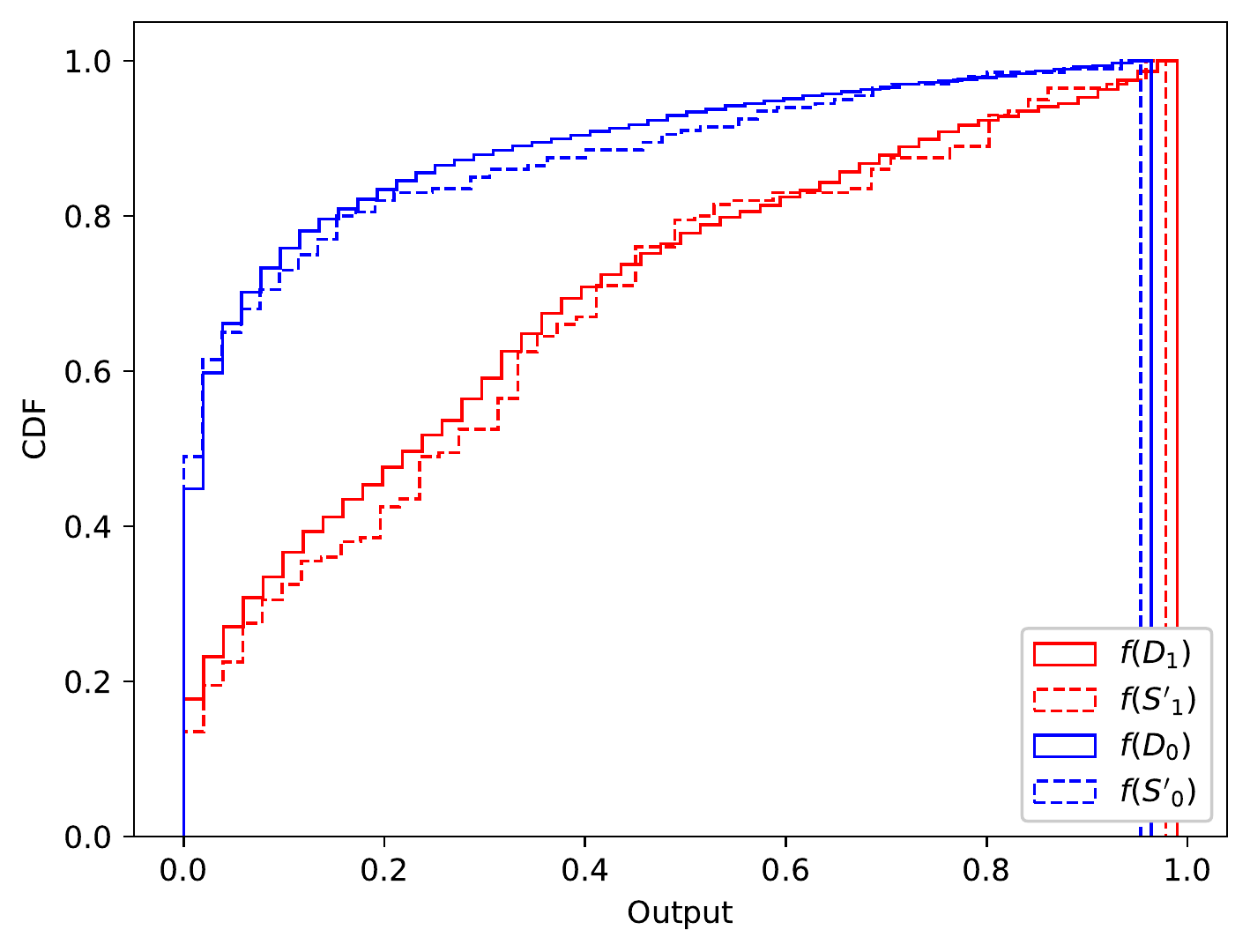}
     \end{subfigure}
    \caption{Attack of RF fitted on Adults.
    Left: GSV before and after the attack with $M=200$.  As a reminder, the sensitive attribute is \texttt{gender}. 
    Right: Comparison of the CDF of the misleading subsets $f(S'_0),f(S'_1)$ and the CDF over the whole data. $f(D_0),f(D_1)$.}
    \label{fig:add_res_rf_adult}
\end{figure}

\begin{figure}[H]
     \centering
     \begin{subfigure}[c]{0.49\textwidth}
         \centering
         \includegraphics[width=\textwidth]
         {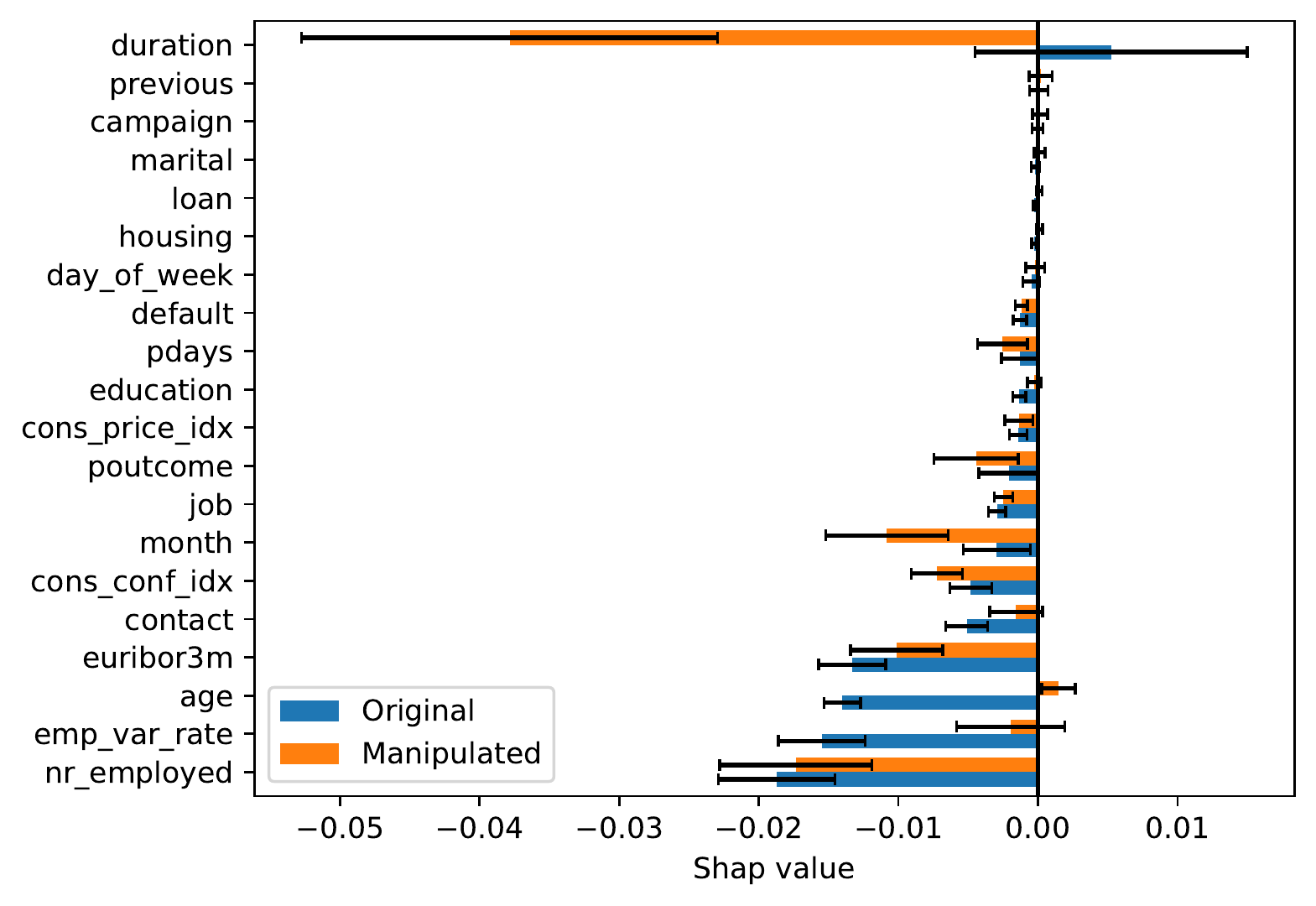}
     \end{subfigure}
    \hfill
    \begin{subfigure}[c]{0.44\textwidth}
         \centering
         \includegraphics[width=\textwidth]
         {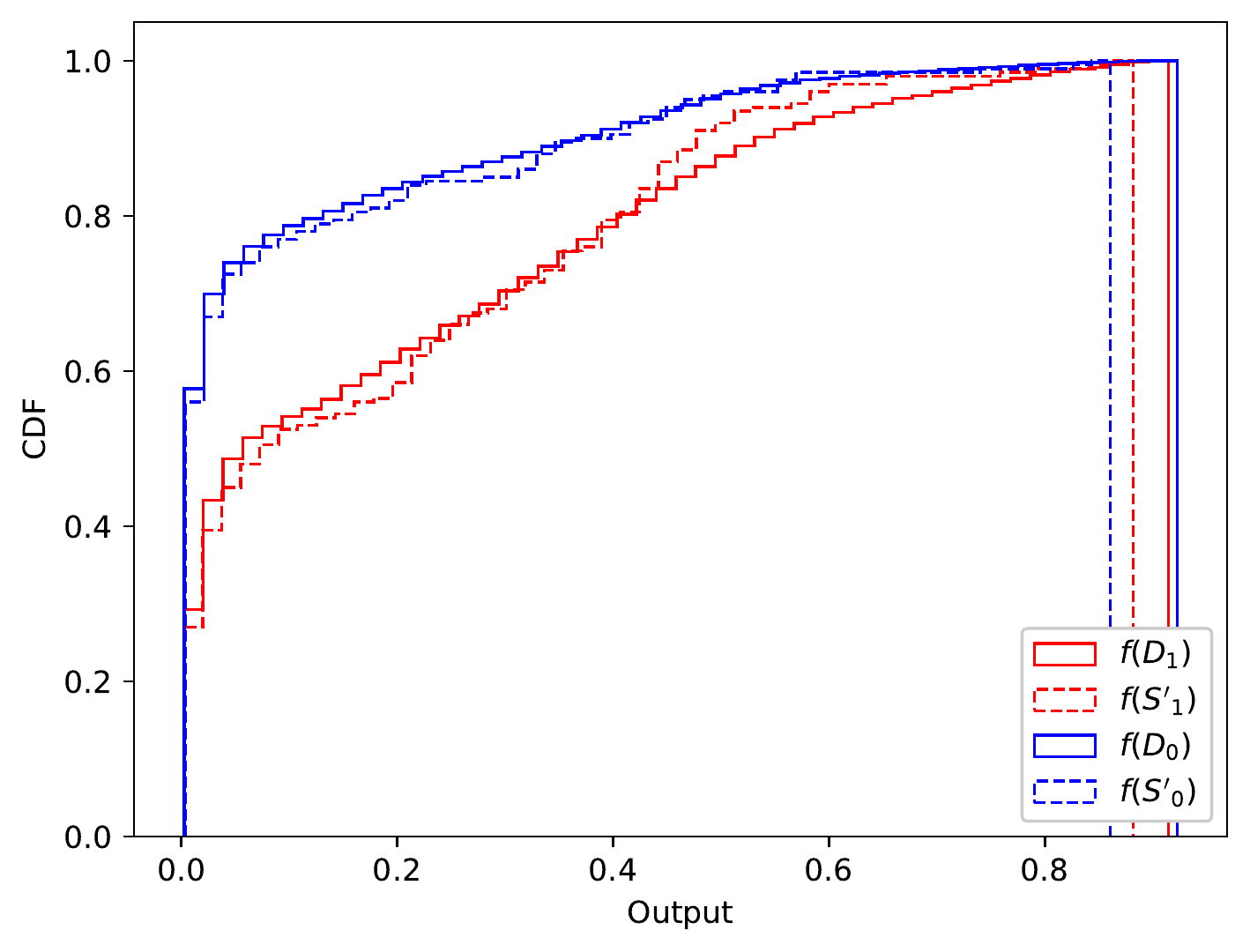}
     \end{subfigure}
    \caption{Attack of RF fitted on Marketing.
    Left: GSV before and after the attack with $M=200$. As a reminder, the sensitive attribute is \texttt{age}. 
    Right: Comparison of the CDF of the misleading subsets $f(S'_0),f(S'_1)$ and the CDF over 
    the whole data. $f(D_0),f(D_1)$.}
    \label{fig:add_res_rf_marketing}
\end{figure}

\begin{figure}[H]
     \centering
     \begin{subfigure}[c]{0.49\textwidth}
         \centering
         \includegraphics[width=\textwidth]
         {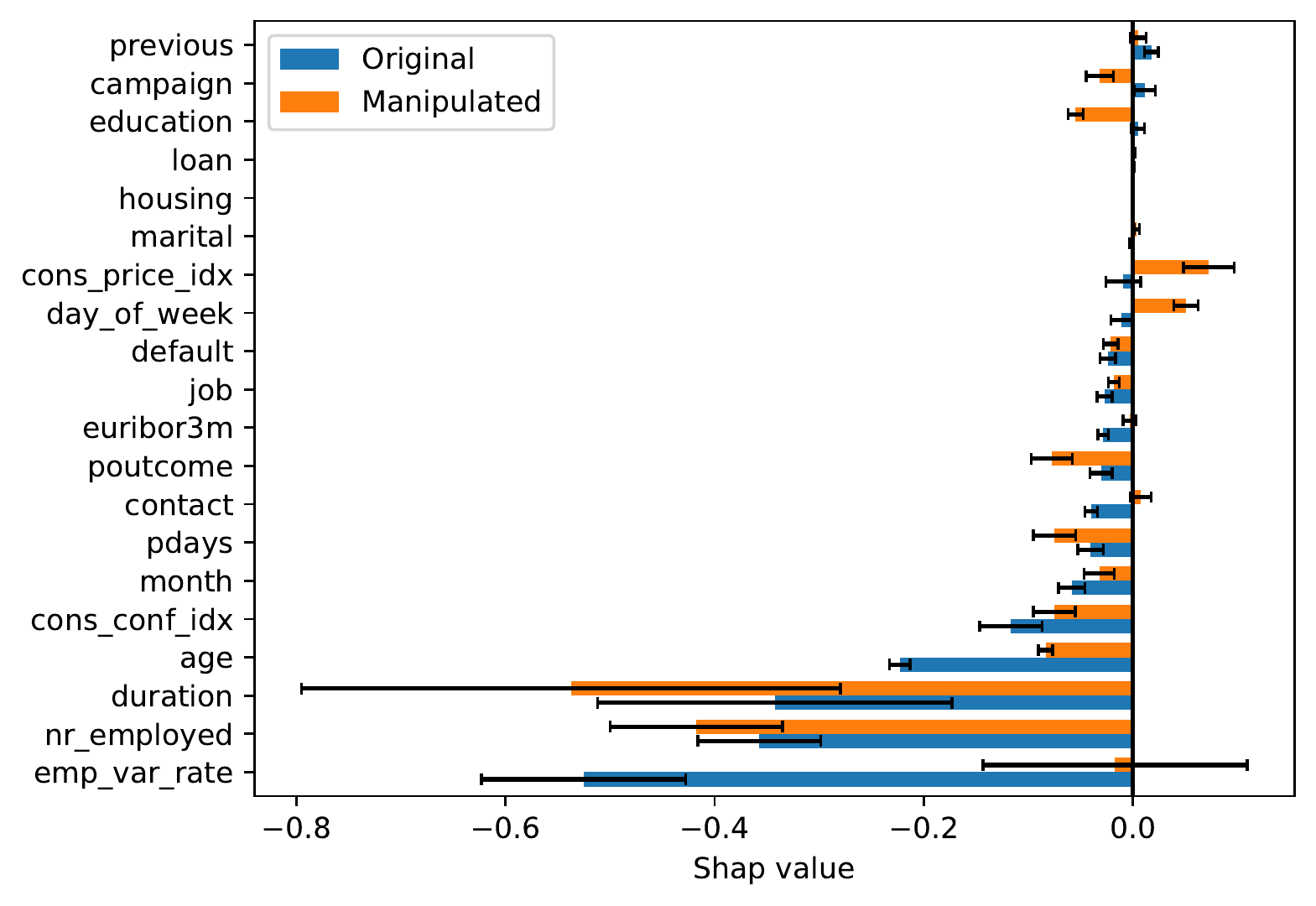}
     \end{subfigure}
    \hfill
    \begin{subfigure}[c]{0.44\textwidth}
         \centering
         \includegraphics[width=\textwidth]
         {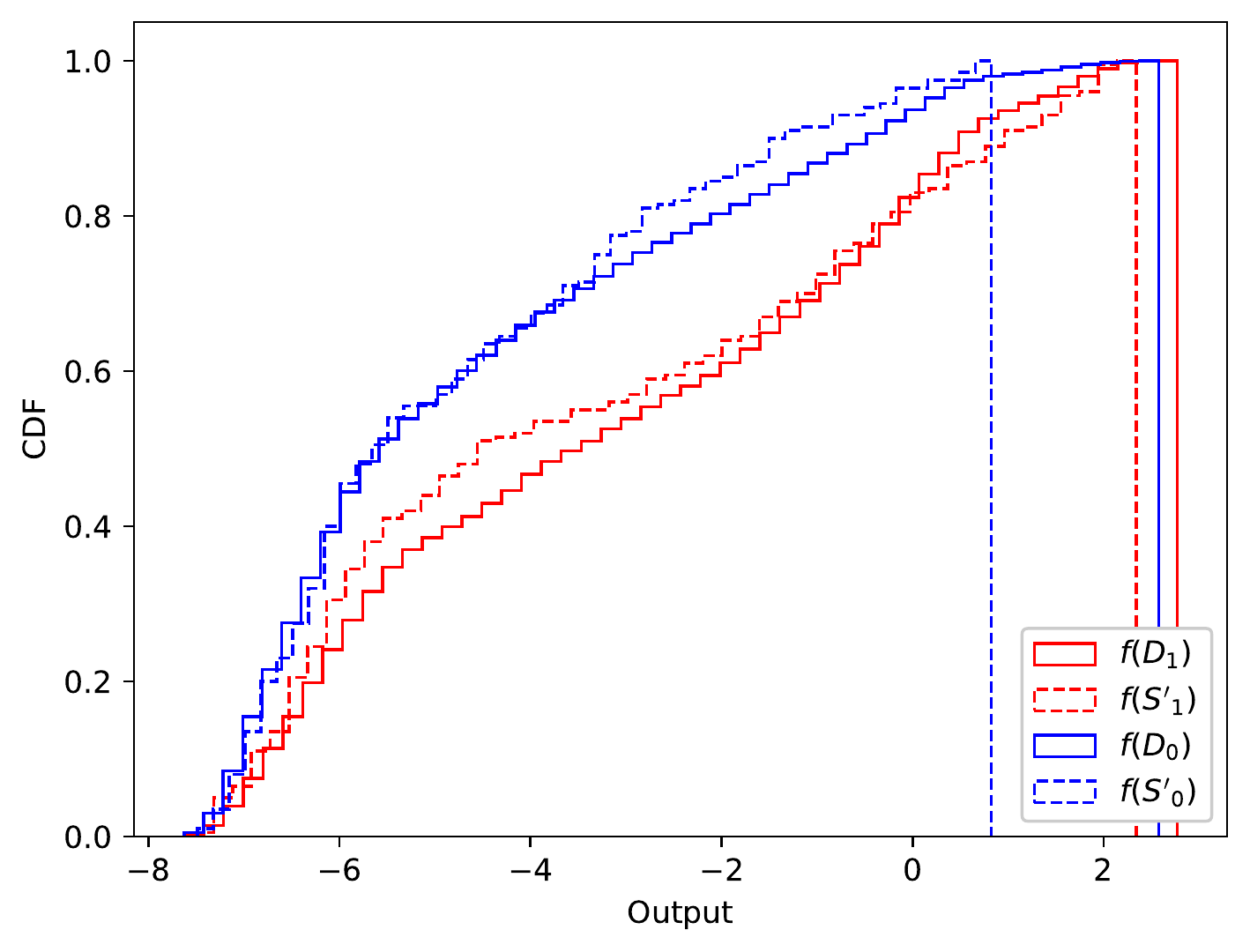}
     \end{subfigure}
    \caption{Attack of XGB fitted on Marketing.
    Left: GSV before and after the attack with $M=200$. As a reminder, the sensitive attribute is \texttt{age}. 
    Right:  Comparison of the CDF of the misleading subsets $f(S'_0),f(S'_1)$ and the CDF over the whole data. $f(D_0),f(D_1)$.}
    \label{fig:add_res_xgb_marketing}
\end{figure}

\begin{figure}[h]
     \centering
     \begin{subfigure}[c]{0.545\textwidth}
         \centering
         \includegraphics[width=\textwidth]
         {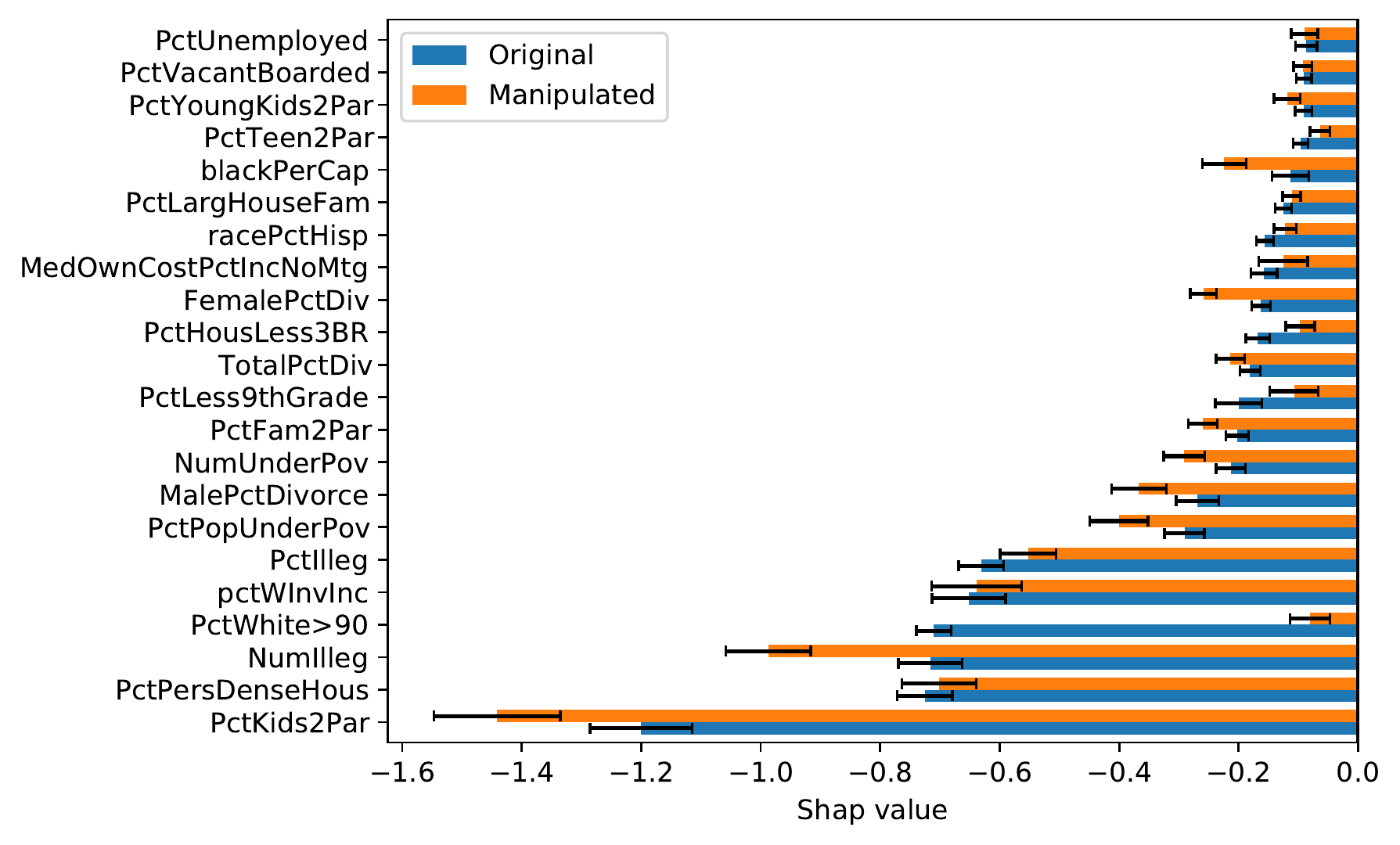}
     \end{subfigure}
    \hfill
    \begin{subfigure}[c]{0.43\textwidth}
         \centering
         \includegraphics[width=\textwidth]
         {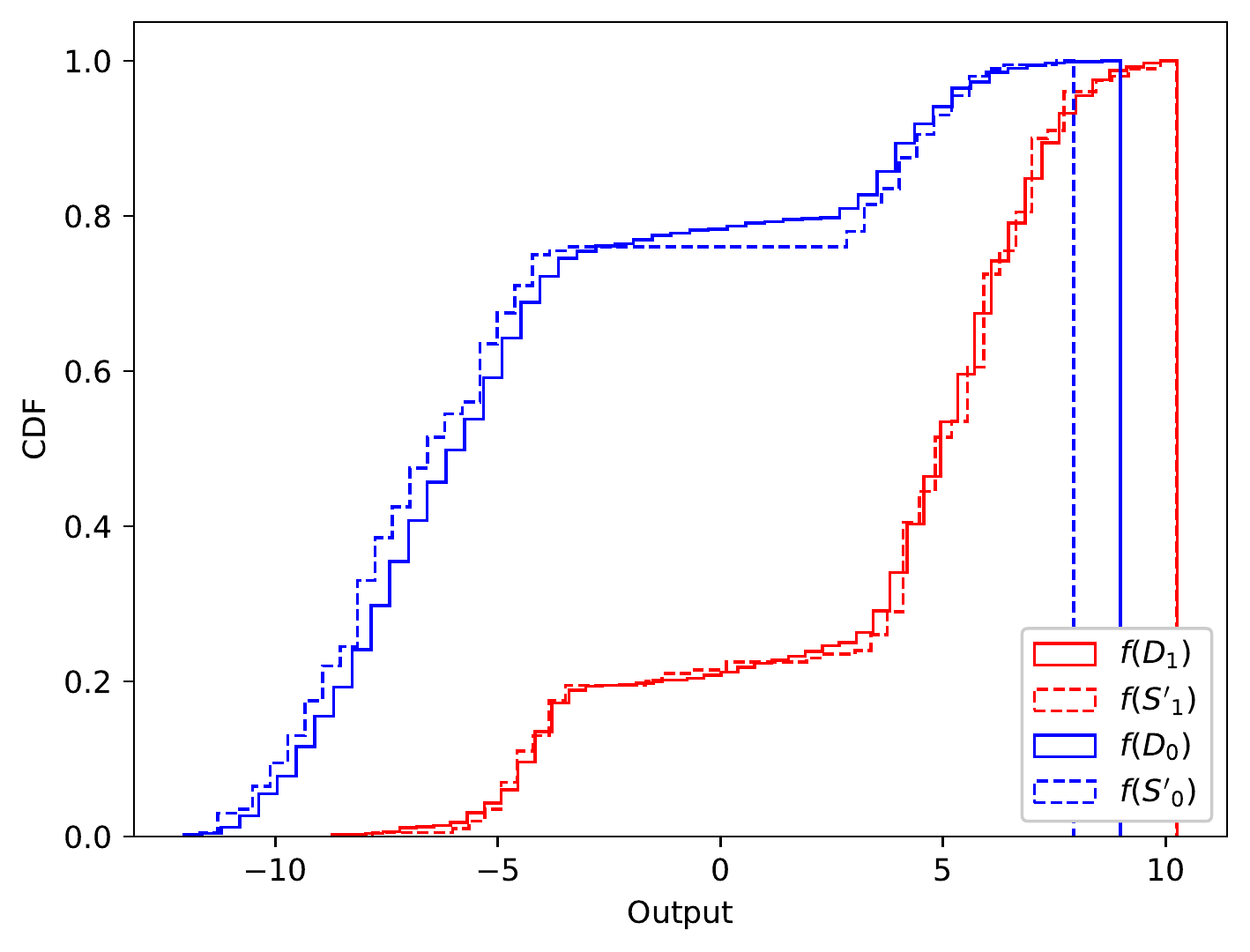}
     \end{subfigure}
    \caption{Attack of XGB fitted on Communities.
    Left: GSV before and after the attack with $M=200$. As a reminder, the sensitive attribute is \texttt{PctWhite>90}. 
    Right: Comparison of the CDF of the misleading subsets $f(S'_0),f(S'_1)$ and the CDF over the whole data. $f(D_0),f(D_1)$.}
    \label{fig:add_res_xgb_communities}
\end{figure}

\begin{figure}[h]
     \centering
     \begin{subfigure}[c]{0.535\textwidth}
         \centering
         \includegraphics[width=\textwidth]
         {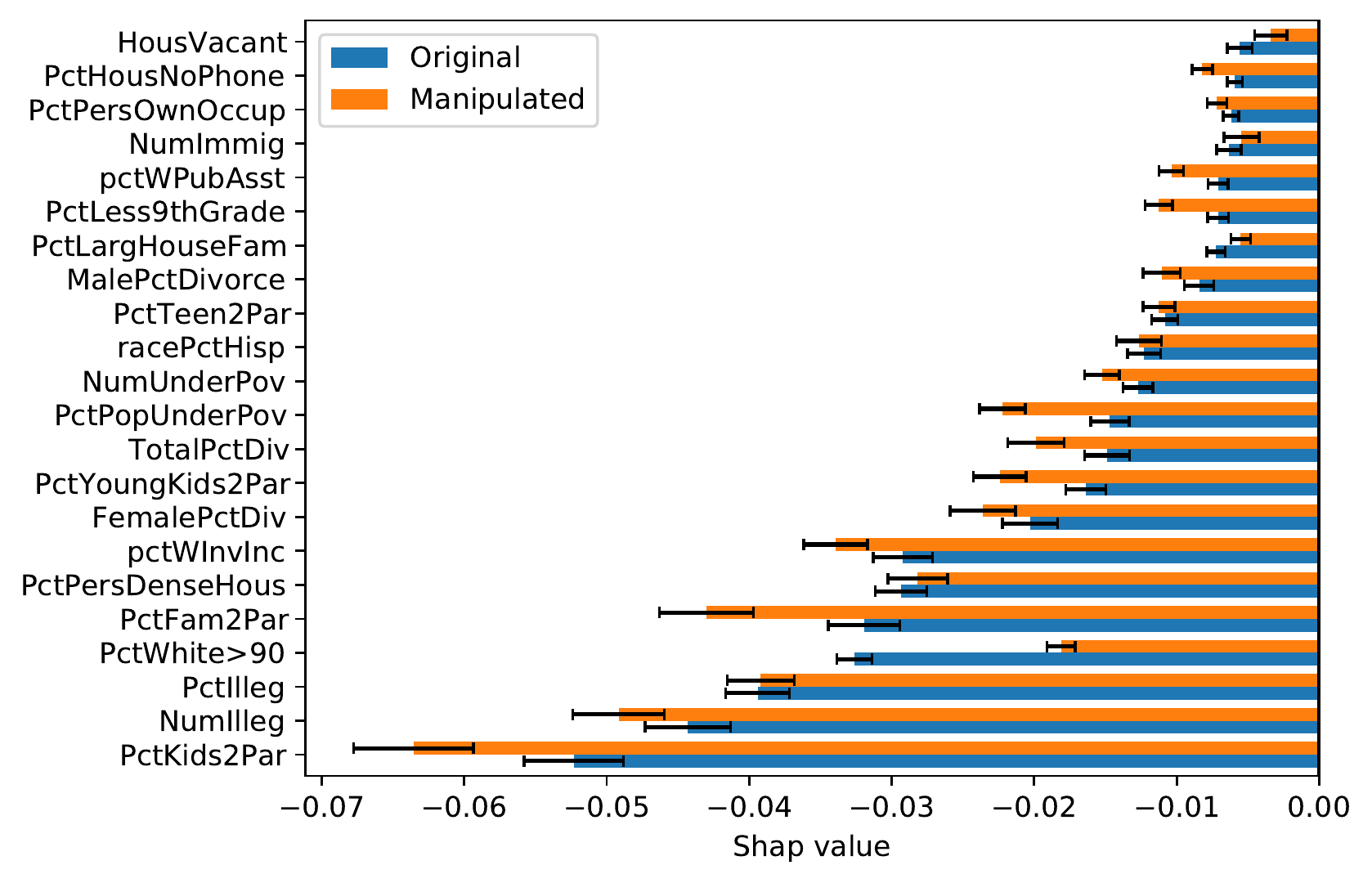}
     \end{subfigure}
    \hfill
    \begin{subfigure}[c]{0.445\textwidth}
         \centering
         \includegraphics[width=\textwidth]
         {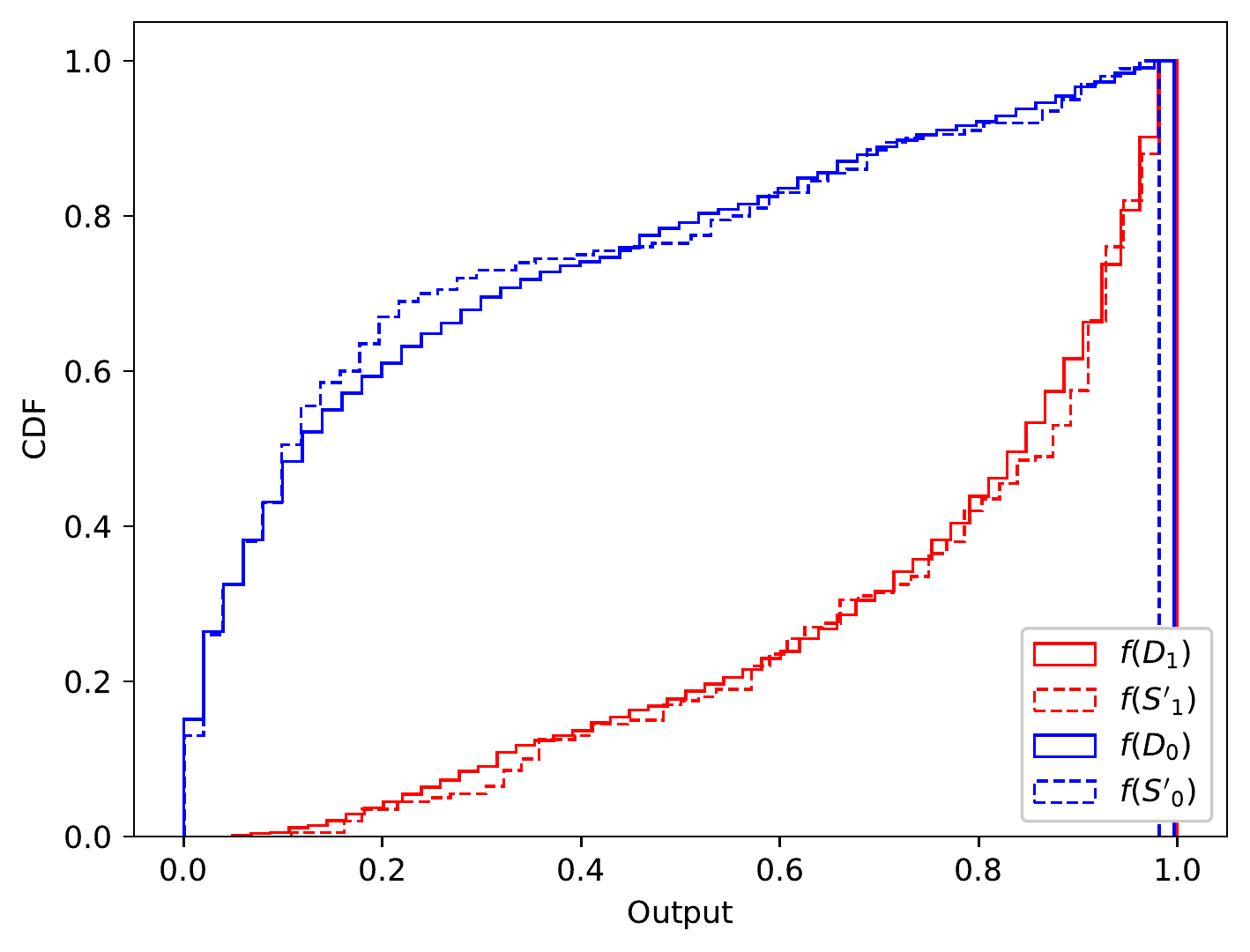}
     \end{subfigure}
    \caption{Attack of RF fitted on Communities.
    Left: GSV before and after the attack with $M=200$. As a reminder, the sensitive attribute is 
    \texttt{PctWhite>90}. Right: Comparison of the CDF of the misleading subsets $f(S'_0),f(S'_1)$ and 
    the CDF over the whole data. $f(D_0),f(D_1)$.}
    \label{fig:add_res_rf_communities}
\end{figure}

\newpage
\subsection{Genetic Algorithm}\label{sec:add_res:genetic}

This section motivates the use of stealthily biased sampling to perturb Shapley Values
in place of the method of \citet{baniecki2021fooling}, which fools \texttt{SHAP} by perturbing the background dataset $S_1'$ via a genetic 
algorithm. In said genetic algorithm, a population of $P$ fake background datasets $\{S_1'^{(k)}\}_{k=1}^P$ evolves iteratively 
following three biological mechanisms
\begin{itemize}[leftmargin=*]
    \item \textbf{Cross-Over:} Two parents produce two children by switching some of their feature values.
    \item \textbf{Mutation:} Some individuals
    are perturbed with small Gaussian noise.
    \item \textbf{Selection:} The individuals $S_1'^{(k)}$ with the smallest amplitudes
    $|\Phi_s(f, S_0', S_1'^{(k)})|$ are selected for the next generation.
\end{itemize}
Although the use of a genetic algorithm makes the method of \citet{baniecki2021fooling} very versatile, its main drawback is that there is no constraint 
on the similarity between the perturbed background and the original one. Moreover, the mutation and cross-over operations ignore the correlations 
between features and hence the perturbed dataset can contain unrealistic instances. To highlight these issues, Figure \ref{fig:genetic_PCA} presents the first 
two principal components of $D_1$ and $S_1'$ for the XGB models used in Section \ref{subsec:results}. On COMPAS and Marketing especially, we see that the 
fake samples $S_1'$ lie in regions outside of the data manifold. For Adult-Income and Marketing, the fake data overlaps more with the original one, but this
could be an artifact of only visualizing 2 dimensions. 

For a more rigorous analysis of ``similarity'' between $S_1'$ and $D_1$, we must study the
detection rate of the audit detector. To this end, Figures \ref{fig:genetic_xgb} and \ref{fig:genetic_rf}, present the amplitude reduction 
and the detection rate
after a given number of iterations of the genetic algorithm. These curves show the average and standard deviation across the 5 train/test splits employed 
in our main experiments. Moreover, window 20 convolutions were used to smooth the curves and make them more readable. On the Marketing and Communities datasets, 
we see that for both XGB and Random Forests models, the detector is quickly able to assert that the data was manipulated. We suspect the genetic algorithm cannot fool the detector on these two datasets because they contain a large number of features (Marketing has 20, Communities has 98). Such a large number of
features could make it harder to perturbate samples while staying close to the original data manifold.
Since the model behavior is unpredictable outside of the data manifold, it is impossible for the genetic algorithm to guarantee that the CDF of $f(S_1')$ 
will be close to the CDF of $f(D_1)$. For adult-income, the detection rate appears to be lower but still, the largest reductions in amplitude of the sensitive feature were about $15\%$, even after 2.5 hours of run-time.

Contrary to the genetic algorithm, our method Fool SHAP addresses both constraints of making the fake data realistic and keeping it close to the original dataset. 
Indeed, our objective is tuned to make sure that the Wasserstein distance between the original and perturbed data is small. Moreover, since we 
do not generate new samples but rather apply non-uniform weights to pre-existing ones, we do not run into the risk of generating unrealistic data.

\begin{figure}[t]
     \centering
     \begin{subfigure}[c]{0.43\textwidth}
         \centering
         \includegraphics[width=\textwidth]
         {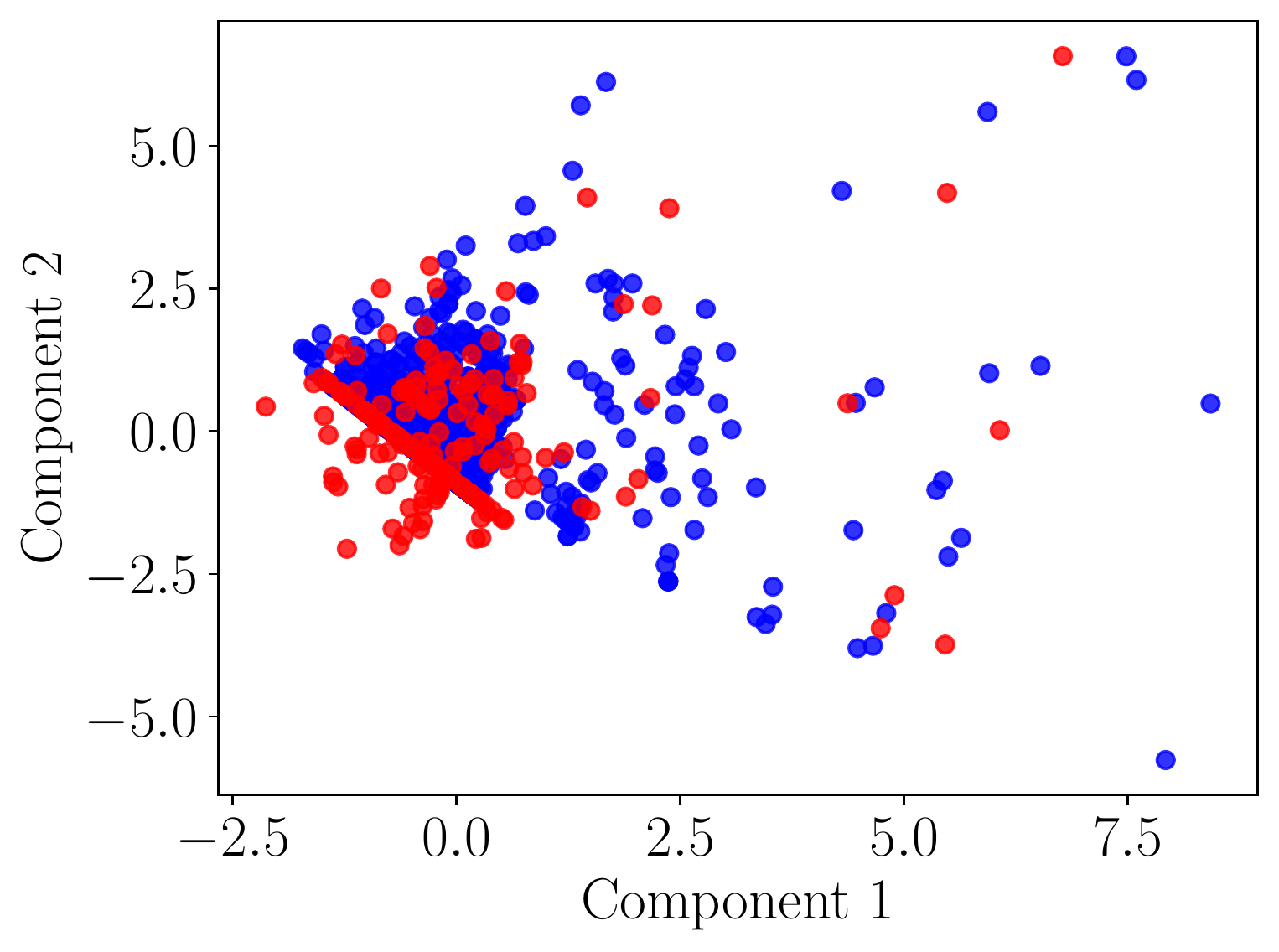}
         \caption{COMPAS}
     \end{subfigure}
     \hspace{1cm}
    \begin{subfigure}[c]{0.43\textwidth}
         \centering
         \includegraphics[width=\textwidth]
         {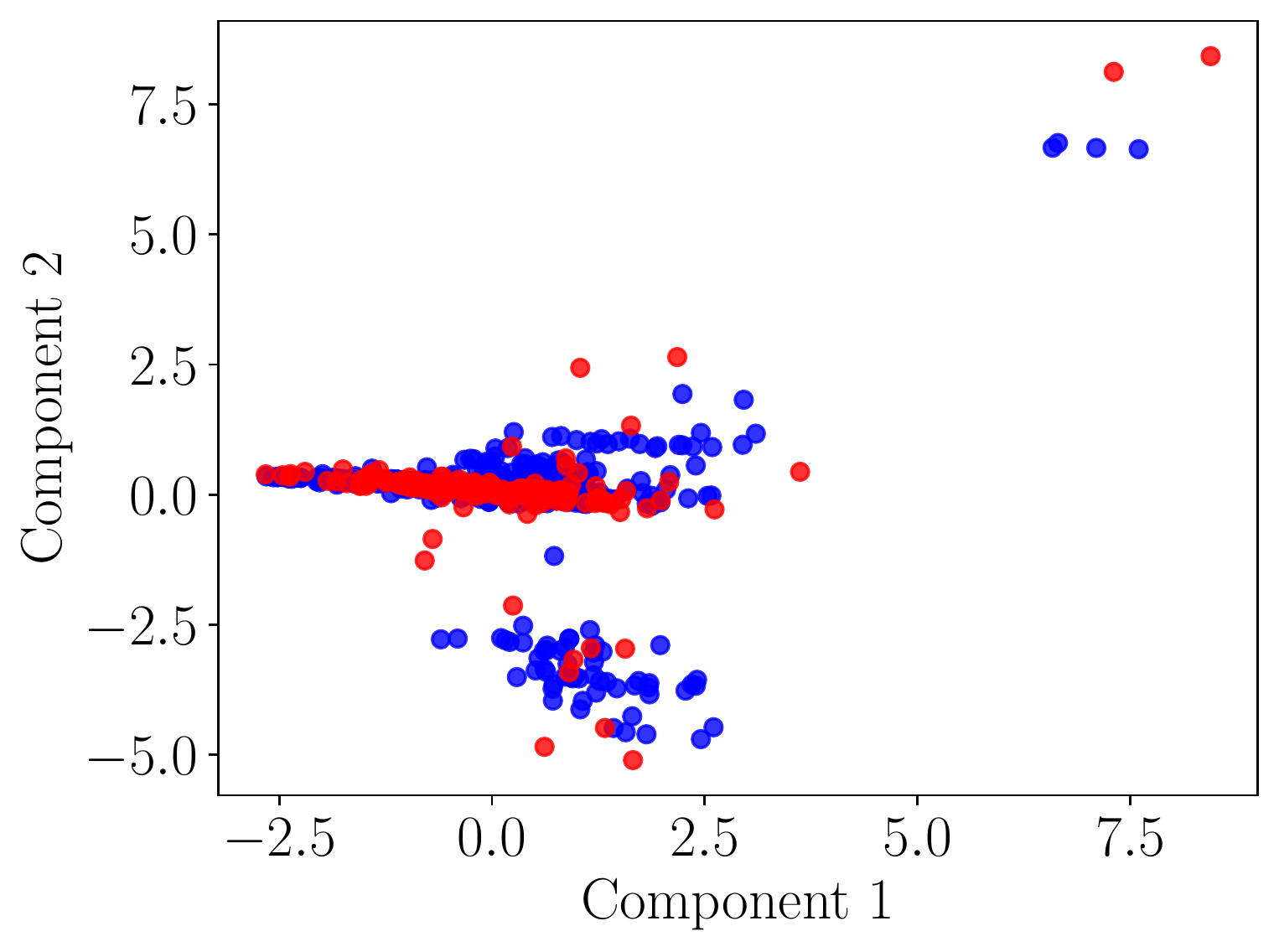}
         \caption{Adult-Income}
     \end{subfigure}
    \begin{subfigure}[c]{0.43\textwidth}
         \centering
         \includegraphics[width=\textwidth]
         {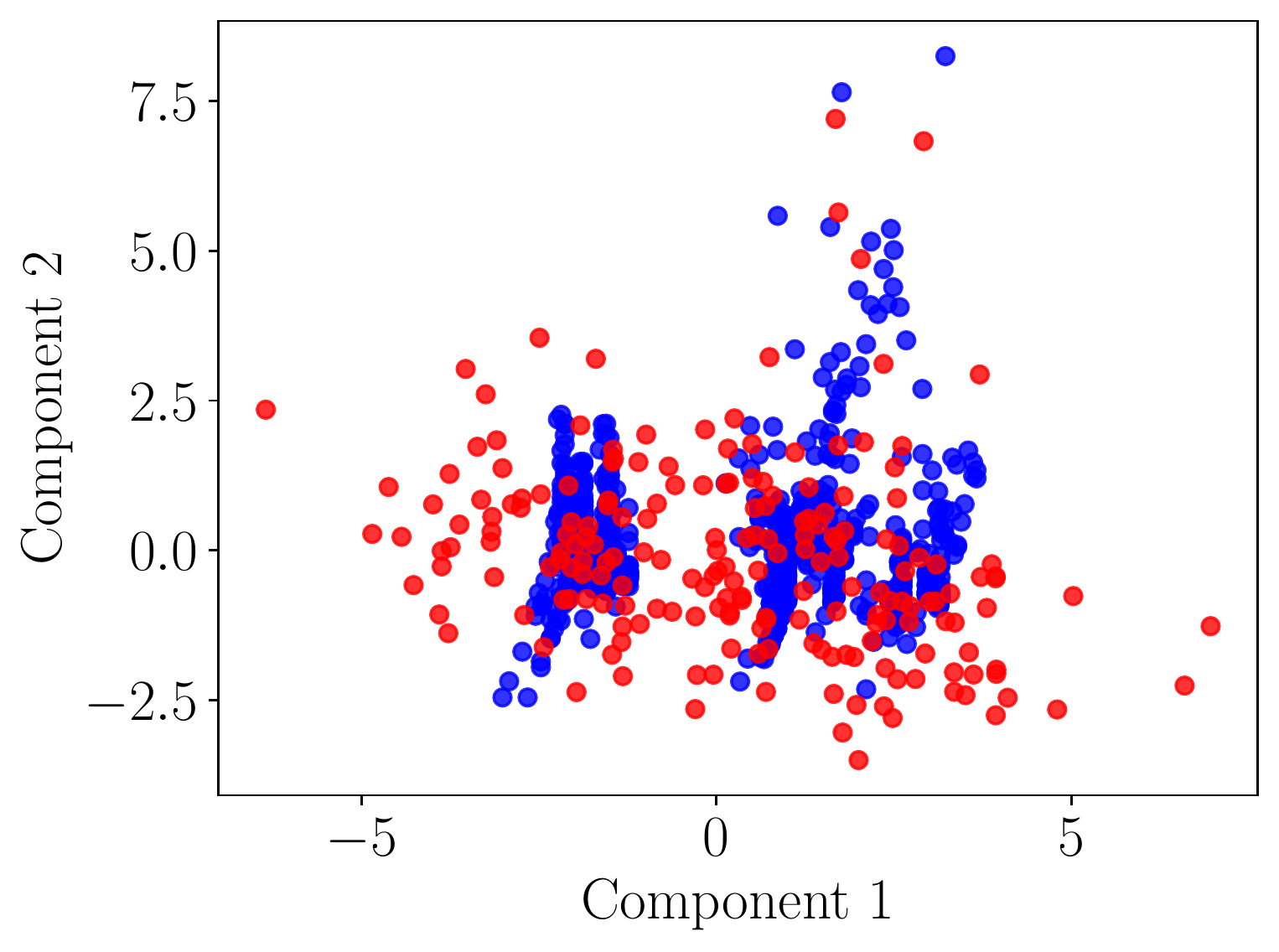}
         \caption{Marketing}
     \end{subfigure}
     \hspace{1cm}
    \begin{subfigure}[c]{0.43\textwidth}
         \centering
         \includegraphics[width=\textwidth]
         {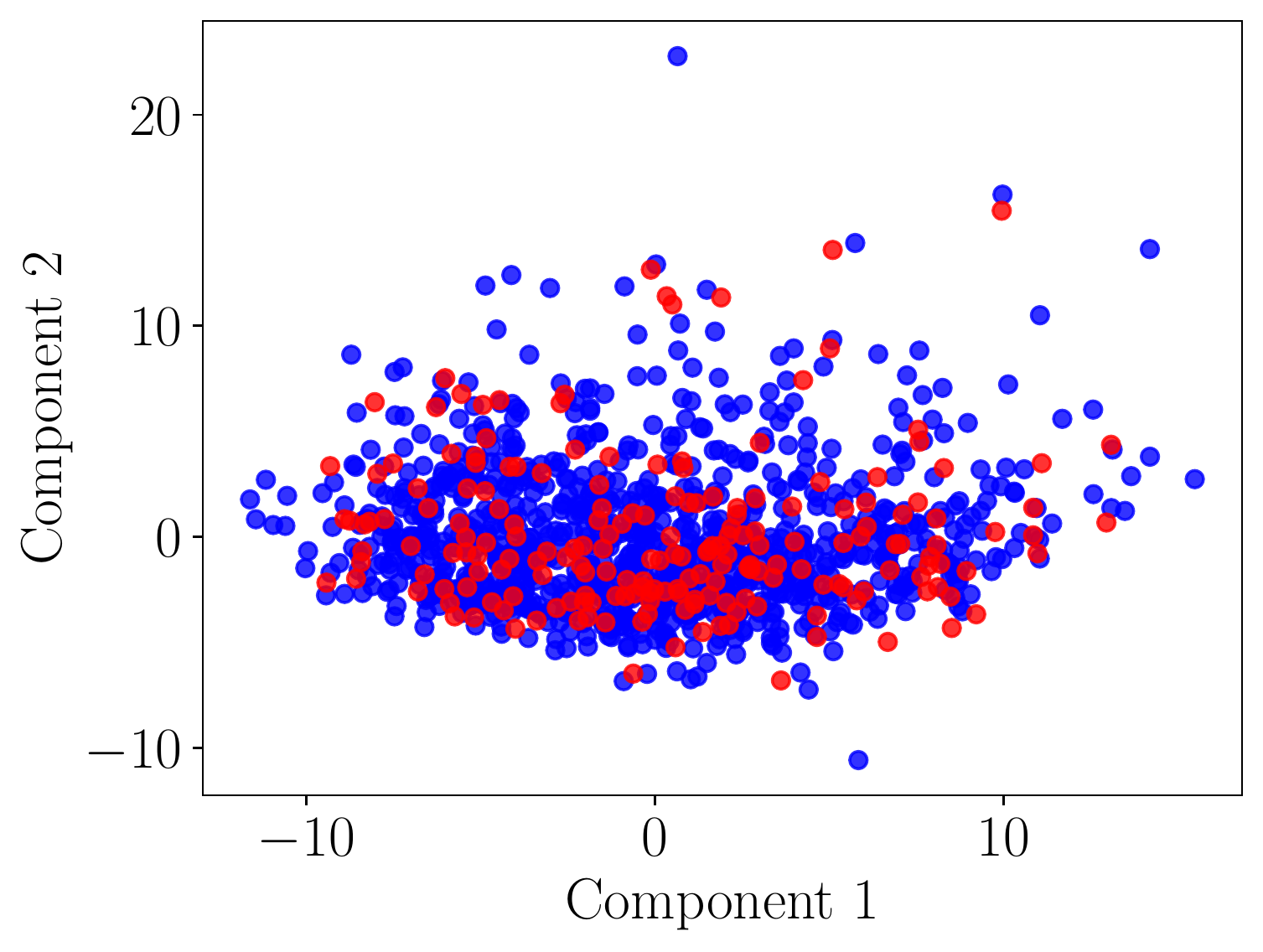}
         \caption{Communities}
     \end{subfigure}
    \caption{First two principal components of $D_1$ (\textcolor{blue}{Blue}) and 
    $S_1'$ (\textcolor{red}{Red}) returned by the genetic algorithm on XGB models.}
    \label{fig:genetic_PCA}
\end{figure}

\begin{figure}[t]
     \centering
     \begin{subfigure}[c]{0.43\textwidth}
         \centering
         \includegraphics[width=\textwidth]
         {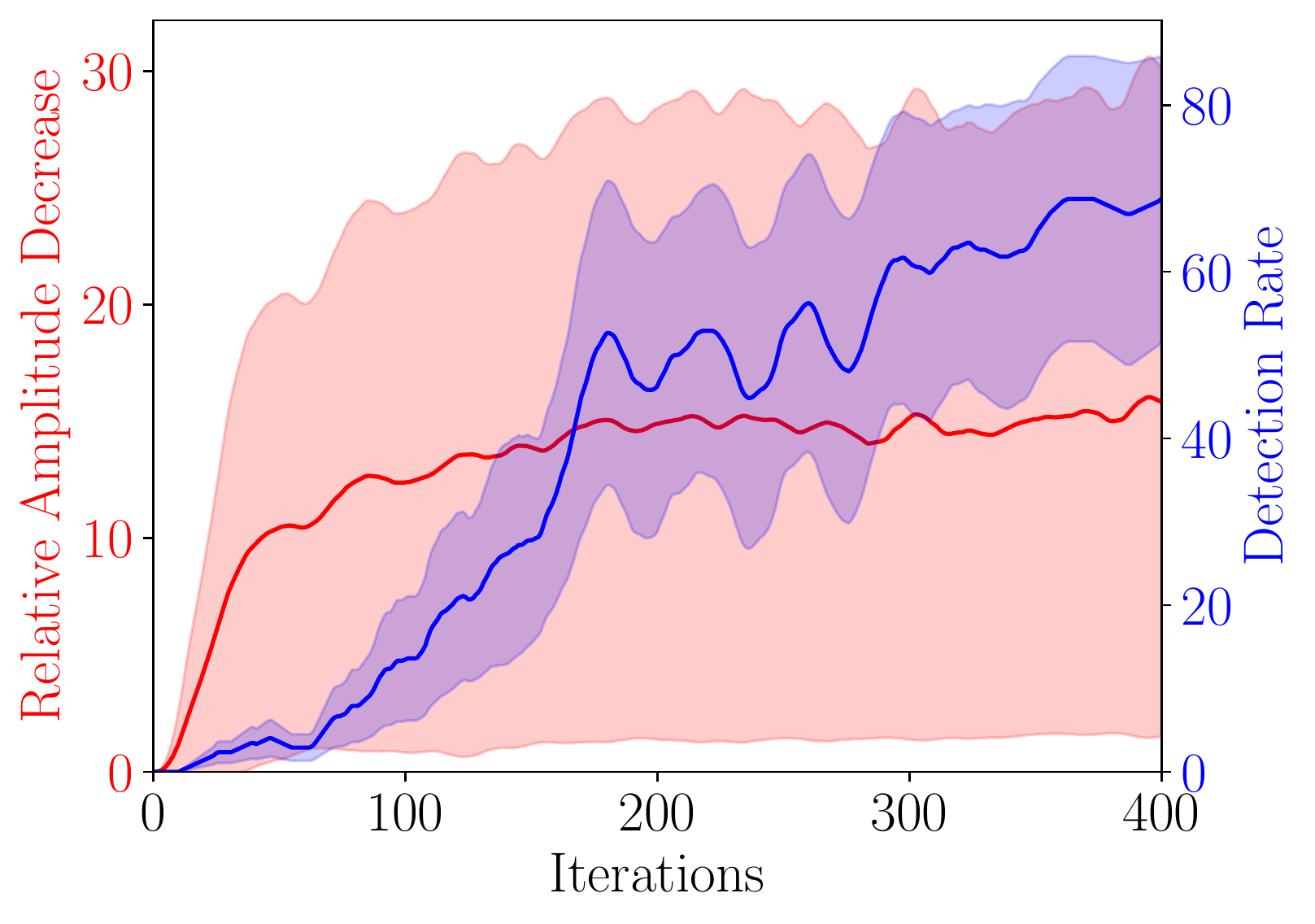}
         \caption{COMPAS}
     \end{subfigure}
    \hspace{1cm}
    \begin{subfigure}[c]{0.43\textwidth}
         \centering
         \includegraphics[width=\textwidth]
         {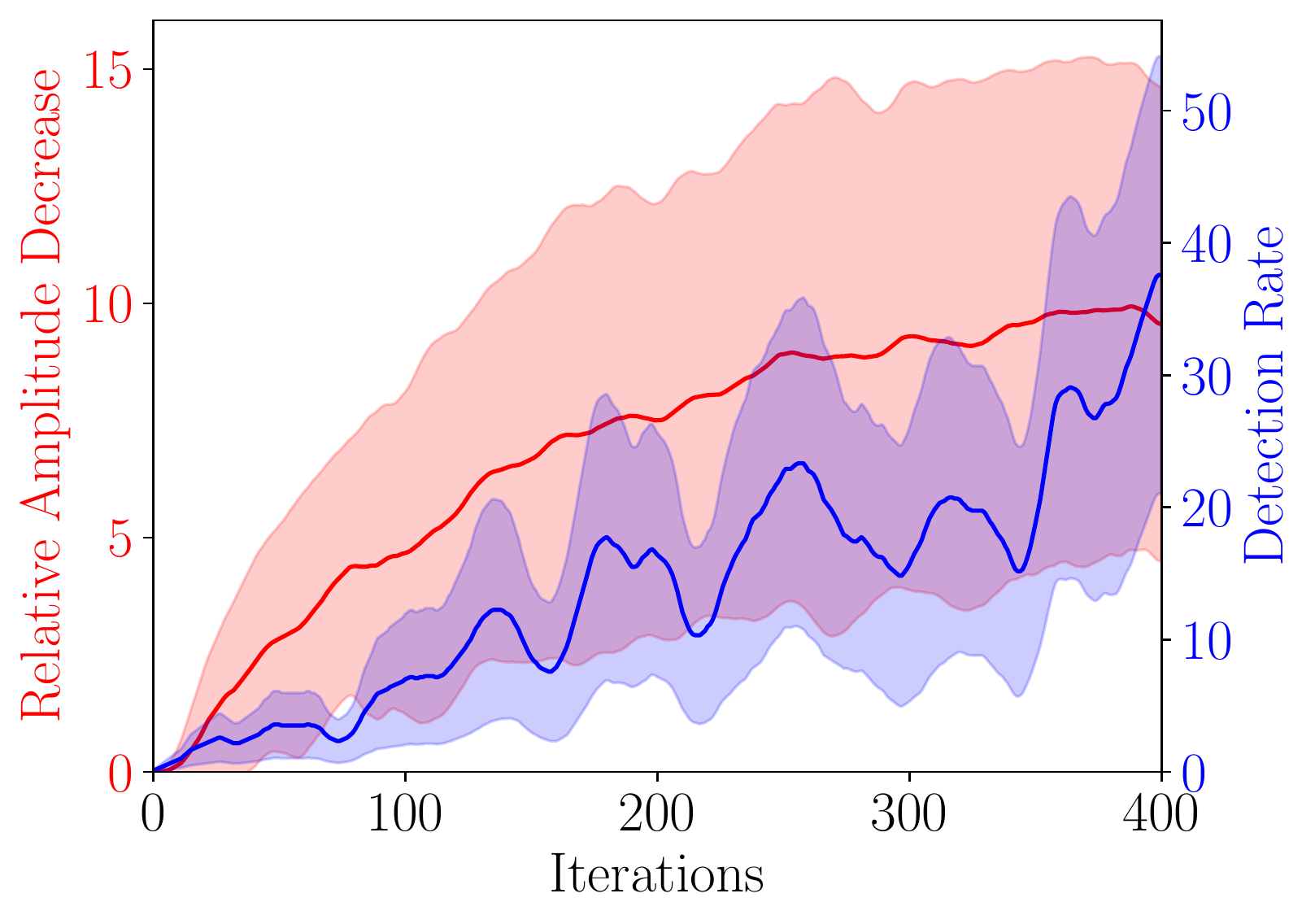}
         \caption{Adult-Income}
     \end{subfigure}
    \begin{subfigure}[c]{0.43\textwidth}
         \centering
         \includegraphics[width=\textwidth]
         {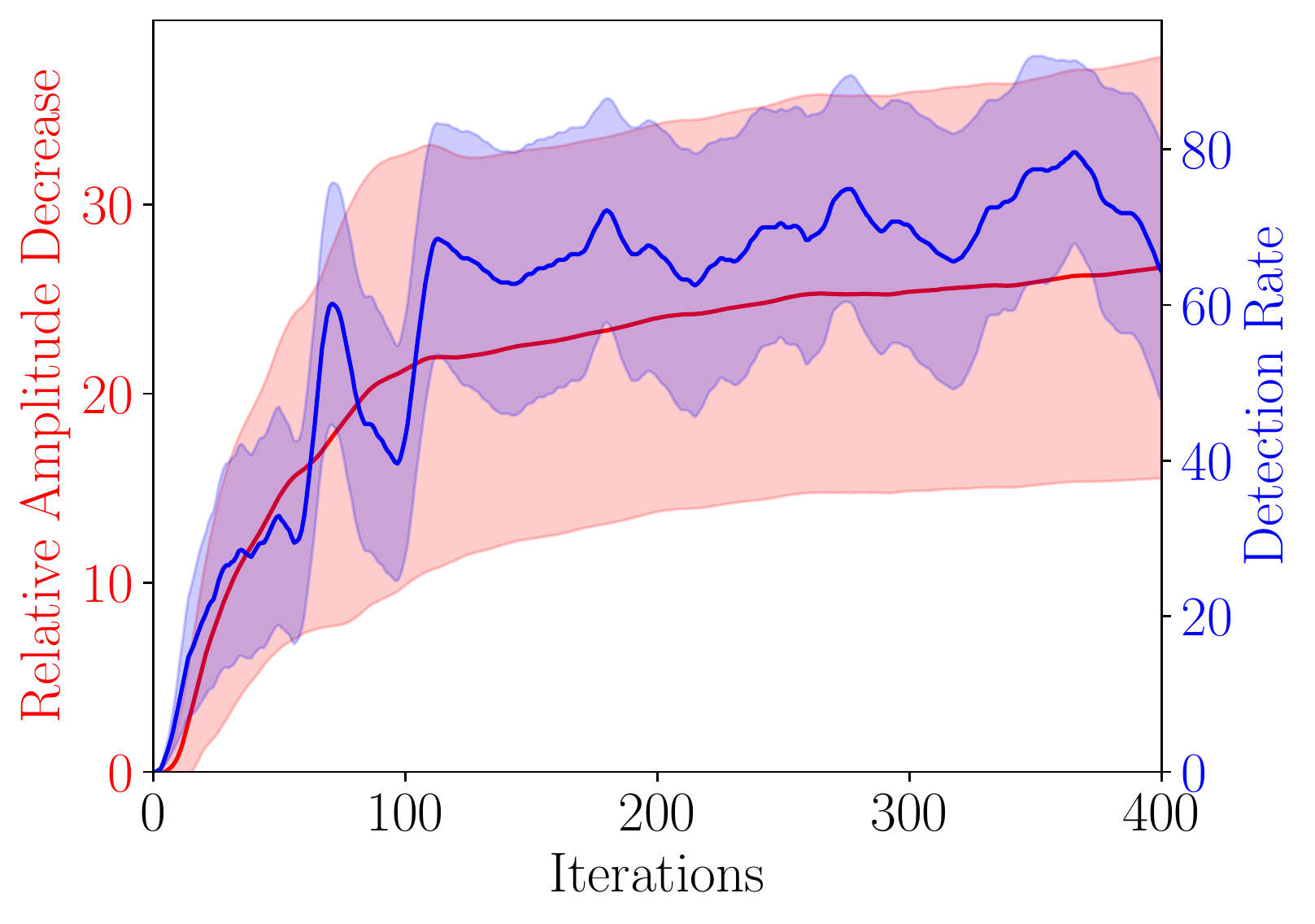}
         \caption{Marketing}
     \end{subfigure}
    \hspace{1cm}
    \begin{subfigure}[c]{0.43\textwidth}
         \centering
         \includegraphics[width=\textwidth]
         {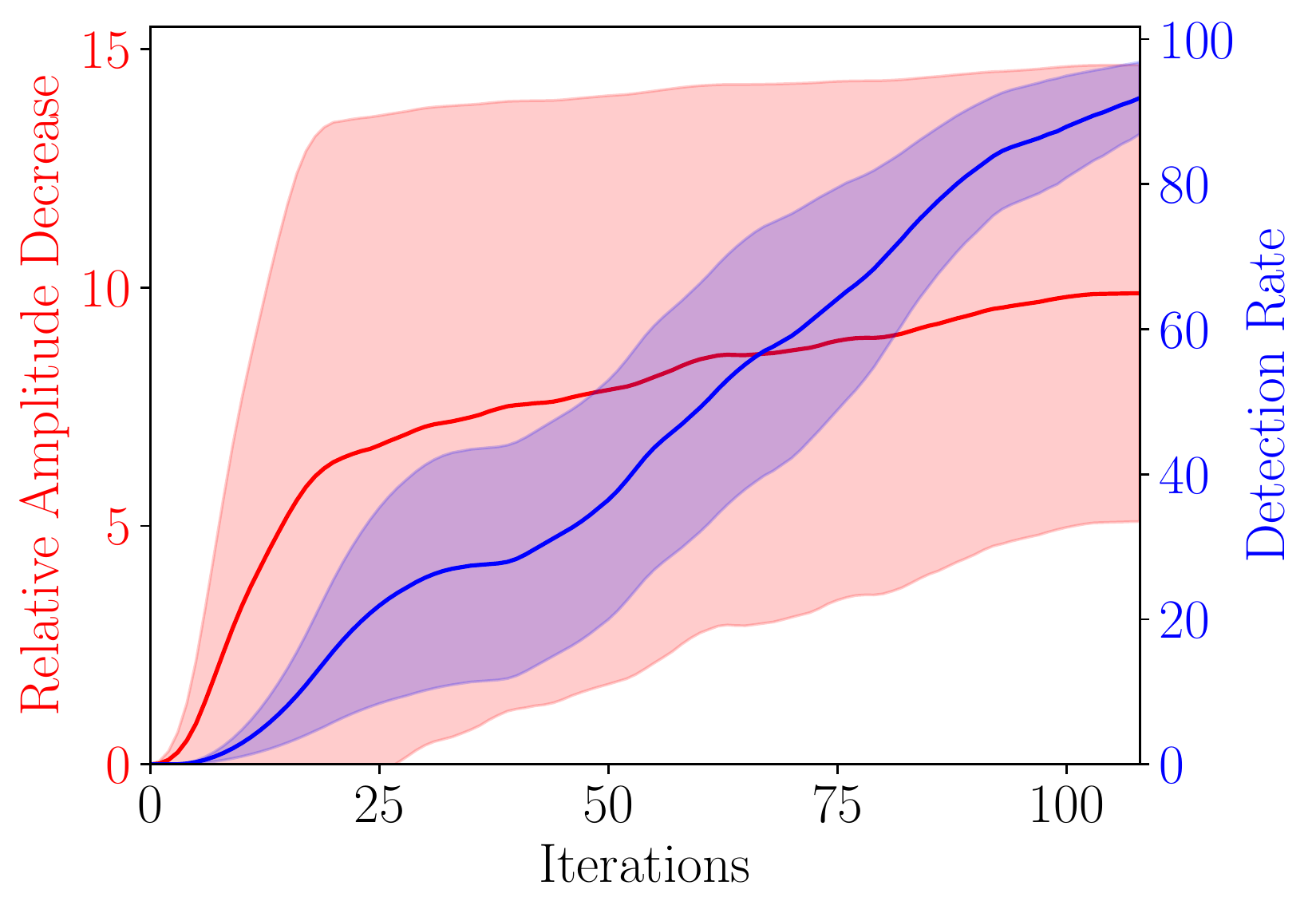}
         \caption{Communities}
     \end{subfigure}
    \caption{Iterations of the genetic algorithm applied to 5 XGB models per dataset.}
    \label{fig:genetic_xgb}
\end{figure}

\begin{figure}[t]
     \centering
     \begin{subfigure}[c]{0.43\textwidth}
         \centering
         \includegraphics[width=\textwidth]
         {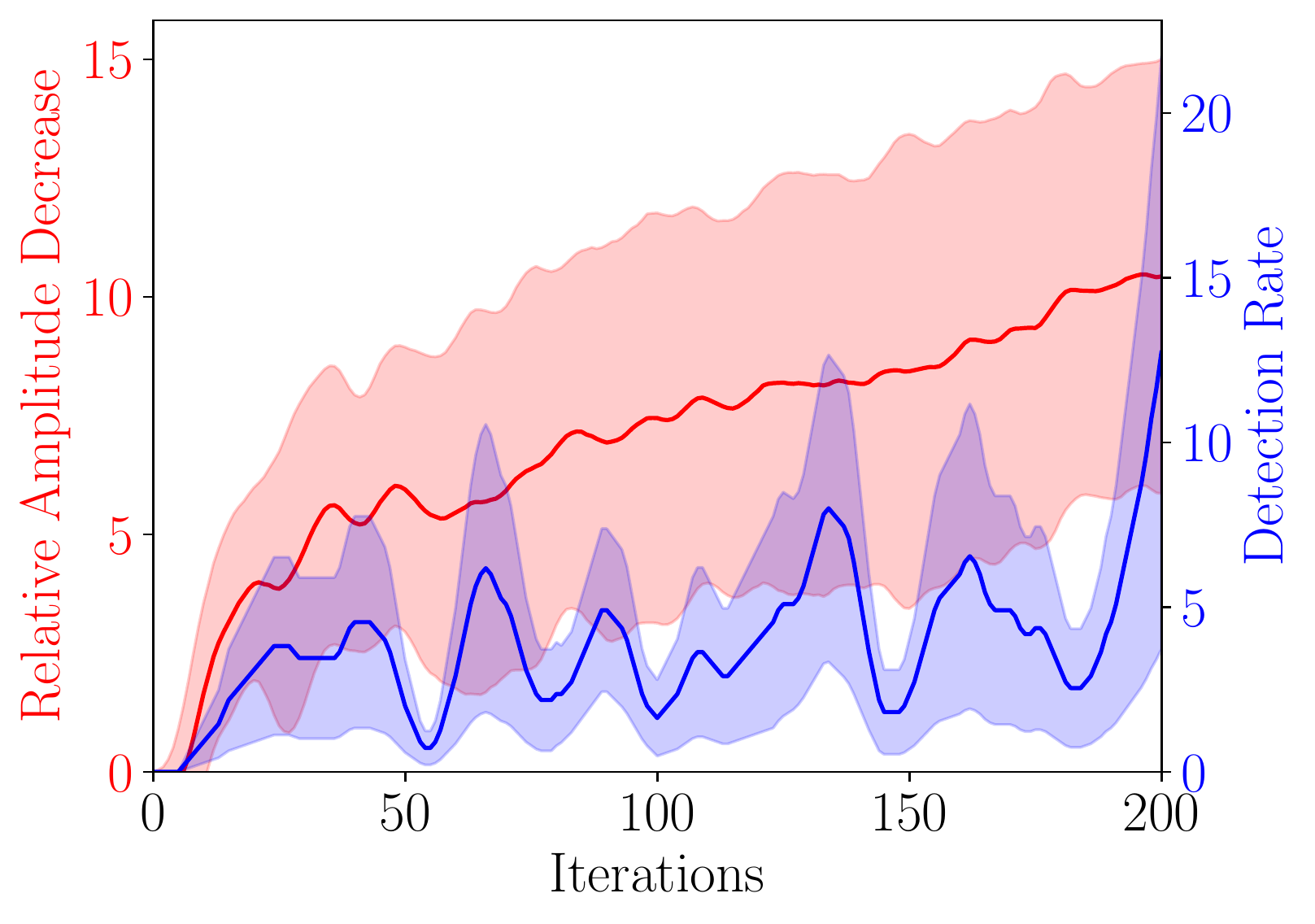}
         \caption{COMPAS}
     \end{subfigure}
     \hspace{1cm}
    \begin{subfigure}[c]{0.43\textwidth}
         \centering
         \includegraphics[width=\textwidth]
         {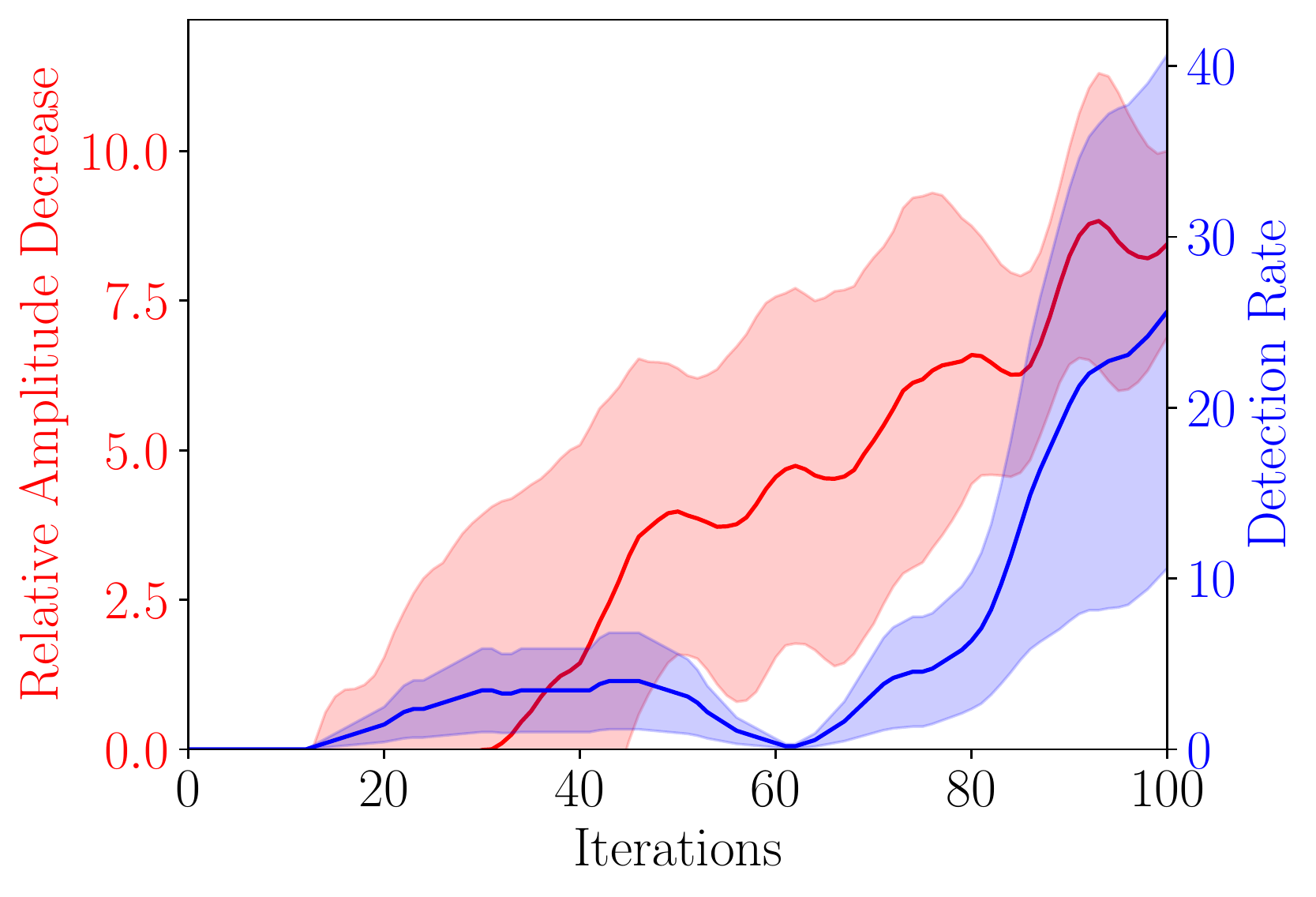}
         \caption{Adult-Income}
     \end{subfigure}
    \begin{subfigure}[c]{0.43\textwidth}
         \centering
         \includegraphics[width=\textwidth]
         {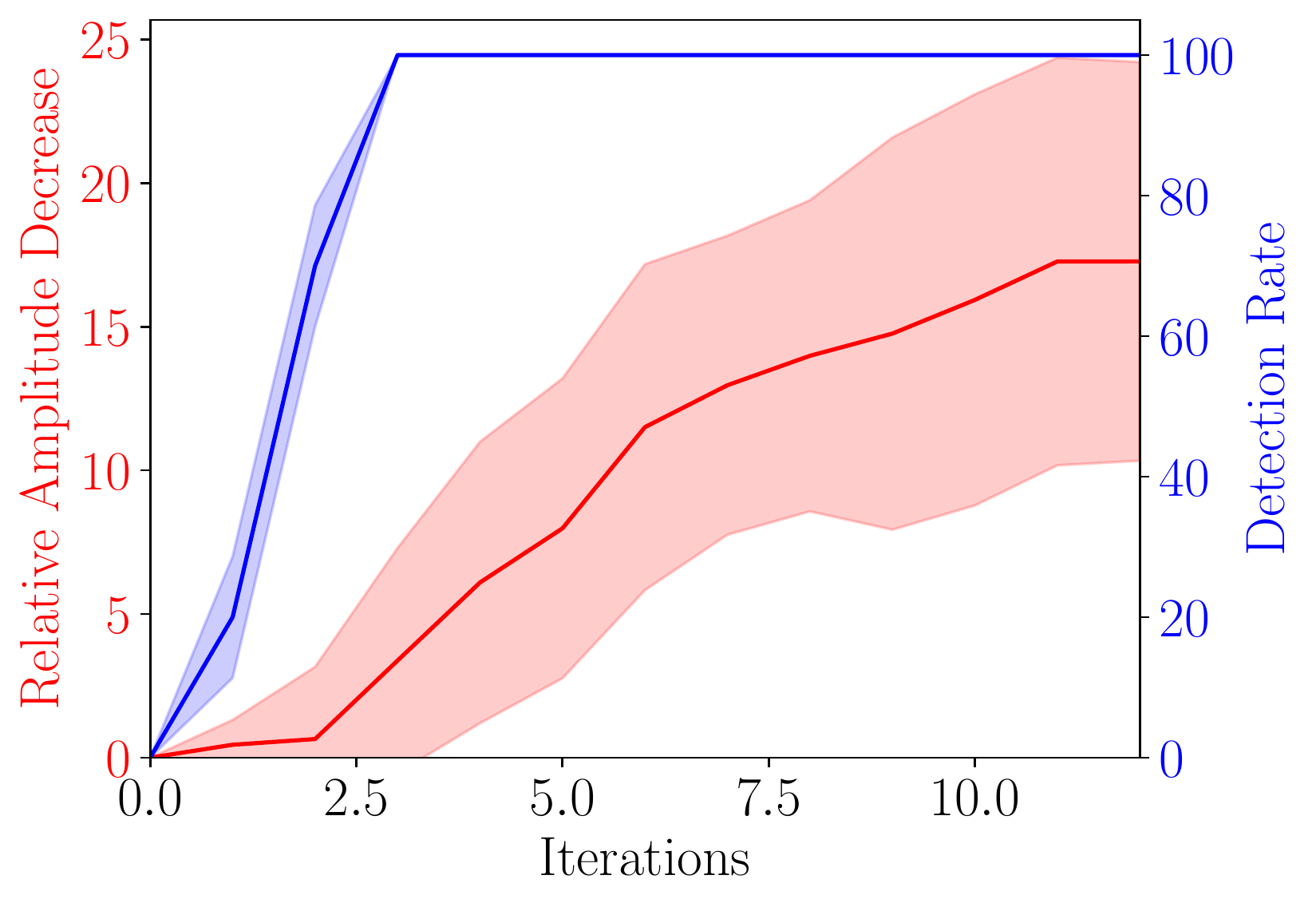}
         \caption{Marketing}
     \end{subfigure}
     \hspace{1cm}
    \begin{subfigure}[c]{0.43\textwidth}
         \centering
         \includegraphics[width=\textwidth]
         {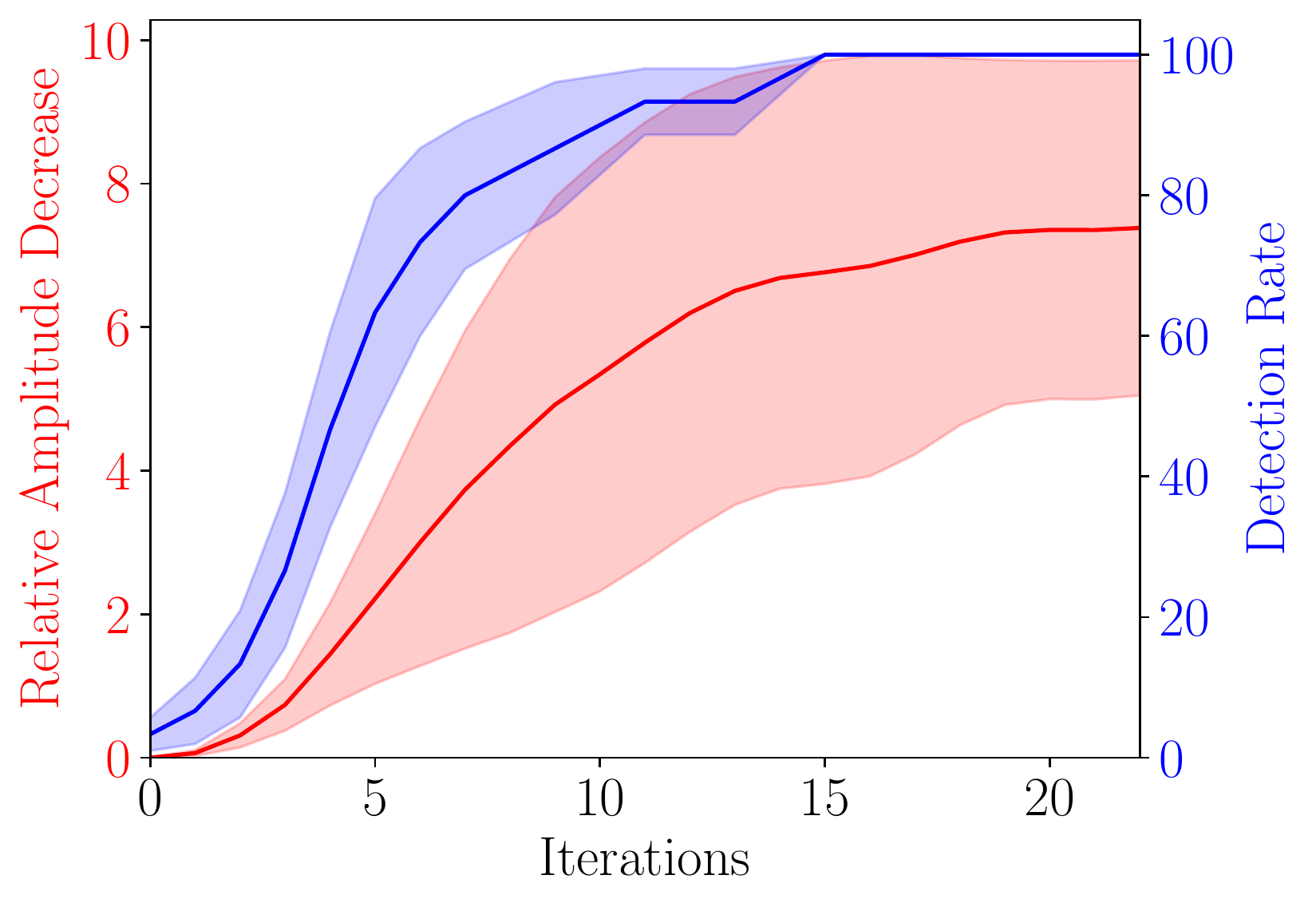}
         \caption{Communities}
     \end{subfigure}
    \caption{Iterations of the genetic algorithm applied to 5 RF models per dataset.}
    \label{fig:genetic_rf}
\end{figure}

\newpage
\textcolor{white}{.}
\newpage
\subsection{Multiple Sensitive Attributes}\label{sec:add_res:multi_s}

We present preliminary results for settings where one wishes to manipulate the Shapley values of multiple sensitive features 
$s$ each part of a set $s\in\mathcal{S}$. For example, in our experiments, we considered \texttt{gender} as a sensitive attribute for the Adult-Income 
dataset and we showed that one can diminish the attribution of this feature. Nonetheless, there are two other features in Adult-Income that share 
information with \texttt{gender}: \texttt{relationship} and \texttt{marital-status}. Indeed, \texttt{relationship} can take the value \texttt{widowed}
and \texttt{marital-status} can take the value \texttt{wife}, which are both proxies of \texttt{gender=female}. For this reason, these two other 
features may be considered sensitive and decision-making that relies strongly on them may not be acceptable. Hence, we must derive a 
method that reduces the total attributions of the features in $\mathcal{S}=\{\texttt{gender}, \texttt{relationship}, \texttt{marital-status}\}$.

We first let $\beta_s := \text{sign}[\,\sum_{\binstance{j}\in D_1} \widehat{\Phi}_s(f, S_0', \binstance{j})\,]$ for any $s\in \mathcal{S}$. 
In our experiments, all these signs will typically be negative. The proposed approach is to minimize the $\ell_1$ norm 
\begin{equation}
    \|(\widehat{\Phi}_s(f, S_0', S_1'))_{s\in \mathcal{S}}\|_1 := \sum_{s\in \mathcal{S}} |\,\widehat{\Phi}_s(f, S_0', S_1')\,|,
    \label{eq:l1norm}
\end{equation} 
which we interpret as the total amount of disparity we can attribute to the sensitive attributes. Remember that
$\widehat{\Phi}_s(f, S'_0, S'_1)$ converges in probability to
$\sum_{\binstance{j}\in D_1} \, \omega_j\, \widehat{\Phi}_s(f, S'_0, \binstance{j})$ (cf. Proposition \ref{prop:linear_limit}). Therefore minimizing the 
$\ell_1$ norm will require minimizing 
\begin{equation}
    \sum_{s\in \mathcal{S}}\beta_s\sum_{\binstance{j}\in D_1} \, \omega_j\, \widehat{\Phi}_s(f, S'_0, \binstance{j}) = \sum_{\binstance{j}\in D_1} \, \omega_j\sum_{s\in \mathcal{S}}\beta_s\, \widehat{\Phi}_s(f, S'_0, \binstance{j}),
\end{equation} 
which is
again a linear function of the weights. We present Algorithm
\ref{alg:weights_2} as an overload of Algorithm \ref{alg:weights} that now supports taking multiple
sensitive attributes as inputs.

\begin{algorithm}
\caption{Compute non-uniform weights for multiple sensitive attributes $s\in \mathcal{S}$}
\begin{algorithmic}[1]
\Procedure{compute\_weights}{$D_1, \big\{\widehat{\Phi}_s(f, S'_0, \binstance{j})\big\}_{s,j}, \lambda$}
    \State $\beta_s \,\,:= \text{sign}[\,\sum_{\binstance{j}\in D_1} \widehat{\Phi}_s(f, S_0', \binstance{j})\,]\,\,\,\,\forall s\in \mathcal{S}$;
    \State $\background \,\,\,\,:= \mathcal{C}(D_1, \bm{1}/N_1)$
    \Comment{Unmanipulated background}
    \State $\background'_{\bm{\omega}} := \mathcal{C}(D_1, \myvec{\omega})$
    \Comment{Manipulated background as a function of
    $\bm{\omega}$}
    \State $
    \bm{\omega} = \argmin_{\myvec{\omega}}\,\,\, \sum_{\binstance{j}\in D_1} \, \omega_j\sum_{s\in \mathcal{S}}\beta_s\, \widehat{\Phi}_s(f, S'_0, \binstance{j}) + \lambda\W$
\State \Return $\bm{\omega}$;
\EndProcedure
\end{algorithmic}
\label{alg:weights_2}
\end{algorithm}

The only difference in the resulting MCF is that we must use the cost 
$a(e)=\sum_{s\in\mathcal{S}}\beta_s\, \widehat{\Phi}_s(f, S'_0, \binstance{j})$ for edges $(s, \ell_j)$ in the graph $\graph$ of 
Figure \ref{fig:graph}. This new algorithm is guaranteed to diminish the $\ell_1$ norm of the attributions of all sensitive features. 
However, that this does not imply that all sensitive attributes will diminish in amplitude. Indeed, minimizing the sum of multiple quantities 
does not guarantee that each quantity will diminish. For example, $4 + 7$ is smaller than $6+6$ although $4$ is smaller than $6$ and $7$ is 
higher than $6$. Still, we see reducing the $\ell_1$ norm as a natural way to hide the total amount of disparity that is attributable to 
the sensitive features. Another important methodological change is the way we select the optimal hyper-parameter $\lambda$ in Algorithm 
\ref{alg:methodology}. Now at line 12, we use the $\ell_1$ norm 
$\sum_{s\in \mathcal{S}}|\sum_{\binstance{j}\in D_1} \, \omega_j\, \widehat{\Phi}_s(f, S'_0, \binstance{j})|$ as a selection criterion.

Figures \ref{fig:multi_s_res_rf} and \ref{fig:multi_s_res_xgb} present preliminary results of
attacks on three RFs/XGBs fitted on Adults with different train/test splits. We note that in all cases, before the attack, the three
sensitive features had large negative attributions. By applying our method, we can considerably reduce the amplitude of the two 
sensitive attributes. The attribution of the remaining sensitive feature remains approximately constant or slightly becomes more negative. 
We leave it as future work to run large-scale experiments with multiple sensitive features for various datasets.

\begin{figure}[t]
     \centering
     \begin{subfigure}[c]{0.45\textwidth}
         \centering
         \includegraphics[width=\textwidth]
         {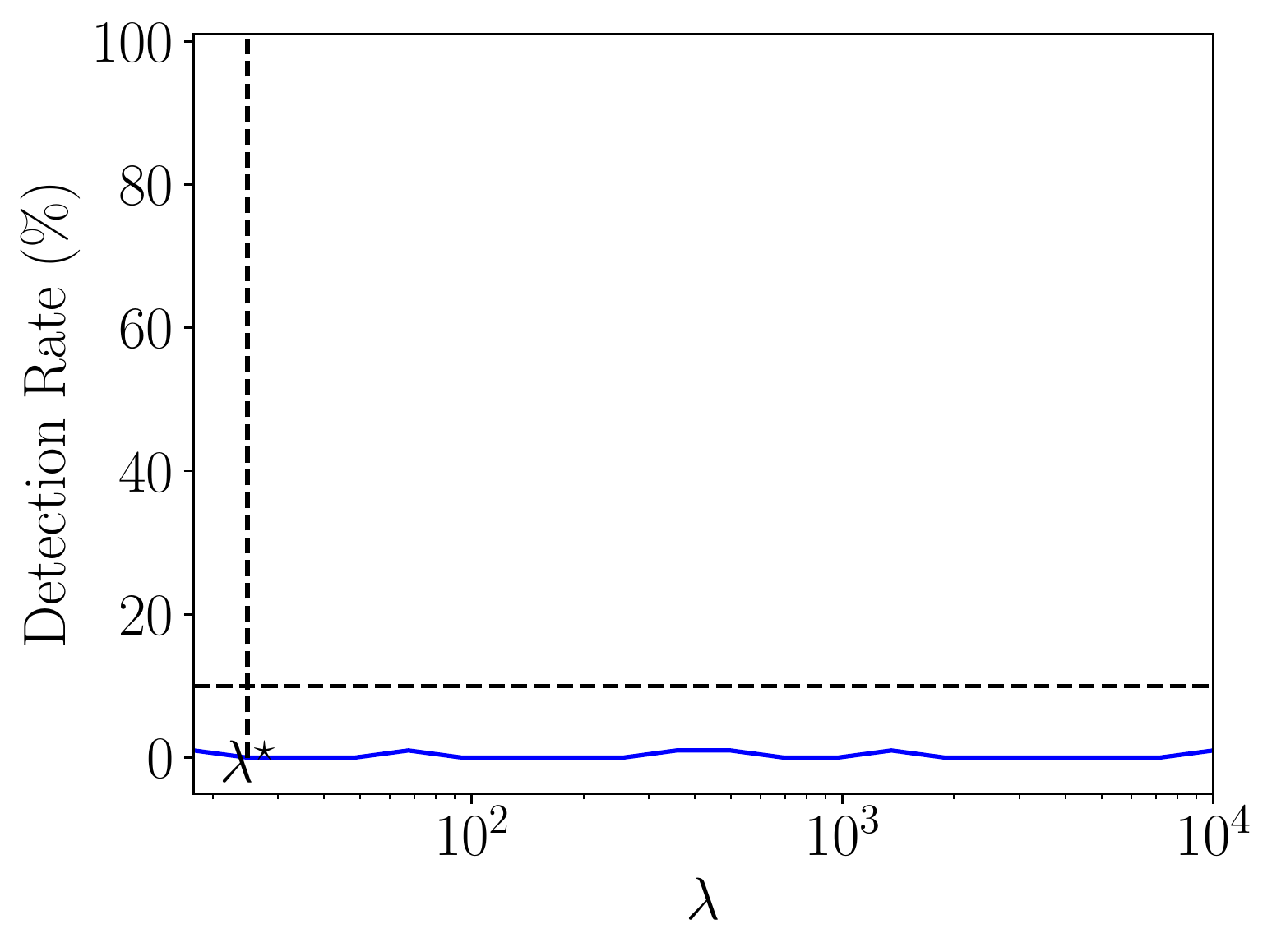}
     \end{subfigure}
    \hfill
    \raisebox{-1mm}{
    \begin{subfigure}[c]{0.5\textwidth}
         \centering
         \includegraphics[width=\textwidth]
         {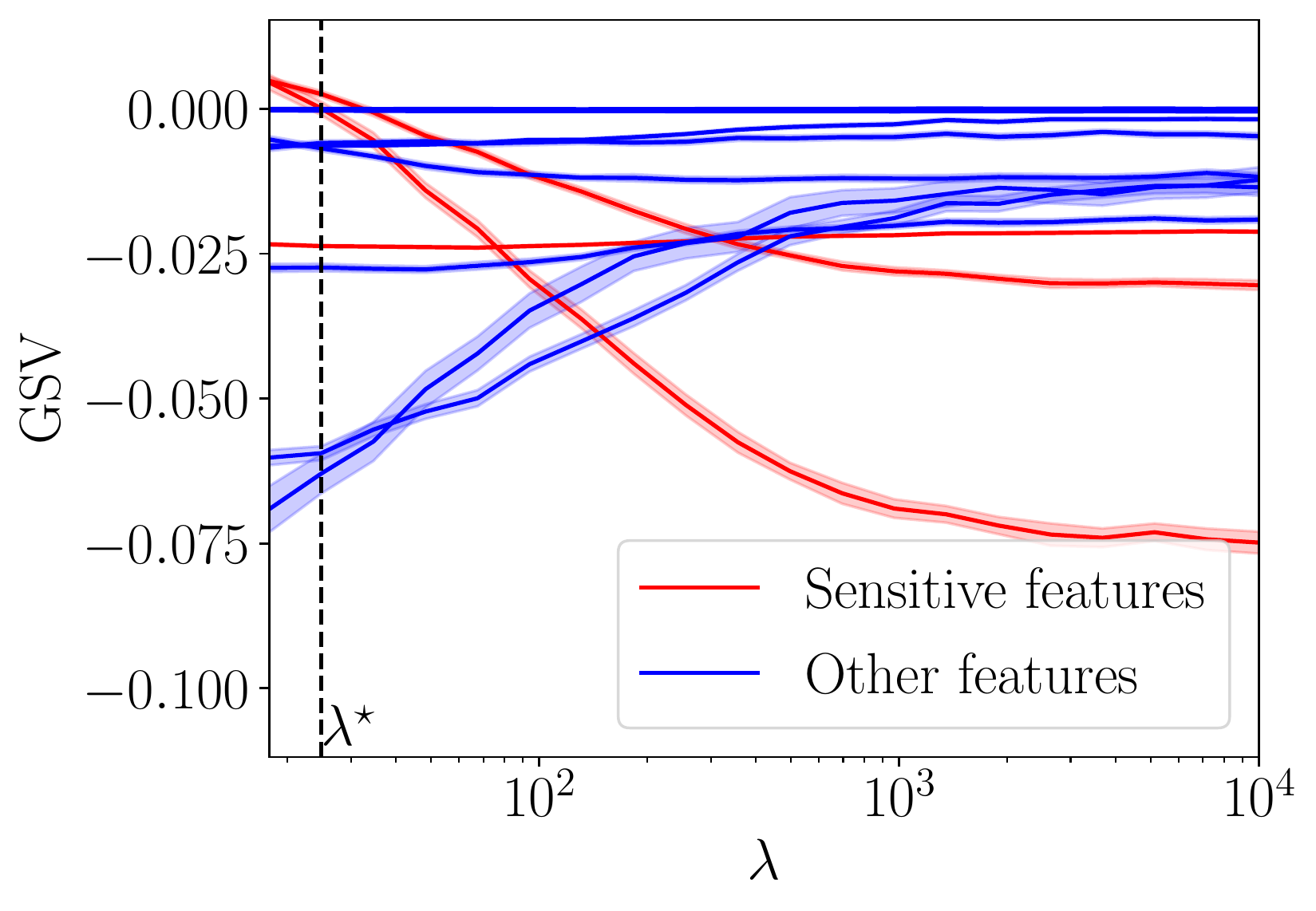}
     \end{subfigure}
    }
     \begin{subfigure}[c]{0.45\textwidth}
         \centering
         \includegraphics[width=\textwidth]
         {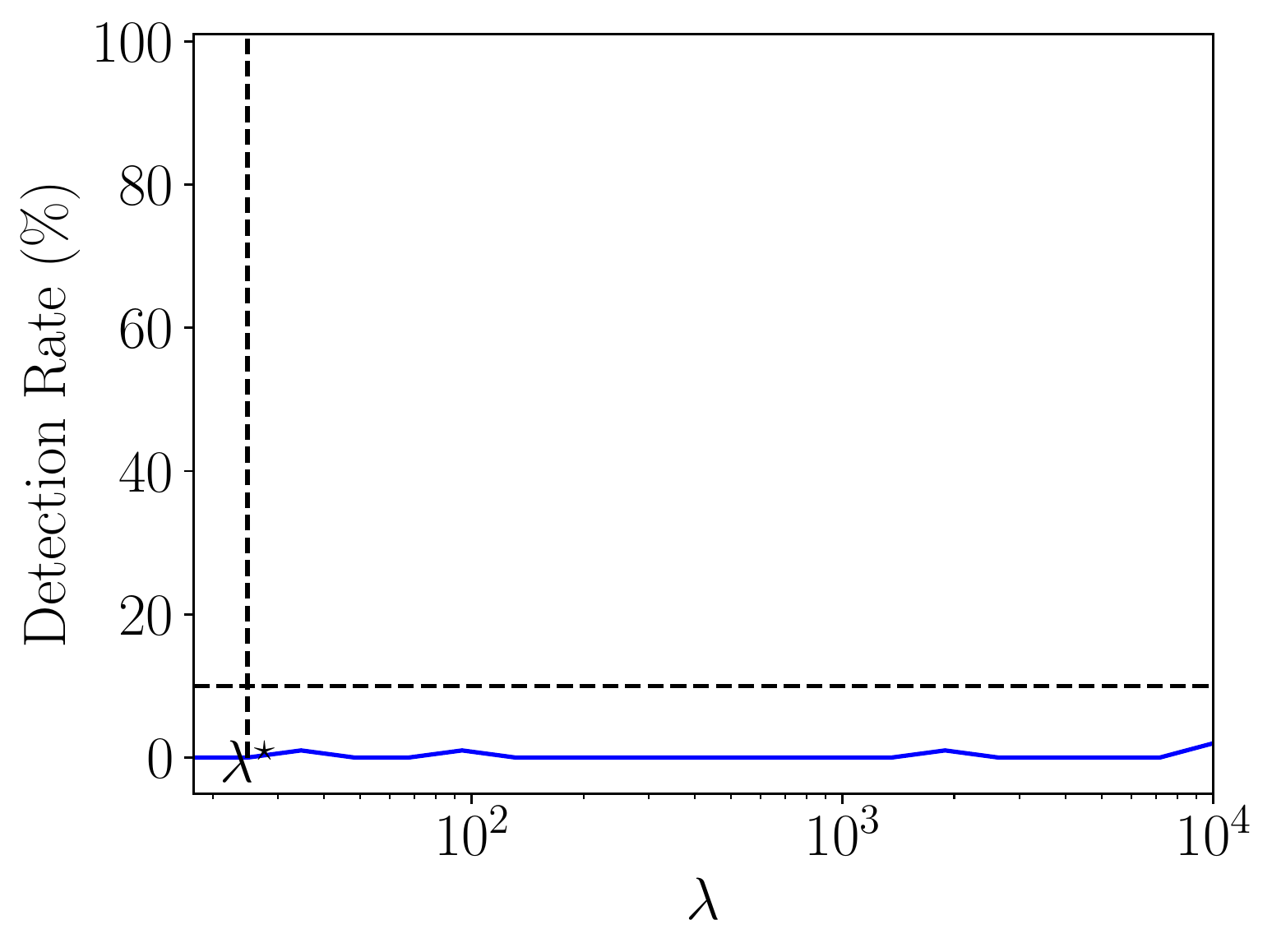}
     \end{subfigure}
    \hfill
    \raisebox{-1mm}{
    \begin{subfigure}[c]{0.5\textwidth}
         \centering
         \includegraphics[width=\textwidth]
        {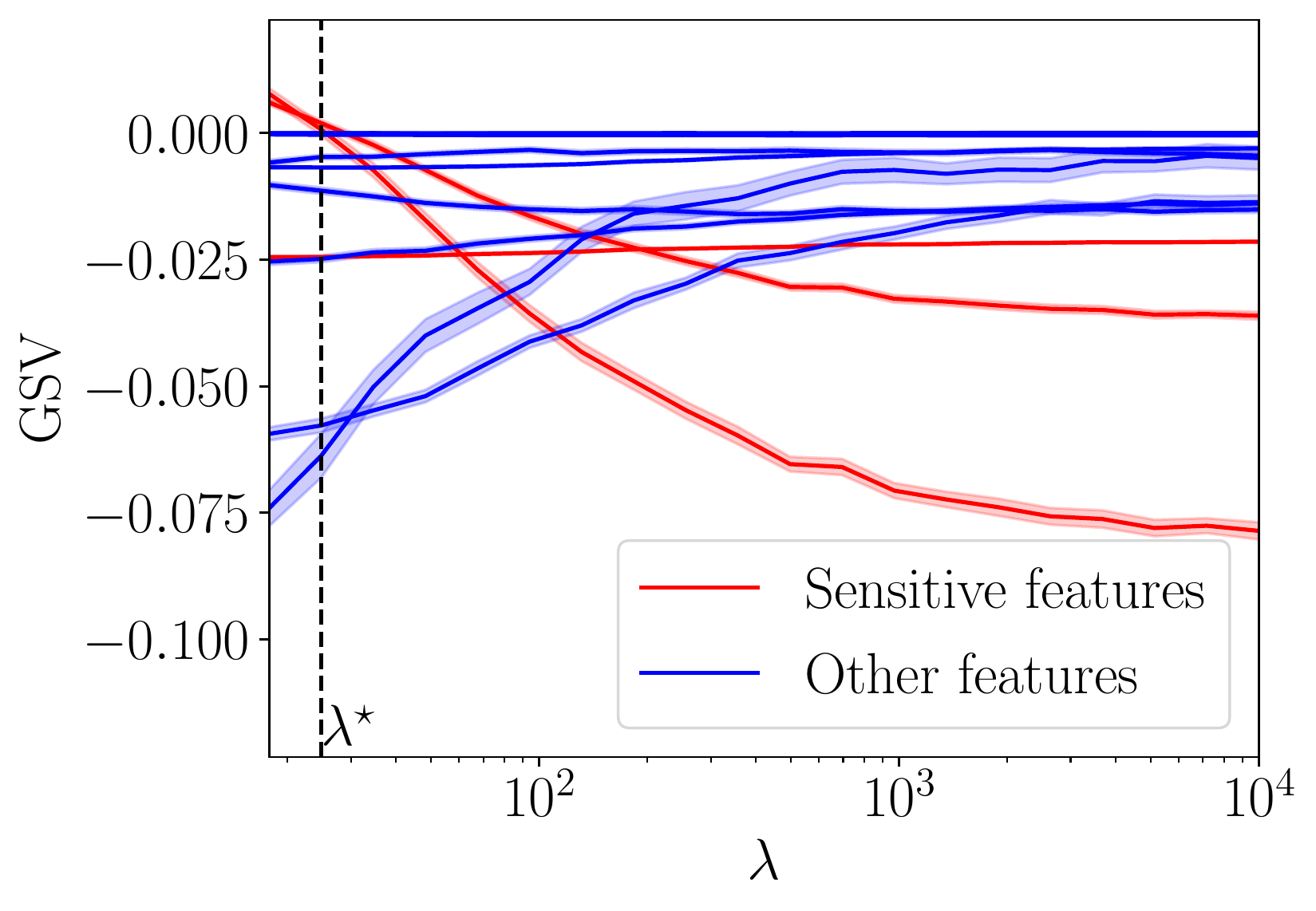}
     \end{subfigure}
    }
     \begin{subfigure}[c]{0.45\textwidth}
         \centering
         \includegraphics[width=\textwidth]
         {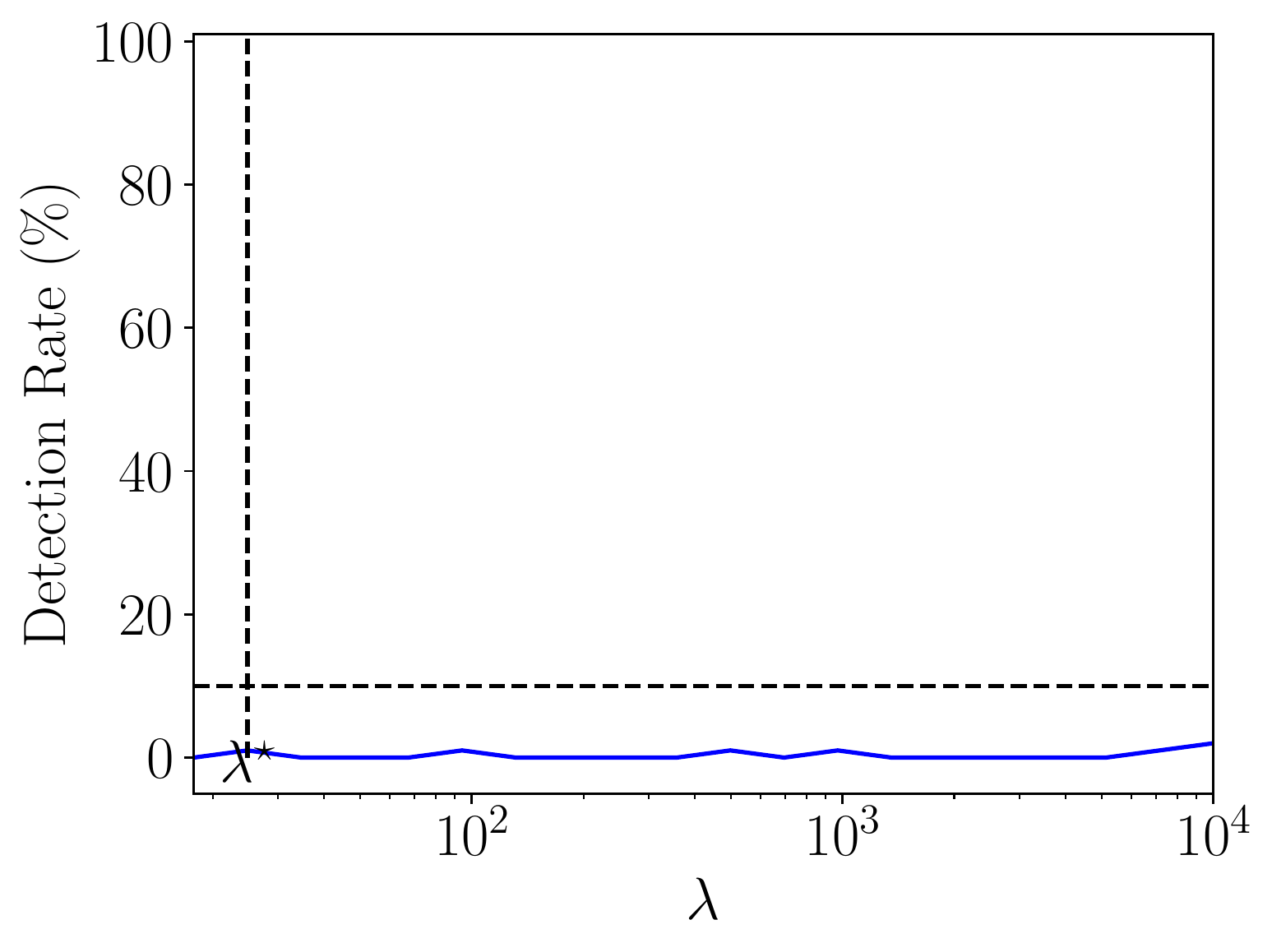}
     \end{subfigure}
    \hfill
    \raisebox{-1mm}{
    \begin{subfigure}[c]{0.5\textwidth}
         \centering
         \includegraphics[width=\textwidth]
         {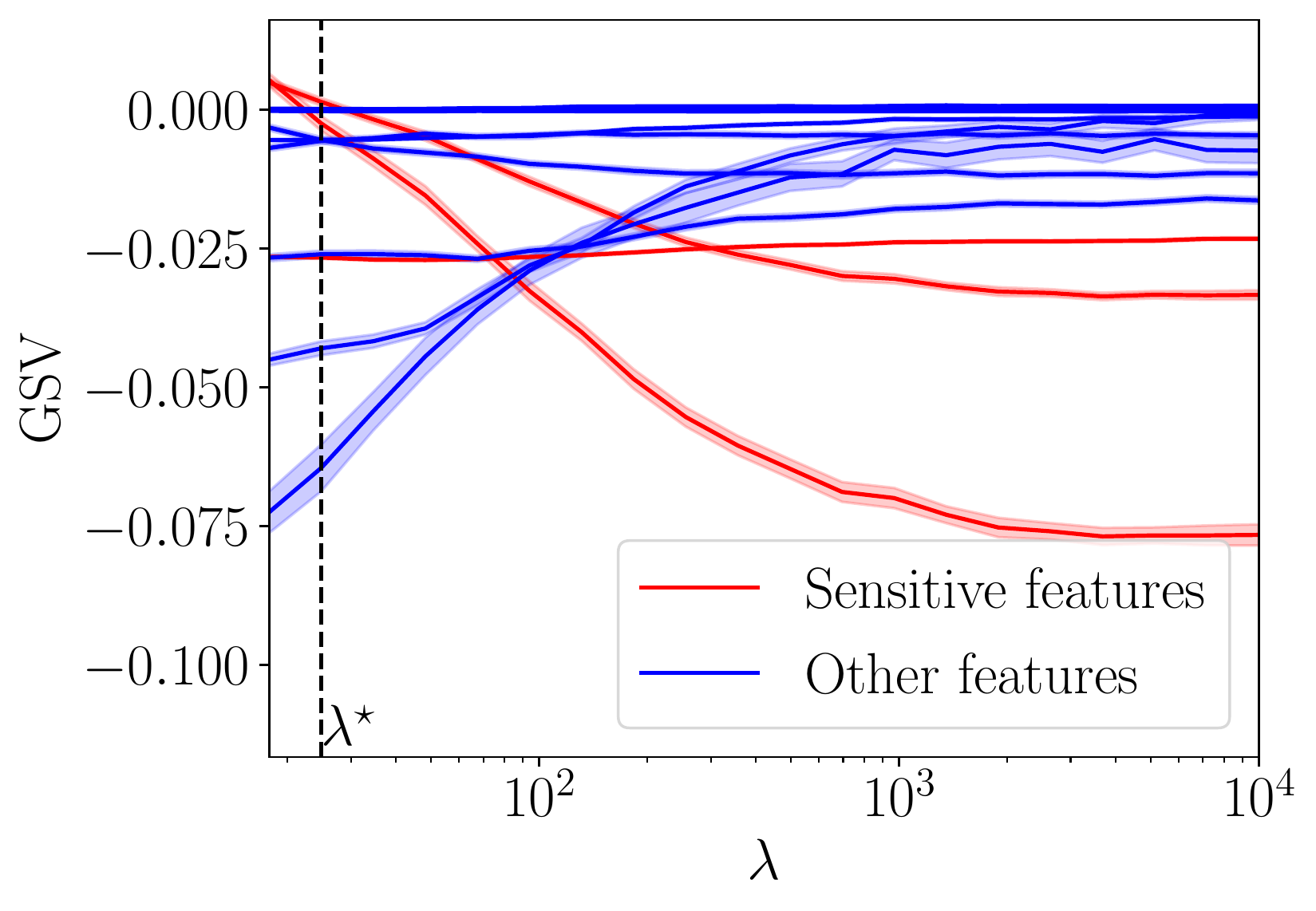}
     \end{subfigure}
    }
    \caption{Example of log-space search over values of $\lambda$ 
    using RFs classifier fitted on Adults and three
    sensitive attributes. Each row is a different
    train/test split seed.
    (Left) The detection rate as a function of the parameter $\lambda$
    of the attack. (Right) For each value of $\lambda$, the vertical 
    slice of the 11 curves is the GSV obtained with the 
    resulting $\background'_{\bm{\omega}}$. The goal here is to reduce 
    the amplitude all sensitive features (red curves) 
    in order to hide their contribution to the disparity in model outcomes.}
    \label{fig:multi_s_res_rf}
    \vskip -0.2in
\end{figure}

\begin{figure}[t]
     \centering
     \begin{subfigure}[c]{0.45\textwidth}
         \centering
         \includegraphics[width=\textwidth]
         {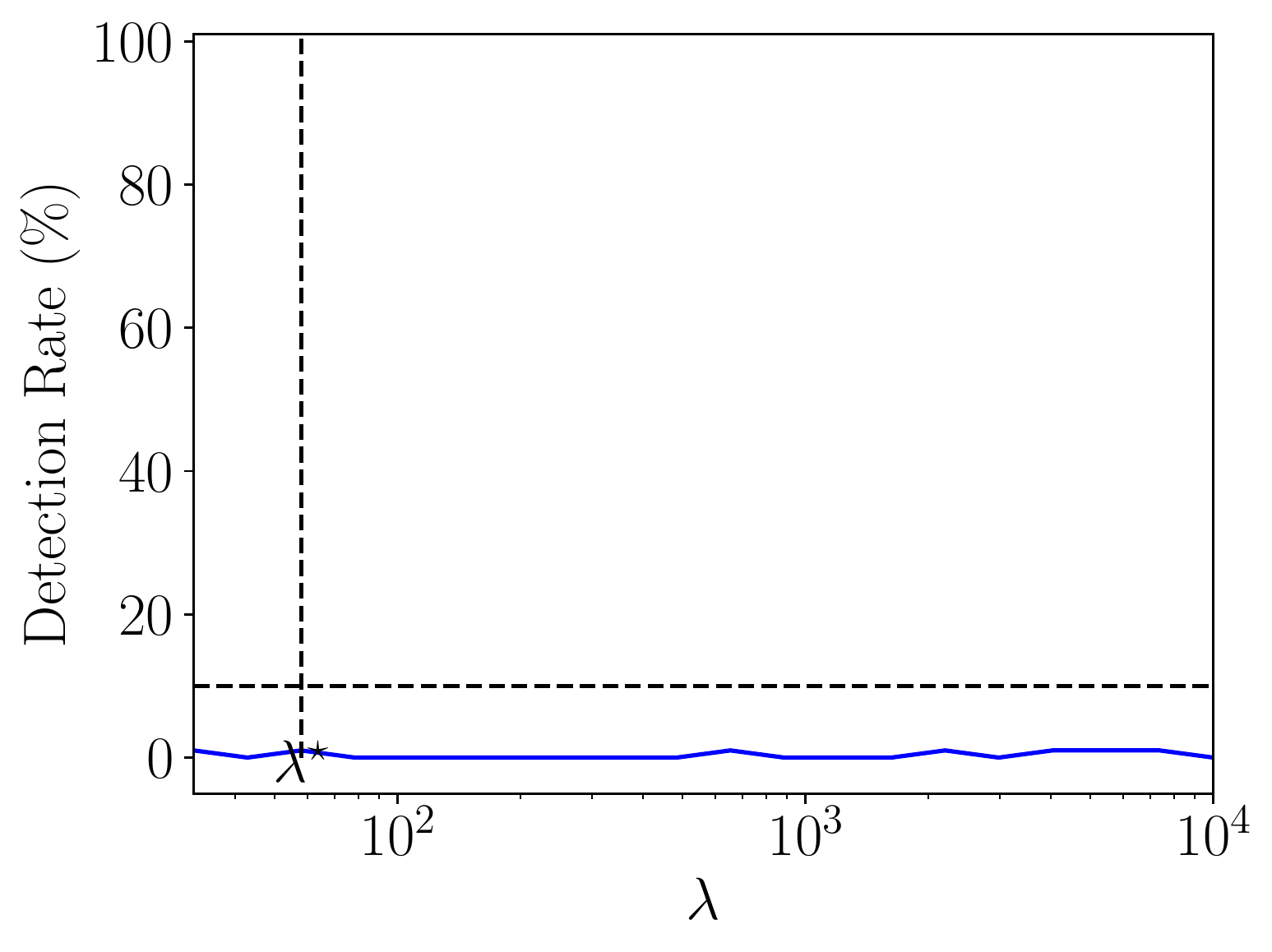}
     \end{subfigure}
    \hfill
    \raisebox{-1mm}{
    \begin{subfigure}[c]{0.5\textwidth}
         \centering
         \includegraphics[width=\textwidth]
         {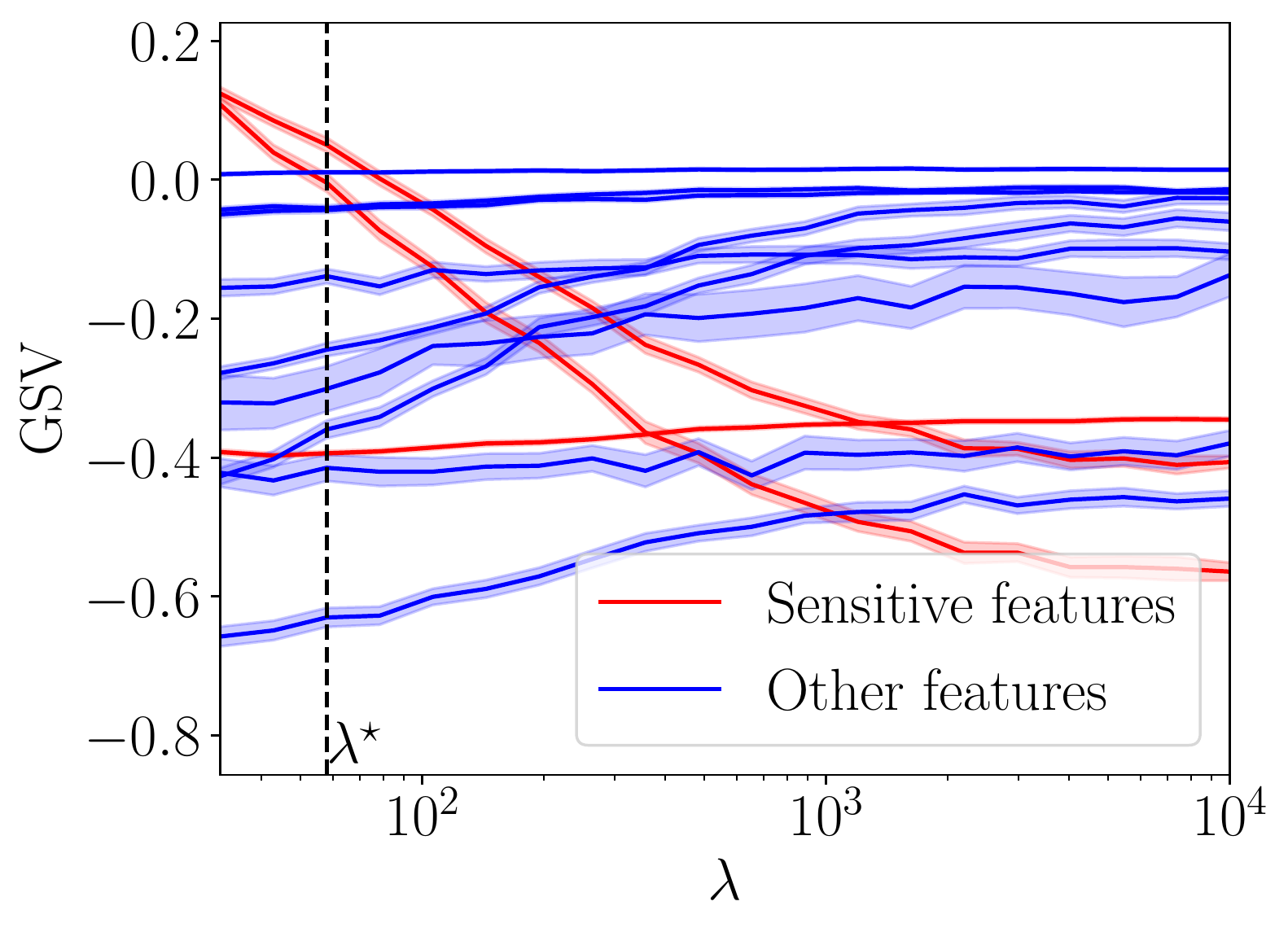}
     \end{subfigure}
    }
     \begin{subfigure}[c]{0.45\textwidth}
         \centering
         \includegraphics[width=\textwidth]
         {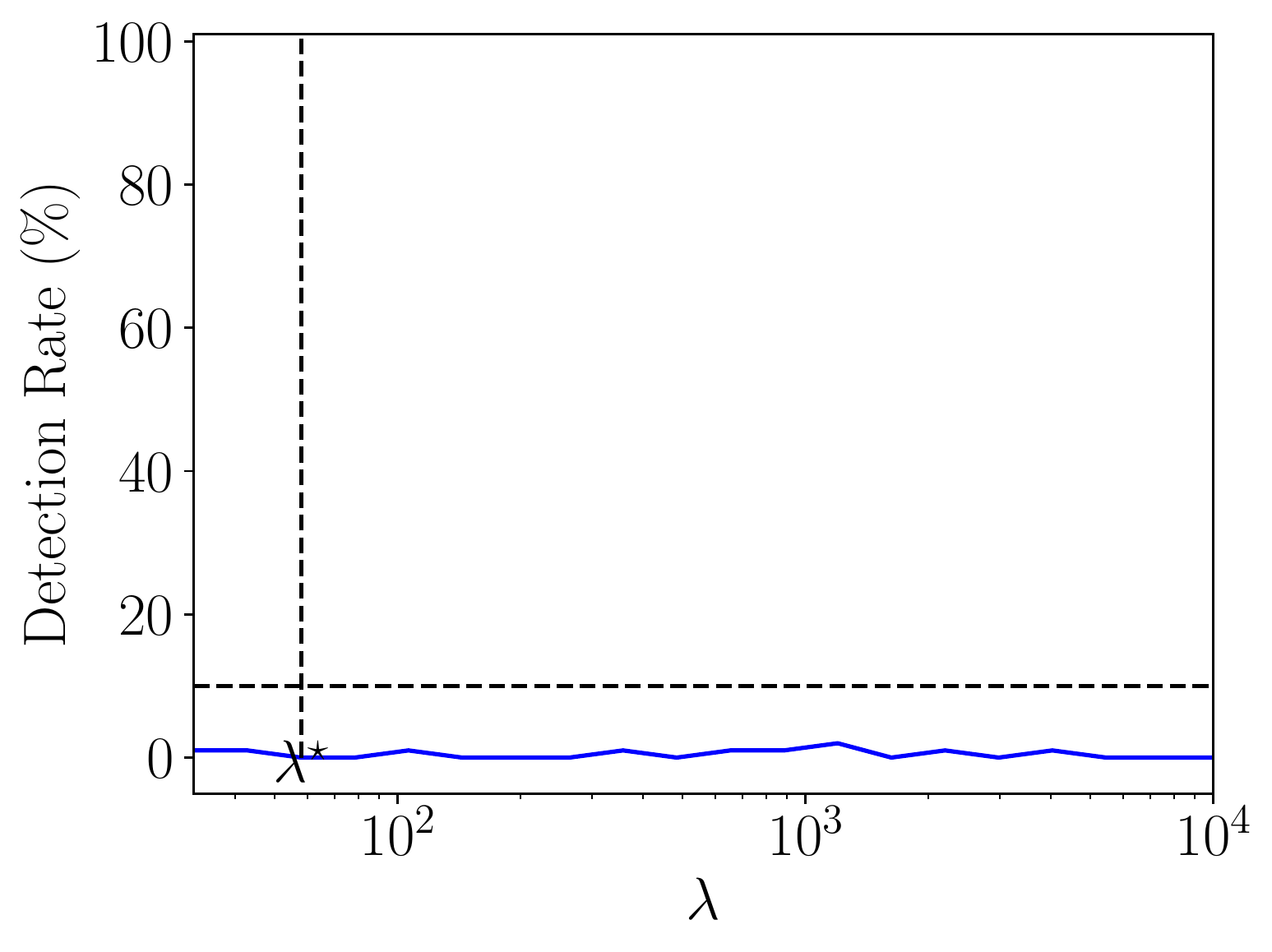}
     \end{subfigure}
    \hfill
    \raisebox{-1mm}{
    \begin{subfigure}[c]{0.5\textwidth}
         \centering
         \includegraphics[width=\textwidth]
        {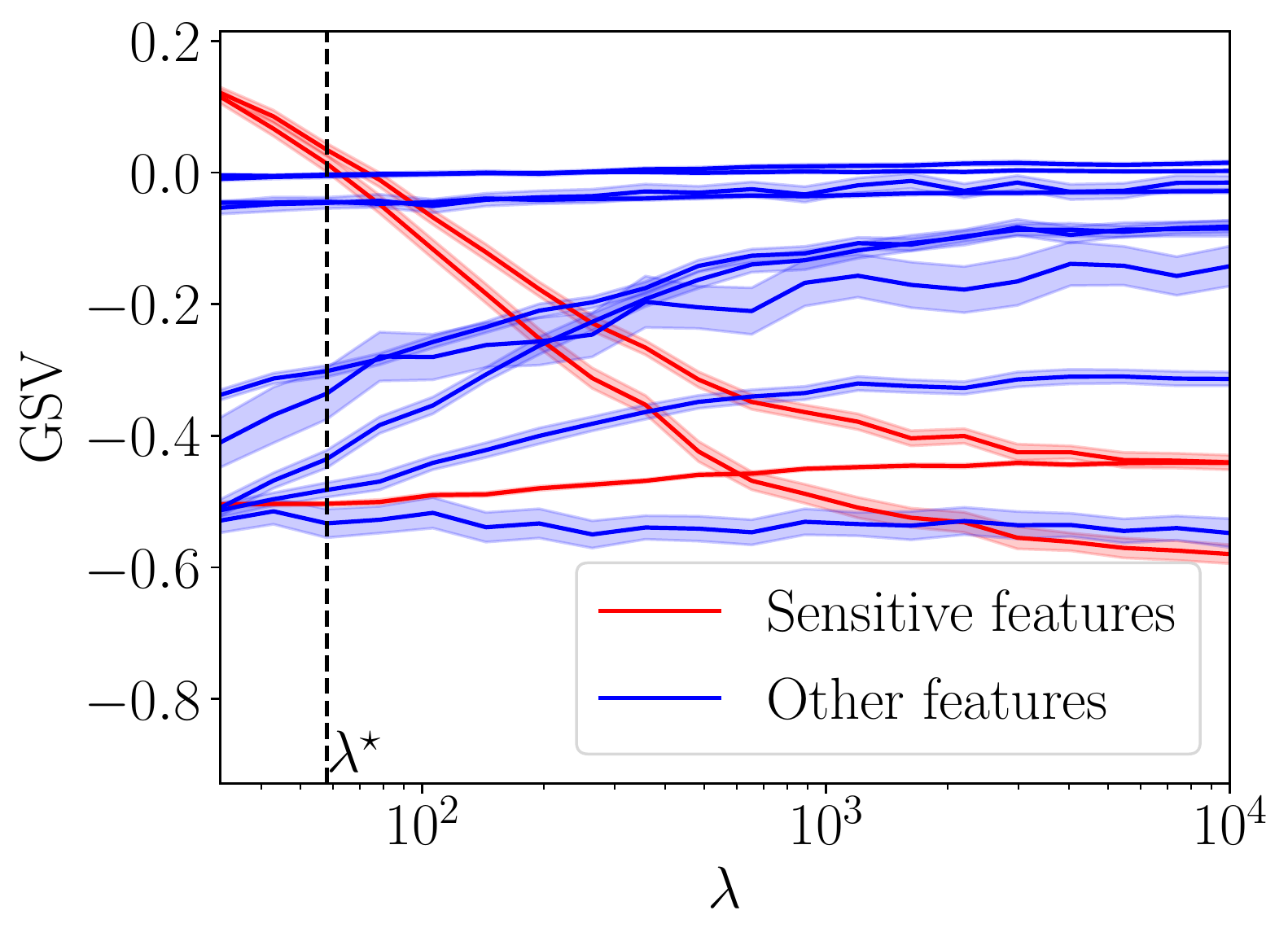}
     \end{subfigure}
    }
     \begin{subfigure}[c]{0.45\textwidth}
         \centering
         \includegraphics[width=\textwidth]
         {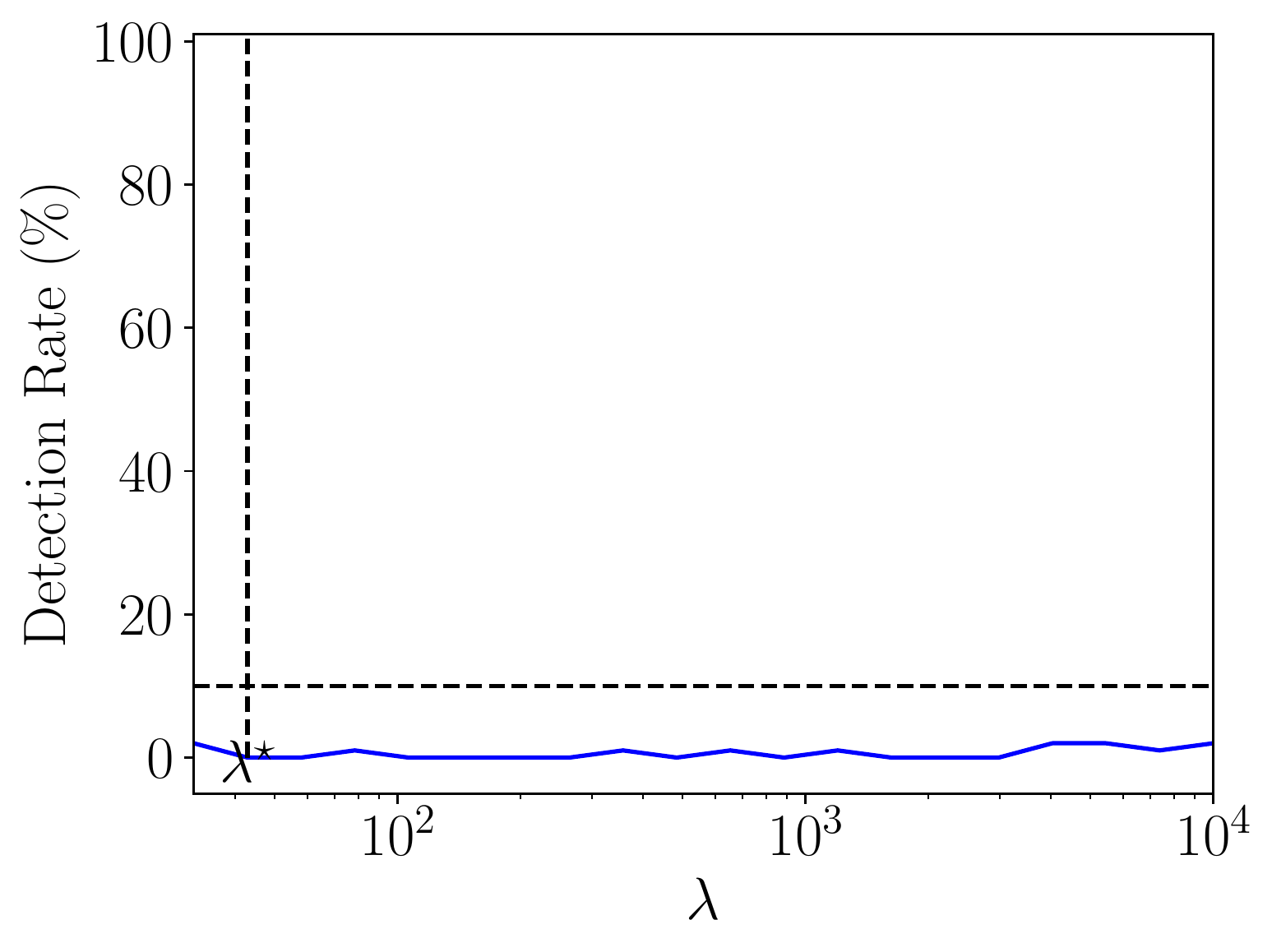}
     \end{subfigure}
    \hfill
    \raisebox{-1mm}{
    \begin{subfigure}[c]{0.5\textwidth}
         \centering
         \includegraphics[width=\textwidth]
         {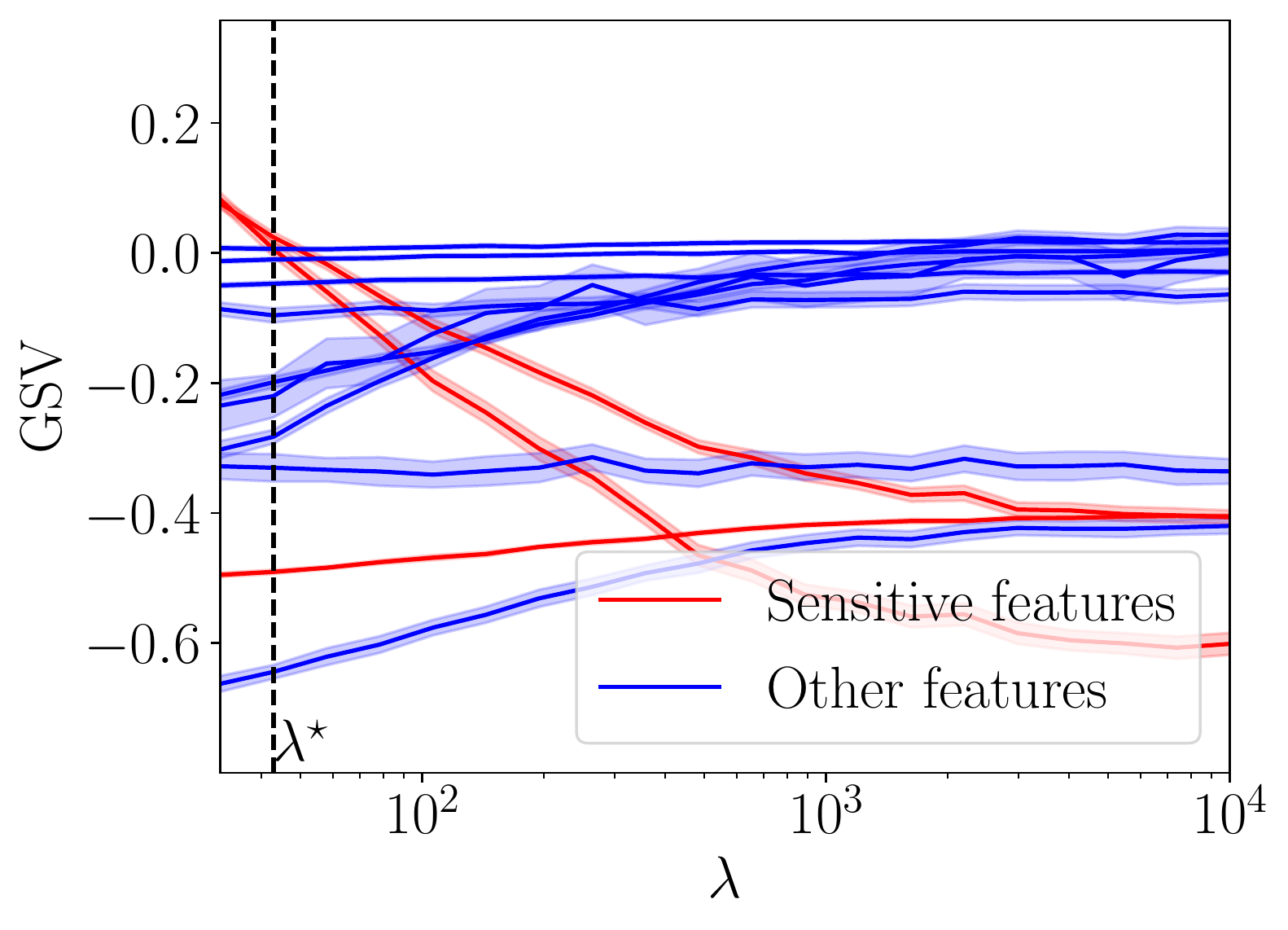}
     \end{subfigure}
    }
    \caption{Example of log-space search over values of $\lambda$ 
    using XGBs classifier fitted on Adults and three
    sensitive attributes. Each row is a different
    train/test split seed.
    (Left) The detection rate as a function of the parameter $\lambda$
    of the attack. (Right) For each value of $\lambda$, the vertical 
    slice of the 11 curves is the GSV obtained with the 
    resulting $\background'_{\bm{\omega}}$. The goal here is to reduce 
    the amplitude all sensitive features (red curves) 
    in order to hide their contribution to the disparity in model outcomes.}
    \label{fig:multi_s_res_xgb}
    \vskip -0.2in
\end{figure}

\end{document}